\newtheorem{lemma}{Lemma}
\newtheorem{corollary}{Corollary}
\newtheorem{theorem}{Theorem}
\newtheorem{proposition}{Proposition}
\newtheorem{claim}{Claim}
\def\be{\begin{equation}}
\def\ee{\end{equation}}
\def\beas{\begin{eqnarray*}}
	\def\eeas{\end{eqnarray*}}
\def\bea{\begin{eqnarray}}
\def\eea{\end{eqnarray}}
\newcommand{\inprod}[2]  {\left\langle{#1},{#2}\right\rangle}
\newcommand*{\SQUEEZE}{}
\newcommand{\x}{{\mathbf x}}
\newcommand{\y}{{\mathbf y}}
\newcommand{\uu}{{\mathbf u}}
\newcommand{\vv}{{\mathbf v}}
\newcommand{\aaa}{{\mathbf a}}
\newcommand{\bb}{{\mathbf b}}
\newcommand{\0}{{\mathbf 0}}
\newcommand{\A}{{\mathcal A}}
\newcommand{\M}{{\mathcal M}}
\newcommand{\C}{{\mathbb C}}
\newcommand{\R}{{\mathbb R}}
\newcommand{\N}{{\mathbb N}}
\newcommand{\nocontentsline}[3]{}
\newcommand{\tocless}[2]{\bgroup\let\addcontentsline=\nocontentsline#1{#2}\egroup}
\newcommand{\abs}[1]{\left\lvert#1 \right\rvert}
\newcommand{\norm}[1]{\left\|#1 \right\|}
\newcommand{\cupdot}{\mathbin{\mathaccent\cdot\cup}}
\newcommand{\mat}[1]{\llbracket#1\rrbracket}
\newcommand{\sep}[2]{\mathrm{sep}_{(#1)}\left( #2 \right)}
\newcommand{\rank}[1]{\mathrm{rank}\left( #1 \right)}
\def\multiset#1#2{\ensuremath{\left(\kern-.3em\left(\genfrac{}{}{0pt}{}{#1}{#2}\right)\kern-.3em\right)}}
\newcommand{\q}{{\mathbf q}}
\newcommand{\eg}{\emph{e.g.}}
\newcommand{\ie}{\emph{i.e.}}
\newcommand{\wrt}{w.r.t.}
\title{The Depth-to-Width Interplay in Self-Attention}
\author{Yoav Levine, Noam Wies, Or Sharir, Hofit Bata, and Amnon Shashua\\
	The Hebrew University of Jerusalem\\
}
\begin{document}
	
	\maketitle
	\ifdefined\SQUEEZE \vspace{-4mm} \fi
	\begin{abstract}
		
		Self-attention architectures, which are rapidly pushing the frontier in natural language processing, demonstrate a surprising depth-\textbf{in}efficient behavior:
		previous works indicate that increasing the internal representation (network width) is just as useful as increasing the number of self-attention layers (network depth). 
		We theoretically predict a width-dependent transition between depth-efficiency and depth-\textbf{in}efficiency in self-attention. We conduct systematic empirical ablations on networks of depths $6$ to $48$ that clearly reveal the theoretically predicted behaviors, and provide explicit quantitative suggestions regarding the optimal depth-to-width allocation for a given self-attention network size.
		The race towards beyond 1-Trillion parameter language models renders informed guidelines for increasing self-attention depth and width in tandem an essential ingredient. 
		Our guidelines elucidate the depth-to-width trade-off in self-attention networks of sizes up to the scale of GPT3 (which we project to be too deep for its size), and beyond, marking an unprecedented width of 30K as optimal for a 1-Trillion parameter network.
	\end{abstract}
	
	\ifdefined\SQUEEZE \vspace{-4mm} \fi
	\tocless\section{Introduction \label{sec:intro}}
	\ifdefined\SQUEEZE \vspace{-2mm} \fi
	The golden age of deep learning has popularized the depth-efficiency notion: From an expressiveness standpoint, increasing a neural network's size by adding more layers (deepening) is advantageous relatively to other parameter increase alternatives, such as increasing the dimension of the internal representation (widening). 
	Beyond overwhelming empirical signals for this notion \citep{simonyan2014very,he2016deep}, depth-efficiency was theoretically supported from a variety of angles  \citep{cohen2016expressive,eldan2016power,raghu2017expressive,daniely2017depth}.
	
	Diminishing returns in the case of very deep networks were mainly attributed to optimization issues, and indeed the alleviation of these issues has allowed network depths to mount from $10$s to $100$s and beyond \citep{he2016deep}, enabling deep convolutional networks (ConvNets) to advance the state-of-the-art in computer vision applications.  
	However, as the field matured, a more nuanced perspective emerged. Empirical \citep{zagoruyko2016wide,wu2019wider} and theoretical \citep{lu2017expressive} studies suggest that the interplay between depth and width may be more subtle. Recently, a method for increasing width and depth in tandem (``EfficientNet" by  \citet{tan2019efficientnet}) has lead to the state-of-the-art on ImageNet while using a ConvNet with a fraction of the parameters used by previous leaders. Our work provides principled guidelines for increasing width and depth in tandem in self-attention networks.
	
	\begin{figure}
		\centering
		\includegraphics[width=\linewidth]{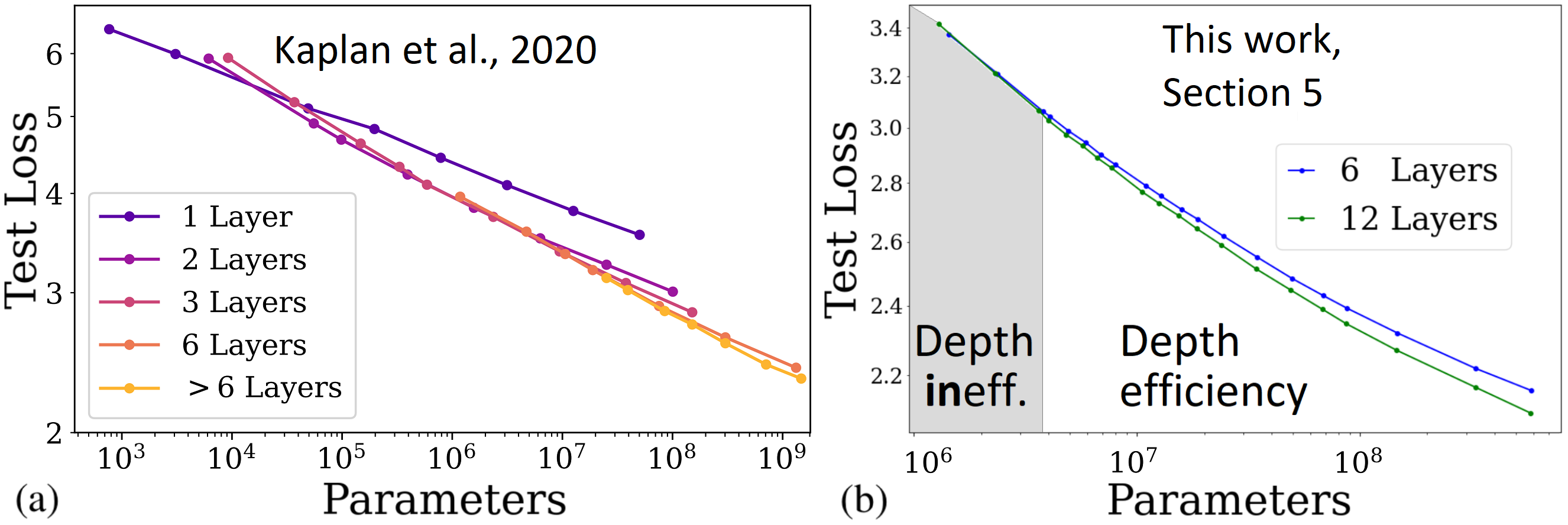}
		\vspace{-2mm} 
		\caption{
			\textbf{(a)} An ablation taken from figure~6 in \cite{kaplan2020scaling}, examining the perplexity scores of self-attention networks of varying depths and widths.
			 Experiments on the $L>6$ curve (yellow) all approximately obey the same improvement trend which depends only on the number of network parameters and not on the depth-to-width ratio. For $L\leq6$, depth-efficiency is clearly demonstrated, but due to the $L>6$ curve the authors conclude ``depth {\textbf{in}}efficiency" of self attention.
			\textbf{(b)} A representative of our experimental plots, which shows that a transition between depth-efficiency and inefficiency takes place, and that both regimes affect the behavior also at $L>6$.  Figure~\ref{fig:our_graph} shows that this trend continues at least up to depth $48$, and figure~\ref{fig:exponential} shows that the transition between regimes grows exponentially with depth, as predicted by our theory.    
		}
		\label{fig:no_depth_efficiency}
		\ifdefined\SQUEEZE \vspace{-4mm} \fi
	\end{figure}

	Since the introduction of the Transformer \citep{Transformer}, along with its encoder-only variant, BERT \citep{BERT}, self-attention based deep learning architectures have taken over the field of natural language processing~\citep{RoBERTa,GPT2,XLNet,T5,ELECTRA}. 
	However, in contrast to the depth ``arms race" that took place in the ConvNet case, the leading self-attention networks are not much deeper than the original BERT model. 
	In fact, even the strongest self-attention models trained to date, which increased the $0.3$B parameter count of BERT-large by factors of $100$s to $11$B~\citep{T5} and $175$B~\citep{GPT3}, have only increased its depth by factors of $2$ and $4$, respectively. 
	The remaining size increase stems from an increase in layer widths, clearly countering the depth-efficiency notion. 
	
	A recent empirical ablation study by \citet{kaplan2020scaling} provides support for the above signal. 
	Figure~\hyperref[fig:no_depth_efficiency]{~\ref{fig:no_depth_efficiency}(a)}, taken from this study, leads the authors to conclude that the overall (non-embedding) network size, given by $12\cdot L\cdot d_x^2$ where $L$ is the number of self-attention layers (network depth) and $d_x$ is the hidden representation dimension (network width), is the main predictor of performance regardless of the depth-to-width ratio. 
	This suggests that depth may not play as crucial a role in self-attention networks as it does in convolutional networks. 
	
	In this paper, we address the above question of the depth-to-width interplay in self-attention networks, and reveal fundamental subtleties in the above picture. 
	We predict that self-attention will exhibit two qualitatively different depth-efficiency and depth-\textbf{in}efficiency behaviors,  in two distinct parameter regimes, as depicted in figure~\hyperref[fig:no_depth_efficiency]{~\ref{fig:no_depth_efficiency}(b)}.
	After presenting our theoretical analysis in sections~\ref{sec:sa}-\ref{sec:results}, we provide a thorough empirical evaluation in section~\ref{sec:exp}, which validates our predicted trends for self-attention networks of depths $6$ to $48$. 
	Importantly, our theoretical and empirical results provide quantitative guidelines for optimal depth-to-width parameter allocation given a fixed parameter budget (see for example table~\ref{table:tab}). 
	The current challenge of reaching beyond $1$-Trillion parameter language models renders informed guidelines of how to increase self-attention depth and width in tandem a mandatory ingredient.
	Our results clearly show that the optimal path towards the $1$-Trillion parameter mark includes massive widening. 
	
	\tocless\subsection{An overview of our theoretical approach and findings}
	
	We analyze self-attention networks in which all non-linear activations and normalization operations are removed.
	Otherwise, the analyzed class (presented in section~\ref{sec:sa}) has the regular deep multi-headed Key/Query/Value structure of common self-attention.
	After presenting this class in detail, we point to recent studies which demonstrate that normalization and position-wise activations are much less pertinent to the ability of self-attention to correlate inputs than its core connectivity, described in full by our analyzed model. 
	More generally, removing non-linearities for analysis of deep network connectivity traits is a common simplifying assumption: results on expressiveness and optimization of fully-connected~\citep{saxe2013exact,kawaguchi2016deep,hardt2016identity}, convolutional~\citep{cohen2016expressive}, and recurrent~\citep{khrulkov2018expressive,levine2018benefits} networks have been attained via this technique. 
		Trade-offs between the depth and width of fully-connected neural networks have been recently examined from theoretical~\citep{fan2020quasi,bu2020depth} and empirical~\citep{nguyen2020wide} perspectives. 
	To the best of our knowledge, our theoretical analysis is the first to address the question of parameter allocation between depth and width in self-attention networks.

	\begin{table}

		\begin{center}
			
			\begin{tabular}{l c| c c | c c }
				\toprule
				\multicolumn{2}{c}{{\textit{Borrowed from~\citet{GPT3}}}} &\multicolumn{2}{c}{{\textit{\textbf{Trained in practice}}}} &\multicolumn{2}{c}{{\textit{\textbf{Optimal by our fit}}}} \\
				Model Name & Size in params& Depth ($L$) & Width ($d_x$)& Depth ($L$) & Width ($d_x$)  \\
				\midrule
				GPT-3 Small & 125M & 12 & 768 & 23 &555 \\                              
				GPT-3 Medium & 350M & 24 & 1024 & 32 &886 \\                             
				GPT-3 Large & 760M & 24 & 1536 & 38 & 1220 \\                             
				GPT-3 XL & 1.3B & 24 & 2048 & 42 & 1550 \\                            
				GPT-3 2.7B   & 2.7B & 32 & 2560 & 47 & 2110 \\                             
				GPT-3 6.7B   & 6.7B & 32 & 4096 & 54 & 3150 \\                            
				GPT-3 13B  & 13.0B & 40 & 5140 & 60 & 4200 \\                           
				GPT-3 175B or ``GPT-3'' & 175.0B & 96 & 12288 & 80 & 13500 \\
				\midrule
				Optimal $1$-Trillion arch& 1T&--&--&95&30100\\
				\bottomrule
			\end{tabular}
		\end{center}
		\caption{Our projections regarding optimal depth-to-width parameter distribution at self-attention sizes corresponding to huge language models trained in~\citet{GPT3}, according to the fit in section~\ref{sec:exp} (see figure~\ref{fig:proj} for the statistical uncertainty in these predictions). 
			Up to the scale of~$10$B, the trained GPT3 architectures were generally too shallow per their parameter count, meaning that they under-performed relatively to the optimal architecture at that size (similarly to the depth $6$ network in the white regime of figure~\hyperref[fig:no_depth_efficiency]{~\ref{fig:no_depth_efficiency}(a)}.
			Coversely, the largest model trained to date, GPT3-175B, is too deep given its size, and could have benefited from widening at the expense of depth (similarly to the depth $48$ network in the gray regime of figure~\hyperref[fig:our_graph]{~\ref{fig:our_graph}(c)}. We project that the strongest $1$-Trillion parameter model would entail widening to an unprecedented width of $30$K.
			\label{table:tab}}
		\vspace{-3mm}
	\end{table}

	We employ the tool of a function's separation rank with respect to~subsets of its inputs, which quantifies its ability to model input dependencies (presented in section~\ref{sec:sep_rank}). The separation rank was employed for attaining theoretical insights on the dependencies modeled by convolutional and recurrent networks~\citep{cohen2017inductive,levine2018benefits}.
	
	\textbf{Rather than reinforcing the seemingly plausible hypothesis for the trend in figure~\hyperref[fig:no_depth_efficiency]{~\ref{fig:no_depth_efficiency}(a)}, by which widening a self-attention network is as effective as deepening it, we confirm the contrary. }
	We show that the operation of stacking self-attention layers is so effective that it quickly saturates a capacity of the network's width. 
	We establish in section~\ref{sec:results} the existence of a depth threshold which depends logarithmically on the width $d_x$, denoted $L_{\textrm{th}}(d_x)\sim \log(d_x)$. 
	Below the threshold, we prove that depth-efficiency takes place in self-attention networks: a network of depth $L\leq L_{\textrm{th}}(d_x)$ cannot be replicated by a shallower network, unless the latter's width grows double-exponentially with $L$. 
	We prove the above by showing that the separation rank of functions realized by self-attention networks grows double-exponentially with depth, but only polynomially with width, shedding light on the effectiveness of the self-attention mechanism in modeling input interactions when recursively repeated. 
	However, we show that this overwhelming advantage of depth is quickly replaced by a balanced growth. 
	We prove that for self-attention networks with $L> L_{\textrm{th}}(d_x)$ 
	the ability to model input dependencies, as modeled by the separation rank, increases similarly with depth and width.
	{We corroborate our theoretical findings empirically, as shown in the example of figure~\hyperref[fig:no_depth_efficiency]{~\ref{fig:no_depth_efficiency}(b)} and more extensively in section~\ref{sec:exp}}.
	
	\vspace{3mm}
	\tocless\subsection{An overview of our empirical approach and findings}
		\vspace{-2mm}
		
	For two networks with the same parameter count but of different depths $L_1<L_2$ and widths $d_2<d_1$, our theory indicates that:
	(1) there is no advantage to the deeper network when its dimension $d_2$ is too small (width caps the benefit of the added layers of depths $L_1+1,...,L_2$), but 
	(2) the deeper network should outperform the shallower one when its width $d_2$ is large enough such that the added layers are in the depth-efficiency regime.

	Traces of this predicted phenomenon appear in existing literature:
	A closer look at depths $L\leq 6$ in the experiment of \citet{kaplan2020scaling} in figure~\hyperref[fig:no_depth_efficiency]{~\ref{fig:no_depth_efficiency}(a)} reveals depth-efficiency, though the conclusion of that study was of an overall depth \textbf{in}efficiency of self-attention, based on the behavior of their $L>6$ curve.
	In section~\ref{sec:exp} we demonstrate empirically that the more nuanced transition between depth-efficiency and \textbf{in}efficiency, as predicted by our theory, affects commonly used self-attention depths of $L=6,12,18,24,30,36,48$ (a representative plot from our experiments is given in figure~\hyperref[fig:no_depth_efficiency]{~\ref{fig:no_depth_efficiency}(b)}). 
	The experiments reveal a third regime of ``width-efficiency": a network can be too deep for a given parameter budget, and under-perform relatively to a shallower and wider network of the same size (see figure~\ref{fig:our_graph}).   
	
	We fit the network sizes at which a transition between the different depth-efficiency regimes occur to an exponential form, predicted by our theory (see figure~\ref{fig:exponential}). This allows us to extrapolate the depth-efficiency behavior of larger architectures, and project practical guidelines for the architectural design of contemporary huge language models. Table~\ref{table:tab} shows our suggested depths and widths for models of sizes used in the recent GPT3 paper~\citep{GPT3}.     
	It seems that popular self-attention architectures at all sizes trained up to GPT3's crossing of the 100B parameter threshold, could generally benefit from deepening, with the appropriate widening (indicated by our guidelines).  
	With that, \textbf{our results clearly indicate the importance of widening self-attention networks when aiming for the 1 Trillion parameter mark}. We project the optimal architecture at that size to have depth 95 and width 30K, wider than any self-attention network trained to date.

	\ifdefined\SQUEEZE \vspace{-0mm} \fi
	\tocless\section{The self-attention mechanism \label{sec:sa}}
	\ifdefined\SQUEEZE \vspace{-0mm} \fi
	Differentiable attention models in which the output attends over all LSTM-based input representations have been introduced in the context of machine translation~\citep{bahdanau2014neural}. Self-attention (also referred to as intra-attention), which relates different inputs to each other, was first employed for machine reading~\citep{cheng2016long}, and soon thereafter shown to be useful for a variety of language applications when operating over LSTM-based representations~\citep{parikh2016decomposable,paulus2017deep,lin2017structured}.
	\cite{Transformer} were the first to demonstrate that a model based solely on attention, the Transformer, can be better than LSTM based networks. 
	The Transformer's encoder, BERT~\citep{BERT}, based entirely on self-attention, has demonstrated unprecedented performance across natural language understanding tasks.

	\ifdefined\SQUEEZE \vspace{-1mm} \fi
	\tocless\subsection{The Transformer encoder architecture \label{sec:sa:bert}}
	\ifdefined\SQUEEZE \vspace{-1mm} \fi
	We begin by describing the self-attention operation of the original Transformer, and then in the next subsection we present the modifications made in our analyzed model. 
	Each layer $l\in[L]:=\{1,...,L\}$ of a depth-$L$ Transformer encoder is comprised of two sub-layers. The $H$-headed self-attention sublayer of layer $l$ computes the following function at position $i\in[N]$, over its $N$ inputs $\{\x^{l,j}\in\R^{d_x}\}_{j=1}^N$:

	\ifdefined\SQUEEZE \vspace{-7mm} \fi
	\begin{align}\label{eq:bert_sa}
	\mathbf{f}_{\textrm{SA}}^{l,i}\left(\x^{l,1,},...,\x^{l,N}\right)=
	\sum_{j=1}^{N}\sum_{h=1}^H &SM_j\left\{\nicefrac{1}{\sqrt{d_a}}\left\langle W^{\textrm{Q},l,h} \x^{l,i},W^{\textrm{K},l,h} \x^{l,j}\right\rangle \right\}W^{\textrm{O},l,h}W^{\textrm{V},l,h}\x^{l,j}
	\end{align}
	\ifdefined\SQUEEZE \vspace{-3mm} \fi

	where $SM_j\left\{f(j)\right\}:=e^{f(j)}/\sum_{j'}e^{f(j')}$ is the softmax operation and $\forall h\in[H]$ the learned weights matrices $W^{\textrm{K},l,h},W^{\textrm{Q},l,h},W^{\textrm{V},l,h}\in\R^{d_a\times d_x}$ convert the representation from its dimension $d_x$ into the attention dimension $d_a=\nicefrac{d_x}{H}$, creating Key, Query, and Value representations, respectively. The learned weights matrix $W^{\textrm{O},l,h}\in\R^{d_x\times d_a}$ converts the attention result back into the representation dimension.
	The multi-headed self-attention sublayer output in eq.~\eqref{eq:bert_sa}, followed by a residual connection and layer-norm~\citep{ba2016layer}, is inserted into a position-wise feed-forward + ReLU sublayer, such that each layer's output at position $i\in[N]$ is:
	\begin{align}\label{eq:bert_layer}
	\mathbf{f}_{\textrm{Layer}}^{l,i}\left(\x^{l,1},...,\x^{l,N}\right)&=
	W^{\textrm{FF,2}}ReLU\left(W^{\textrm{FF,1}}LayerNorm\left(\mathbf{f}_{\textrm{SA}}^{l,i}+\x^{l,i}\right)\right),
	\end{align}
	where the feed-forward matrices are usually taken to be $W^{\textrm{FF,1}}\in\R^{4d_x\times d_x},W^{\textrm{FF,2}}\in\R^{d_x\times 4d_x}$, such that the parameter count for an entire layer is $12\cdot d^2_x$.
	Finally, the depth-$L$ multi-headed self-attention operation of the Transformer encoder is obtained by a composition of $L$ such layers, \ie, when setting $\forall l\in\{2,...,L\},j\in[N]:~\x^{l,j}=LayerNorm\left(\mathbf{f}_{\textrm{Layer}}^{l-1,j}\right)$, with $\x^{1,j}$ denoting the input to the deep self-attention network at position $j$.\footnote{Focusing on the self-attention operation, we omit a description of the input embedding matrix, as well as of the positional embeddings added at the input, which do not affect our analysis given realistic vocabulary sizes.}
	\ifdefined\SQUEEZE \vspace{0mm} \fi
	\tocless\subsection{The analyzed architecture \label{sec:sa:model}}
	\ifdefined\SQUEEZE \vspace{-1mm} \fi
	We analyze a deep multi-headed self-attention network variant which excludes the layer-norm operation, the softmax normalization, and the ReLU activation (see a thorough discussion on the effect of these relaxations in the next subsection).
	For cleanliness of presentation, we defer the analysis of the residual connection to the appendix (it bears insignificant impact on our bounds).
	Specifically, in the analyzed network, each layer $l\in[L]$ computes the following function at position $i\in[N]$ over its inputs $\{\x^{l,j}\in\R^{d_x}\}_{j=1}^N$:
	
	\ifdefined\SQUEEZE \vspace{-5mm} \fi
	\begin{align}\label{eq:our_layer}
	\y^{l,i}\left(\x^{l,1},...,\x^{l,N}\right)&=
	\sum_{j=1}^{N}\sum_{h=1}^H \left\langle W^{\textrm{Q},l,h} \x^{l,i},W^{\textrm{K},l,h} \x^{l,j}\right\rangle W^{\textrm{O},l,h} W^{\textrm{V},l,h} \x^{l,j},
	\end{align}
	\ifdefined\SQUEEZE \vspace{-3mm} \fi
	
	where the Feed-Forward matrices can be now effectively embedded within $W^{\textrm{O},l,h}$.  
	Our analysis below treats a deep multi-headed self-attention network that is attained by a concatenation of $L$ such layers.
	Importantly, the resultant ``linearized" network form, where activations and normalizations are removed, is by no means a linear mapping over the network input -- every layer integrates $3$ copies of its input in the above non-linear fashion.

	By recursively applying eq.~\eqref{eq:our_layer} $L$ times we attain the analyzed depth-$L$ self-attention network. We denote the function realized by a network with embedding dimension $d_x$ and $H$ attention heads per layer at output location $i\in[N]$ by: 
	\ifdefined\SQUEEZE \vspace{-2mm} \fi
	\begin{equation}
	\y^{i, L, d_x, H, \Theta}\left(\x^1,...,\x^N\right):=\sum_{j_1,...,j_C=1}^{N}\mathbf{g}^L\left(\x^i,\x^{j_1},...,\x^{j_C}\right),\label{eq:our_network}
	\end{equation}
	\ifdefined\SQUEEZE \vspace{-3mm} \fi
	
	where
	$\Theta$ denotes all $4LH$
	learned weight matrices: $\forall (l,h)\in[L]\otimes[{H}]:$$W^{\textrm{K},l,h},W^{\textrm{Q},l,h},W^{\textrm{V},l,h} \in\R^{d_a\times d_x},$ and $W^{\textrm{O},l,h} \in\R^{d_x\times d_a}$,
	and the function $\mathbf{g}^L$ is a placeholder, fully detailed in the appendix, which integrates $C=\frac{3^L-1}{2}$ different input vectors.
	Network connectivity implies that the number of summed position indices is also $C$. 
	Comparing the form of eq.~\eqref{eq:our_network} to the operation of a single layer in eq.~\eqref{eq:our_layer}, it can be seen schematically that while a single layer mixes the output position $i$ with every input position $j$ once and aggregates the result, depth brings forth an exponential enhancement to the amount of inputs mixed at once as well as to the amount of summed terms. 
	In section~\ref{sec:results}, we quantify this effect and analyze the limitations posed by the dimension of the internal representation (the width) on the network's ability to make use of this exponential growth with depth. 
	In the following subsection, we comment on the differences between the Transformer encoder architecture described in eqs.~\eqref{eq:bert_sa} and~\eqref{eq:bert_layer} and the self-attention architecture presented in eqs.~\eqref{eq:our_layer} and~\eqref{eq:our_network}.
	
	\ifdefined\SQUEEZE \vspace{0mm} \fi
	\tocless\subsection{Relaxations \label{sec:sa:relaxations}}
	\ifdefined\SQUEEZE \vspace{-1mm} \fi
	
	Empirical evidence indicates that while the ReLU activations and softmax normalization contribute to performance (layer-norm mainly contributes to optimization), the basic mechanism in eqs.~\eqref{eq:our_layer} and~\eqref{eq:our_network} above captures the defining self-attention characteristic of integrating the inputs with each other in a flexible manner:

	\emph{The ReLU activation relaxation}: \cite{press2019improving} demonstrate that a ``self-attention first" BERT variant that first performs all of the  self-attention operations (eq.~\eqref{eq:bert_sa}) consecutively, and only then performs all of the position-wise feed-forward+ReLU operations, achieves comparable language modeling performance relatively to the Baseline, which takes the regular approach of interleaving these functionalities (\ie, concatenating the BERT's layer described in~eq.~\eqref{eq:bert_layer}). They report that the interleaved Baseline achieves a perplexity score of $18.63\pm 0.26$ on the WikiText-103 test \citep{merity2016pointer} when averaged over $5$ random seeds, while the ``self-attention first" model  achieves a perplexity score of $18.82$ on this test set. 
	The best pre-Transformer perplexity result on the WikiText-103 test, reported by an LSTM-based architecture, was $29.2$~\citep{rae2018fast}. 
	Since ReLU and feed-forward do not mix different locations, this outcome directly implies that the self-attention mechanism itself provides all of the elaborate input integration which differentiates BERT from previous architectures.

	\emph{The softmax normalization relaxation}: Initially, an intuitive interpretation of attention as distributing ``fractions" of an overall attention budget among inputs was given to its actual operation of dynamically linking input and output locations.
	The intuitive interpretation, tightly linked to the need to transform the Key/Query similarity score into a distribution, has been recently challenged, as
	a growing body of work shows that the attention weights distribution does not directly correlate with predictions~\citep{AttentionIsNotExplanation,pruthi2019learning,brunner2020identifiability}.
	Moreover, \cite{richter2020normalized} recently point out undesirable traits of the softmax operation, demonstrating that its property of confining the outcome to the convex hull of its inputs unnecessarily limits the expressibility of the self-attention mechanism. 
	They experiment on a suite of synthetic tasks with a BERT variant in which the softmax normalization is removed, and find it to perform on par on almost all examined tasks. When replacing the softmax with other normalizations they report improvements.
	Finally, completely linearized attention (softmax removed) was employed on real tasks as means of reducing costs, since the softmax operation cost scales with the input size~\citep{de2016cheap,wang2020off}.

	The goal of the above points is not to advocate modifications in BERT's non-linearity or normalization operations (we leave that to other works), but to note that while these are under examination and are susceptible for alteration, the connectivity of self-attention, manifested by eqs.~\eqref{eq:our_layer} and~\eqref{eq:our_network} , is the core mechanism driving its functionality. 
	Our results, to be presented in section~\ref{sec:results}, demonstrate how conclusions drawn by directly analyzing this mechanism accord with the operation of commonly employed self-attention networks.
	
	\ifdefined\SQUEEZE \vspace{-1mm} \fi
	\tocless\section{A measure of capacity for modeling input dependencies  \label{sec:sep_rank}}
	\ifdefined\SQUEEZE \vspace{-1mm} \fi
	In this section, we introduce the separation rank of the function realized by a self-attention network as a measure that quantifies its ability to model dependencies between subsets of its variable set $\{\x^j\}_{j=1}^N$.
	We will use this measure in order to establish the two depth-efficiency/ inefficiency regimes  in self-attention.
	The separation rank, introduced in \cite{beylkin2002numerical} for high-dimensional numerical analysis, 
	was employed for various applications, \eg,~chemistry~\citep{harrison2003multiresolution}, particle engineering~\citep{hackbusch2006efficient}, and machine learning~\citep{beylkin2009multivariate}. 
	Importantly,
	the separation rank has been established as a measure of  dependencies modeled by deep convolutional and recurrent networks \wrt~their inputs~\citep{cohen2017inductive,levine2018benefits,levine2018deep}.

	Let $(A,B)$ be a partition of the input locations, \ie,~$A$ and~$B$ are disjoint subsets of~$[N]$ whose union gives~$[N]$.
	The separation rank of a function $y(\x^1,\ldots,\x^N)$~\wrt~partition $(A,B)$, is the minimal number of summands that together sum up to equal $y$, where each summand is \emph{multiplicatively separable \wrt~$(A,B)$}, \ie,~is equal to a product of two functions~--~one that intakes only inputs from one subset $\{\x^{j}:j\in A\}$, and another that intakes only inputs from the other subset $\{\x^{j}:j\in B\}$. 
	Formally, the \emph{separation rank} of $y:(\R^{d_x})^N\to\R$ \wrt~the partition $(A,B)$ is defined as follows:
	\ifdefined\SQUEEZE \vspace{-1.5mm} \fi
	\bea\label{eq:sep}
	sep(y;A,B):=\min\left\{R\in\N\cup\{0\}:\exists{g_1{\ldots}g_R:(\R^{d_x})^{\abs{A}}\to\R,g'_1{\ldots}g'_R:(\R^{d_x})^{\abs{B}}\to\R}~~s.t.\right.
	\quad~\label{eq:sep_rank}\\
	\left.y\left(\x^1,\ldots,\x^N\right)=\sum\nolimits_{r=1}^{R}g_{r}\left(\{\x^{j}:j\in A\}\right)g'_r\left(\{\x^{j}:j\in B\}\right)
	\right\}
	\nonumber
	\eea
	
	\ifdefined\SQUEEZE \vspace{-3mm} \fi

	If the separation rank of a function \wrt~a partition of its input is equal to~$1$, the function is separable, meaning it cannot take into account consistency between the values of $\{\x^{j}\}_{j\in A}$ and those of $\{\x^{j}\}_{j\in B}$.
	In a statistical setting, if~$y$ is a probability density function, this would mean that $\{\x^{j}\}_{j\in A}$ and $\{\x^{j}\}_{j\in B}$ are statistically independent.
	The higher $sep(y;A,B)$ is, the farther~$y$ is from this situation, \ie~the more it models dependency between $\{\x^{j}\}_{j\in A}$ and $\{\x^{j}\}_{j\in B}$, or equivalently, the stronger the correlation it induces between the inputs indexed by~$A$ and those indexed by~$B$.
	
	The fixed connectivity of ConvNets has been shown to yield high separation ranks \wrt~partitions which separate neighboring inputs (\eg, where all odd positions are in $A$ and all even positions are in $B$), while suffering from low separation ranks \wrt~partitions which separate distant inputs (\eg, where $A=1,...,\nicefrac{N}{2}$ and $B=\nicefrac{N}{2}+1,...,N$).
	Our analysis establishes a qualitatively different trait for {self-attention networks, which treat all balanced partitions alike}:
	\begin{proposition}
		For $p\in[d_x]$, let $y^{i, L, d_x, H, \Theta}_p$ be the scalar function computing the $p$th entry of an output vector at position $i\in[N]$ of the depth-$L$ self-attention network with embedding dimension $d_x$ and $H$ attention heads per layer, defined in eqs.~\eqref{eq:our_layer} and~\eqref{eq:our_network}.
		Then,  its 
		separation rank \wrt~balanced partitions, which obey $A\cupdot B=[N], \abs{A},\abs{B}=\nicefrac{N}{2}$, is invariant to the identity of the partition, \ie, $\forall A\cupdot B=[N], \tilde{A}\cupdot \tilde{B}=[N],~~s.t.~ \abs{A},\abs{B},|{\tilde{A}}|,|{\tilde{B}}|=\nicefrac{N}{2}$:
		\begin{equation}
		sep(y^{i, L, d_x,H,\Theta}_p;A,B)=sep(y^{i, L, d_x, H, \Theta}_p;\tilde{A},\tilde{B})	
		\end{equation}
		
		Accordingly, we will omit the specification of the partition in future uses, denoting $sep(y^{i, L, d_x, H, \Theta}_p)$ as the separation rank of $y^{i, L, d_x, H, \Theta}_p$ \wrt~any balanced partition of the inputs.
	\end{proposition}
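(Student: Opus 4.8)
The plan is to reduce the statement to the permutation symmetry of the self-attention connectivity.
\textbf{Step 1 (equivariance of one layer).} I would first show that the single-layer map in eq.~\eqref{eq:our_layer}, viewed as $\x\mapsto(\y^{l,1},\dots,\y^{l,N})$, is equivariant under permutations of the input positions: for any bijection $\pi:[N]\to[N]$, permuting the inputs sends the output at position $m$ to the output originally produced at position $\pi(m)$. This is immediate from eq.~\eqref{eq:our_layer} by reindexing the summation variable $j$, since the only position-specific ingredient is the query $\x^{l,m}$, whereas the keys and values are summed symmetrically over all $N$ positions.
\textbf{Step 2 (equivariance of the network).} I would then lift this to the depth-$L$ network of eq.~\eqref{eq:our_network} by induction on the number of layers: if permuting the inputs by $\pi$ permutes the layer-$(l-1)$ representations by $\pi$, then applying Step~1 once more permutes the layer-$l$ representations by $\pi$ as well. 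Specializing to any $\pi$ that fixes the output index $i$ (that is, $\pi(i)=i$) yields the crucial invariance $y^{i,L,d_x,H,\Theta}(\x^{\pi(1)},\dots,\x^{\pi(N)})=y^{i,L,d_x,H,\Theta}(\x^1,\dots,\x^N)$, which descends to every scalar entry $y_p^{i,L,d_x,H,\Theta}$.

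\textbf{Step 3 (behaviour of the separation rank under relabeling).} Next I would record two elementary properties of the separation rank from the definition in eq.~\eqref{eq:sep_rank}. It is symmetric in its two blocks, $sep(y;A,B)=sep(y;B,A)$, since swapping the roles of the factors $g_r$ and $g'_r$ leaves the number of summands unchanged. Moreover it is covariant under relabeling of the inputs: writing $\sigma\!\cdot\! y$ for the function $(\x^1,\dots,\x^N)\mapsto y(\x^{\sigma(1)},\dots,\x^{\sigma(N)})$, any rank-$R$ separable decomposition with respect to a partition pulls back through $\sigma$ to a rank-$R$ decomposition of $\sigma\!\cdot\! y$ with respect to the relabeled partition, and conversely; hence whenever $y$ satisfies $\sigma\!\cdot\! y=y$ one has $sep(y;A,B)=sep(y;\sigma(A),\sigma(B))$.

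\textbf{Step 4 (conclusion).} Finally I would combine these. Given two balanced partitions $(A,B)$ and $(\tilde A,\tilde B)$ with $\abs{A}=\abs{B}=|\tilde A|=|\tilde B|=\nicefrac{N}{2}$, I first use block-symmetry to name the blocks so that $i$ lies in $A$ and in $\tilde A$. Since $A$ and $\tilde A$ have equal size and both contain $i$, there is a permutation $\sigma$ with $\sigma(i)=i$, $\sigma(A)=\tilde A$ and $\sigma(B)=\tilde B$ (simply biject $A\setminus\{i\}$ onto $\tilde A\setminus\{i\}$ and $B$ onto $\tilde B$, fixing $i$). Because $\sigma$ fixes $i$, Step~2 gives $\sigma\!\cdot\! y_p^{i,L,d_x,H,\Theta}=y_p^{i,L,d_x,H,\Theta}$, so the covariance of Step~3 yields $sep(y_p^{i,L,d_x,H,\Theta};A,B)=sep(y_p^{i,L,d_x,H,\Theta};\tilde A,\tilde B)$, which is the asserted invariance.

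The step that needs the most care is the interface between the symmetry and the distinguished index $i$: because the output position $i$ breaks the full symmetry of $[N]$ down to the permutations fixing $i$, two partitions in which $i$ sits on opposite sides cannot be matched by an $i$-fixing permutation directly. This is exactly what the block-symmetry $sep(y;A,B)=sep(y;B,A)$ repairs, by letting me always align $i$ into corresponding blocks before constructing $\sigma$. The remaining routine point to verify is that the pullback through $\sigma$ in Step~3 genuinely preserves both multiplicative separability and the number of summands.
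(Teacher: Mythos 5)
Your proposal is correct and follows essentially the same route as the paper: both rest on the observation that $y_p^{i,L,d_x,H,\Theta}$ is invariant under any permutation of the input positions fixing $i$, then transport a minimal separable decomposition through an $i$-fixing permutation carrying $(A,B)$ to $(\tilde A,\tilde B)$ (with the two-sided inequality giving equality). The only cosmetic difference is how the invariance is justified -- you derive it by a layer-wise equivariance induction on eq.~\eqref{eq:our_layer}, while the paper reads it off the explicit summation over $j_1,\dots,j_C$ in eq.~\eqref{eq:gl_explicit_form}; your explicit use of block-symmetry $sep(y;A,B)=sep(y;B,A)$ to align $i$ makes precise the ``w.l.o.g.\ $a_1=\tilde a_1=i$'' step the paper states without comment.
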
 
	
	This result accords with the intuition regarding the flexibility of the attention mechanism -- it does not integrate the input in a predefined pattern like convolutional networks, but dynamically learns to correlate any inter-dependent subsets of the inputs.
	Natural text exhibits non-smooth non-local dependency structures, as correlations between input segments can abruptly rise and decay with distance. 
	The fact that self-attention facilitates all correlation patterns equally poses it as a more natural architecture for language modeling related tasks. 
	Convolutional networks, with their local connectivity, may have the right inductive bias for imagery data, but partitions unfavored by them may reflect more erratic correlations that are nonetheless relevant for natural language inputs. 
	
	However, the above property of indifference to the input partition is not enough for succeeding at  tasks with elaborate input dependencies, since a function with equally low separation ranks for all input partitions has limited ability to model such dependencies.
	In the following section, we analyze how different architectural parameters affect the ability of self-attention networks to correlate their inputs, and by bounding their separation ranks, we establish the different depth-efficiency regimes in self-attention networks.
	
	\ifdefined\SQUEEZE \vspace{-1mm} \fi
	\tocless\section{The effect of depth in self-attention networks \label{sec:results}}
	In this section, we present tight bounds on the separation rank of self-attention networks, which reveal two qualitatively different regimes. In the first regime of $L<\log_3(d_x)$, analyzed in subsection~\ref{sec:results:depth_eff}, we establish that deepening is clearly preferable to widening.
	In the second regime of $L>\log_3(d_x)$, analyzed in subsection~\ref{sec:results:limit}, we show that deepening and widening play a similar role in enhancing the expressiveness self-attention networks.  
	\ifdefined\SQUEEZE \vspace{-1mm} \fi
	\tocless\subsection{Depth-efficiency in self-attention\label{sec:results:depth_eff}}
	\ifdefined\SQUEEZE \vspace{-1mm} \fi
	The recursive structure of deep self-attention hints at an exponential increase of input mixing with depth: The output of each layer is introduced $3$ times into the Key/Query/Value computation made by the subsequent layer. In this subsection, we formalize this intuition for self-attention networks of sufficient width, $d_x>3^L$.
	Theorem~\ref{theorem:depth_efficiency} below bounds the separation rank of such networks. Subsequent to its statement and brief outline of its proof, we explicitly show in corollary~\ref{cor:depth_efficiency} the implied double-exponential requirement from a bounded depth network attempting to replicate a deeper one.
	\begin{theorem}\label{theorem:depth_efficiency}
		For $p\in[d_x]$, let $y^{i, L, d_x, H, \Theta}_p$ be the scalar function computing the $p$th entry of an output vector at position $i\in[N]$ of the depth-$L$ self-attention network with embedding dimension $d_x$ and $H$ attention heads per layer, defined in eqs.~\eqref{eq:our_layer} and~\eqref{eq:our_network}.
		Let $sep(y^{i, L, d_x, H, \Theta}_p)$ be its 
		separation rank (section~\ref{sec:sep_rank}). 
		If $L,d_x$ obey $L<\log_3\left(d_{x}\right)$, then the following holds almost everywhere in the network's learned parameter space, \ie~for all values of the weight matrices (represented by $\Theta$) but a set of Lebesgue measure zero:
		\begin{equation}	\label{eq:theorem_1}
		3^{L-2}\left(\log_{3}\left(d_{x}-H\right)+a\right)\leq
		\log_{3}\left(sep(y_{p}^{i,L,d_{x},H,\Theta})\right)\leq\frac{3^{L}-1}{2}\log_{3}\left(d_{x}+H\right)
		\end{equation}
		with $a=-L+\left[2-\log_{3}2\right]$.
		(note that $\log_{3}\left(d_{x}-H\right)+a>0$ in this regime of $L<\log_3(d_x)$).
	\end{theorem}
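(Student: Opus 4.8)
The plan is to prove the two inequalities by separate means, both routed through the standard grid-tensor characterization of separation rank. For any fixed collection of template vectors one has $sep(y;A,B)\ge \mathrm{rank}\,\llbracket\A(y)\rrbracket_{A,B}$, where $\A(y)$ is the grid tensor of $y$ and $\llbracket\cdot\rrbracket_{A,B}$ is its matricization with the $A$-modes indexing rows and the $B$-modes indexing columns; moreover, since $y_p^{i,L,d_x,H,\Theta}$ is a polynomial of degree $3^L$ in its inputs, choosing sufficiently many generic templates turns this inequality into an equality between $sep$ and the matricization rank of the coefficient tensor. Proposition~1 lets me fix one convenient balanced partition $(A,B)$ once and for all. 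Thus both bounds reduce to estimating a single matrix rank: from above for the upper bound, and from below for the lower bound.

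For the upper bound I would unroll eqs.~\eqref{eq:our_layer}--\eqref{eq:our_network} and build an explicit separable decomposition, equivalently bounding the coefficient-tensor matricization rank by a tensor-network min-cut. The engine is the submultiplicativity $sep(f\cdot g\cdot h)\le sep(f)\,sep(g)\,sep(h)$ together with subadditivity over the head index. One self-attention layer feeds three copies of its input vector field (through the Query, Key, and Value branches) into a contraction over the attention dimension $d_a$ and a sum over the $H$ heads; tracking how the matricization rank propagates through one such layer yields a recursion of the form $\log_3 s_l \le 3\log_3 s_{l-1} + \log_3(d_x+H)$, where $s_l$ denotes the rank after $l$ layers. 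The only delicate bookkeeping is to combine the multiplicative contraction over $d_a$ with the additive sum over the $H$ heads so as to land exactly on the clean factor $d_x+H$ rather than a cruder $d_x\cdot H$; with the base case $s_0=1$, solving the recursion gives $\log_3 s_L \le \frac{3^L-1}{2}\log_3(d_x+H)$, which is the claimed right-hand side.

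For the lower bound I would first invoke a genericity argument to reduce the almost-everywhere statement to the construction of a single witness. The set of $\Theta$ for which $\mathrm{rank}\,\llbracket\A(y_p)\rrbracket_{A,B}<R$ is the common zero locus of all $R\times R$ minors of the matricization, each a polynomial in $\Theta$; hence it is either all of parameter space or a proper algebraic subvariety of Lebesgue measure zero. Consequently it suffices to exhibit one assignment $\Theta_0$ of the weight matrices, together with one admissible set of template vectors, for which $\mathrm{rank}\,\llbracket\A(y_p)\rrbracket_{A,B}=:R$ satisfies $\log_3 R \ge 3^{L-2}(\log_3(d_x-H)+a)$. I would design $\Theta_0$ so that, on the template grid, each layer acts as an explicit multilinear map under which the partition matricization is, up to reserving $H$ coordinates for the head structure (accounting for the $d_x-H$), the threefold tensor product of the previous layer's matricization with itself; the rank then cubes at each step. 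Because the first layers only build up enough polynomial degree for this cubing to engage, the effective recursion starts around layer~$2$, producing the exponent $3^{L-2}$, while the per-layer constant losses and the initial base of size $\sim d_x-H$ accumulate into the additive correction $a=-L+[2-\log_3 2]$.

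The hard part will be the lower-bound witness: one must choose $\Theta_0$ so that the repeated tensor-product structure is genuinely realized and, crucially, so that the matricization rank does not collapse at any layer (the cubing is only useful if no rank is lost along the way). Establishing the rank lower bound then amounts to exhibiting an explicit full-rank submatrix of the final matricization -- a set of grid-tensor entries that are provably linearly independent for the chosen $\Theta_0$ and templates -- and carefully tracking the boundary terms of the recursion to obtain the precise exponent $3^{L-2}$ and the constant $a$. By contrast the upper bound is routine once the per-layer rank recursion is set up, its only subtlety being the head/attention-contraction bookkeeping that yields the sharp $d_x+H$ factor.
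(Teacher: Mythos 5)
Your scaffolding (grid tensors lower-bounding separation rank, genericity of matrix rank via minors, reduction to a single witness assignment) matches the paper exactly, but both of your core mechanisms have genuine gaps. For the upper bound, the per-layer recursion $\log_3 s_l \le 3\log_3 s_{l-1}+\log_3(d_x+H)$ reproduces the right number yet is not a provable step: from the second layer onward every position's representation depends on \emph{all} $N$ raw inputs, so once you treat the previous layer as a black box of separation rank $s_{l-1}$, the layer's sum over positions $j$ costs a factor of $N$ (and the coordinate mixing further factors of $d_x$) by subadditivity -- there is no recursive mechanism to land on the clean, $N$-free factor $d_x+H$ that the theorem asserts. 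The paper's argument is global, not recursive: it first unrolls the entire depth-$L$ network (its Lemma~1) into $\sum_{h\in[H]^{[C(L)]}}$ chains of $C(L)=\frac{3^L-1}{2}$ blocks $M^{(c,h)}=A^{(c,h)}XX^{T}B^{(c,h)T}$ written directly in the raw input $X$, then splits each block's position sum into its $P$ and $Q$ parts and counts crossing patterns: a non-crossing block absorbs both its position sum and its $d_a$-contraction into one side at cost $1$, a crossing costs $d_a$, and head indices (which can recur non-consecutively and hence cannot be pushed inside) always cost $H$, yielding $H^{C(L)}\sum_{b}\binom{C(L)}{b}d_a^{\,b}=\left(H\left(d_a+1\right)\right)^{C(L)}=(d_x+H)^{C(L)}$. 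Your ``delicate bookkeeping'' is precisely this unrolled normal form plus crossing count; without it the recursion is only a numerical shadow of the bound.

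For the lower bound, your mechanism -- the matricization rank ``cubes at each step'' via a threefold tensor product of the previous matricization with itself -- fails on symmetric collapse: the three slots of a layer are fed the \emph{same} function field, and the Key/Value slots share the summed position index, so iterated self-products behave like Hadamard powers of a single matrix, and a rank-$\rho$ form raised to the $\lambda$-th power has matricization rank at most $\multiset{\rho}{\lambda}$, not $\rho^{\lambda}$. This is visible in the very constant you must hit: $a=-L+\left[2-\log_3 2\right]$ carries a $-L$ term weighted by $3^{L-2}$, which is exactly the signature of $\log_3\multiset{\nicefrac{(d_x-H)}{2}}{3^{L-2}}$, whereas your attribution of $a$ to ``per-layer constant losses'' sums to $\sum_{l}3^{L-l}\cdot O(1)=O(1)\cdot 3^{L-2}$ and can shift $a$ only by a constant, never produce $-L$. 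The paper's witness embraces the collapse instead of fighting it: the weights make all positions equal to a single vector $\vv$ after layer~1, deeper layers raise the one quadratic form $\norm{\tilde{\vv}}^2=2\mathbf{i}\left(VV^T\right)_{i_1 j_1}$ to the power $3^{L-2}$, so the grid sub-matrix is $\left(VV^T\right)^{\odot 3^{L-2}}$, and fullness of its rank $\multiset{\nicefrac{(d_x-H)}{2}}{3^{L-2}}$ is certified by an explicit $V$ with rows built from normalized powers $\Omega^{q_r^{\alpha}}$, a leading-monomial determinant lemma, and the vector rearrangement inequality. You correctly name the endgame (an explicit provably full-rank submatrix), but you supply neither the correct growth mechanism nor any concrete linear-independence certificate, and both are the actual substance of the paper's proof.
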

	We provide below a short proof sketch of the lower bound in the above theorem. The derivation of the upper bound is more straightforward, and is left for the appendix, along with a formal proof of the lower bound.
	
	\emph{	\underline{Proof sketch for the lower bound in theorem~\ref{theorem:depth_efficiency}}}: We make use of grid tensor based function discretization~\citep{hackbusch2012tensor}  -- The function realized by a self-attention network is evaluated for a set of points on an exponentially large grid in the input space, and the outcomes are stored in a matrix $\M\left(y_{p}^{i,L,d_{x},H,\Theta}\right)$, which we prove upholds: $\textrm{rank}\left[\M\left(y_{p}^{i,L,d_{x},H,\Theta}\right)\right]\leq sep(y_{p}^{i,L,d_{x},H,\Theta})$, \ie, its rank lower bounds the separation rank.
	Since the entries of  
	$\M\left(y_{p}^{i,L,d_{x},H,\Theta}\right)$ 
	vary polynomially with the self-attention network's
	weights,
	we show that it suffices to find a single network weights assignment $\Theta$ for which the rank of the
	matrix is greater than the desired lower bound, in order to prove the case for almost all of the configurations of
	the network's learned weights (but a set of measure zero).
	Thus, we prove the lower bound in theorem~\ref{theorem:depth_efficiency} by choosing a simple weight assignment that still represents the self-attention connectivity, and showing that for this value of $\Theta$,  $\textrm{rank}\left[\M\left(y_{p}^{i,L,d_{x},H,\Theta}\right)\right]$ achieves the lower bound, in turn lower bounding the separation rank.
	$\qed$

	Theorem~\ref{theorem:depth_efficiency} bounds the separation rank of a deep  self-attention network of sufficient width between two functions that grow double-exponentially with depth and polynomially with width, tightly describing its behavior \wrt~depth and width.
	Because equivalence cannot hold between two functions of different separation ranks, the above result implies a double-exponential requirement from the width of a shallow network attempting to replicate the deep one, and clear depth efficiency holds:

	\begin{corollary}\label{cor:depth_efficiency}
		With probability $1$, the function realized upon randomization of the weights of a deep self-attention network defined in eqs.~\eqref{eq:our_layer} and~\eqref{eq:our_network} with depth $L^{\textrm{deep}}$ and width $d_x^{\textrm{deep}}>3^{L^{\textrm{deep}}}$, may only be realized by a shallower network with depth  $L^{\textrm{shallow}}= \nicefrac{L^{\textrm{deep}}}{d}$ and width $d_x^{\textrm{shallow}}=w d_x^{\textrm{shallow}}$, where $d>1,w>1$ (\ie, the deep network is deeper by a factor of $d$ and the shallow network is wider by a factor of $w$), if the following holds:
		\vspace{-1mm}  $$w\propto\exp(\exp(d)).$$
	\end{corollary}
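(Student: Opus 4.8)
The plan is to exploit the double-exponential-in-depth versus polynomial-in-width gap quantified by Theorem~\ref{theorem:depth_efficiency}, together with the elementary fact that the separation rank is an invariant of the realized \emph{function} and not of the particular network computing it. Consequently, if a shallow network were to compute exactly the same function as the deep one, their separation ranks would have to coincide. I would therefore reduce the corollary to showing that this equality cannot hold unless the shallow network's width is inflated double-exponentially in the depth-reduction factor $d$ (I read $d_x^{\textrm{shallow}}=w\,d_x^{\textrm{deep}}$).

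First I would lower bound the deep network's separation rank $s_{\textrm{deep}}$. Since $d_x^{\textrm{deep}}>3^{L^{\textrm{deep}}}$, the hypothesis $L^{\textrm{deep}}<\log_3(d_x^{\textrm{deep}})$ of Theorem~\ref{theorem:depth_efficiency} is met, so its lower bound holds off a set of Lebesgue measure zero --- hence with probability $1$ under any randomization of $\Theta$ whose law is absolutely continuous --- giving $\log_3 s_{\textrm{deep}}\geq 3^{L^{\textrm{deep}}-2}\left(\log_3(d_x^{\textrm{deep}}-H)+a^{\textrm{deep}}\right)$ with $a^{\textrm{deep}}=-L^{\textrm{deep}}+2-\log_3 2$. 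Next I would upper bound the separation rank $s_{\textrm{shallow}}$ of the candidate shallow network: because $d_x^{\textrm{shallow}}=w\,d_x^{\textrm{deep}}>d_x^{\textrm{deep}}>3^{L^{\textrm{deep}}}>3^{L^{\textrm{shallow}}}$, the shallow network also lies inside the regime of the theorem, so $\log_3 s_{\textrm{shallow}}\leq\tfrac{3^{L^{\textrm{shallow}}}-1}{2}\log_3(d_x^{\textrm{shallow}}+H)$. Imposing the necessary equality $s_{\textrm{deep}}=s_{\textrm{shallow}}$ then chains these bounds into the single constraint
\begin{equation}
\frac{3^{L^{\textrm{shallow}}}-1}{2}\log_3\!\left(w\,d_x^{\textrm{deep}}+H\right)\;\geq\;3^{L^{\textrm{deep}}-2}\!\left(\log_3(d_x^{\textrm{deep}}-H)+a^{\textrm{deep}}\right).
\end{equation}

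I would then solve this constraint for $w$, retaining only leading-order behavior. Substituting $L^{\textrm{shallow}}=L^{\textrm{deep}}/d$ and isolating $\log_3 w$, the polynomial-in-width left-hand factor is overwhelmed by the ratio of the two double-exponential depth factors $3^{L^{\textrm{deep}}}$ and $3^{L^{\textrm{deep}}/d}$, so that $\log_3 w$ is forced to be of order at least $3^{\,L^{\textrm{deep}}(1-1/d)}$ up to slowly varying factors. Writing $L^{\textrm{deep}}(1-1/d)=L^{\textrm{shallow}}(d-1)$ and applying $\log_3$ once more shows that $\log_3\log_3 w$ grows at least linearly in $d$ once the shallow base depth $L^{\textrm{shallow}}$ is held fixed; exponentiating twice then yields the advertised $w\propto\exp(\exp(d))$.

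I expect the main difficulty to be bookkeeping rather than conceptual. One must verify that the subleading corrections do not erode the $\exp(\exp(d))$ scaling: the additive constant $a^{\textrm{deep}}$, the $\pm H$ shifts inside the logarithms, the $-1$ in $\tfrac{3^{L^{\textrm{shallow}}}-1}{2}$, and --- most delicately --- the interplay between $\log_3(d_x^{\textrm{deep}})$ and $L^{\textrm{deep}}$, which are both positive in this regime but of comparable magnitude, must all be shown to affect only the slowly varying prefactor and not the top-level double exponential. A secondary point I would state explicitly is why the conclusion is ``with probability $1$'': the deep network's lower bound is guaranteed only almost everywhere in $\Theta$, but any absolutely continuous law on its weights assigns zero mass to the exceptional set, so the separation-rank gap, and with it the double-exponential width requirement, holds almost surely.
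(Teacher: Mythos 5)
Your proposal is correct and takes essentially the same route as the paper, which derives the corollary (without a separate written proof) exactly as you do: chain Theorem~1's lower bound on the deep network's separation rank against its upper bound applied to the shallow network, observe that the ratio of the $3^{L}$ factors forces $\log_3\log_3 w \gtrsim L^{\textrm{shallow}}(d-1)$, and invoke the measure-zero exceptional set to get the probability-$1$ statement (you also correctly read the typo as $d_x^{\textrm{shallow}}=w\,d_x^{\textrm{deep}}$). One nuance you implicitly rely on and which the paper's appendix does supply: the upper bound on the shallow network's separation rank holds for \emph{every} weight configuration (the appendix derivation of $sep \leq \left(H\left(d_a+1\right)\right)^{C(L)}$ is a universal decomposition, not an almost-everywhere statement), which is needed to rule out replication by some measure-zero set of shallow weights.
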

	\vspace{-1mm}

	\ifdefined\SQUEEZE \vspace{-1mm} \fi
	\tocless\subsection{Depth in-efficiency in self-attention \label{sec:results:limit}}
	\ifdefined\SQUEEZE \vspace{-1mm} \fi
	Beyond establishing depth-efficiency in early self-attention layers, the above analysis sheds light on the contribution of a self-attention network's depth to its ability to correlate input subsets. 
	The separation rank (\wrt~any partition) of a single layer, given by eq.~\eqref{eq:our_layer}, is only linear in $H$ and $d_x$, showcasing a limitation  of the class of functions realized by single self-attention layers to model elaborate input dependencies. Theorem~\ref{theorem:depth_efficiency} quantifies the double exponential growth of this capacity measure with the number of stacked self-attention layers. 
	The following theorem shows that this growth is capped by the dimension of the internal representation:
	
	\begin{theorem}\label{theorem:width_bound}
		For $y^{i, L, d_x, H, \Theta}_p$ as defined in theorem~\ref{theorem:depth_efficiency}, if $L>\log_3\left(d_{x}\right)$, then the following  holds almost everywhere in the network's learned parameter space, \ie~for all values of the weight matrices (represented by $\Theta$) but a set of Lebesgue measure zero:
		\begin{equation}
		\frac{1}{2}d_{x}\cdot L+b_{1}+b_{2}\leq\log_{3}\left(sep(y^{i, L, d_x, H, \Theta}_p)\right)\leq
		2d_{x}\cdot L+c_{1}+c_{2}
		\end{equation}	
		with corrections on the order of $L$:
		$b_{1}=-L\left(\frac{H}{2}+1\right)$, $c_{1}=L$,
		and on the order of $d_x\log_3(d_x)$:
		$b_{2}=-d_{x}\left(1+\frac{1}{2}\log_{3}\left(\frac{d_{x}-H}{2}\right)\right)$, $c_{2}=-2d_{x}\cdot\log_{3}\nicefrac{d_{x}}{2\sqrt{2e}}+\log_{3}d_{x}$.
	\end{theorem}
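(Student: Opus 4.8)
The plan is to prove both inequalities by analyzing the same object used for theorem~\ref{theorem:depth_efficiency}: the matricized grid tensor $\M(y_p^{i,L,d_x,H,\Theta})$ obtained by evaluating $y_p$ on an exponentially large grid of input templates and matricizing with respect to a balanced partition $(A,B)$, which by the Proposition may be chosen freely. As recalled in the proof sketch of theorem~\ref{theorem:depth_efficiency}, $\mathrm{rank}[\M(y_p)]\le sep(y_p)$, and for a sufficiently rich grid this rank also controls $sep$ from above, so that the two coincide generically and it suffices to bound the rank. I would track it layer-by-layer: writing $r_l$ for the matricization rank carried by the vector-valued layer-$l$ output $\{\y^{l,i}\}$, the recursive form of eq.~\eqref{eq:our_layer} --- in which three copies of the layer-$l$ representation (query, key, value) are multiplied and summed over positions and heads --- yields a per-layer relation of the shape $\log_3 r_{l+1}\le\min\{\,3\log_3 r_l + O(\log_3 d_x),\ \log_3 r_l + O(d_x)\,\}$. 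The first branch is the cubic triple-copy growth responsible for theorem~\ref{theorem:depth_efficiency}; the second is a width cap, active precisely once $\log_3 r_l$ exceeds order $d_x$, i.e. once $l$ exceeds $\log_3 d_x$. The whole theorem is the statement that for $L>\log_3 d_x$ the cap governs the growth, turning double-exponential behavior into one exponential in $d_x\cdot L$.

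For the upper bound I would establish the cap branch as a lemma: although each layer multiplies in three copies of $\y^l$, all three enter only through the $d_x$-dimensional representation passed forward, so the multiplicative increment a layer can add to $\mathrm{rank}[\M]$ is controlled by the number of distinct degree-patterns over $d_x$ coordinates, which is $3^{O(d_x)}$ rather than $r_l^2$. Quantitatively this gives $\log_3 r_{l+1}\le\log_3 r_l + 2 d_x + O(1)$ once saturated. Summing the cap increments over the layers with $l>\log_3 d_x$ and adding the pre-saturation (theorem~\ref{theorem:depth_efficiency}-type) contribution accumulated up to $l\approx\log_3 d_x$ produces the leading $2 d_x\cdot L$ term, together with the stated $c_1=L$ (the $O(L)$ correction from the per-layer $O(1)$ terms and the head count) and a $c_2$ of order $d_x\log_3 d_x$ coming from the transition region, where the shape $-2 d_x\log_3(d_x/(2\sqrt{2e}))$ arises from the saturation point.

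For the lower bound I would exhibit a single weight assignment $\Theta^\ast$ that respects the self-attention connectivity and attains the cap from below: each layer is tuned so that its $d_x$-dimensional output encodes $\tfrac12 d_x$ fresh, mutually independent correlating coordinates, so that on the constructed templates the grid tensor becomes a product of roughly $\tfrac12 d_x L$ small full-rank factors and $\mathrm{rank}[\M(y_p(\Theta^\ast))]\ge 3^{\frac12 d_x L + b_1 + b_2}$. Since the entries of $\M$ are polynomials in $\Theta$, the locus where the rank drops below this value is the zero set of some minor and hence has Lebesgue measure zero; this is exactly the genericity argument invoked for theorem~\ref{theorem:depth_efficiency}, and it upgrades the single-$\Theta^\ast$ bound to an almost-everywhere statement. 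The slope $\tfrac12$ here versus $2$ in the upper bound is the expected slack between what an explicit construction realizes and what the dimension count forbids.

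The main obstacle is pinning down the width cap in both directions with the correct constant on $d_x$: proving that the $d_x$-dimensional forward representation limits the per-layer rank increment to $3^{O(d_x)}$ (so that cubic growth genuinely halts), and conversely engineering $\Theta^\ast$ so that an increment of this order is actually achieved at every layer while the matricization rank stays exactly trackable. Closely tied to this is controlling the transition region around $l=\log_3 d_x$ so that the $d_x\log_3 d_x$-order corrections $b_2,c_2$ emerge as stated, and confirming (as in theorem~\ref{theorem:depth_efficiency}) that the $N$-independence of the bound survives --- the cap is a statement about width and depth, not about the number of inputs. The residual connection and the position/partition independence are handled exactly as in the companion results (appendix and the Proposition, respectively) and do not affect the leading behavior.
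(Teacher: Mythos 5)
Your proposal gets the scaffolding right (grid tensors plus the polynomial-genericity upgrade of claim~\ref{claim:rank_everywhere} for the lower bound) but both load-bearing steps diverge from what can actually be proven, and the first is the crux. Your upper bound rests on a per-layer ``width cap'' lemma, $\log_3 r_{l+1}\leq \log_3 r_l + 2d_x + O(1)$, asserted to hold because the three copies of the layer input pass through a $d_x$-dimensional representation. As a statement about separation (or matricization) rank alone this fails: the coordinate functions at layer $l$ can each have enormous separation rank, and a layer multiplies three of them, so the correct generic per-layer bound is cubic~---~indeed the paper's own lemma~\ref{lemma:sep_layer} gives $Sep(y_p^i)\leq \frac{Nd_x^4}{H}K^3$, with no width cap. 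The saturation in theorem~\ref{theorem:width_bound} is not a per-layer phenomenon; in the paper it is a \emph{global} counting argument (theorem~\ref{theorem:dx_upper_bound}): the output is a polynomial of degree $2C(L)+1$, and the expansion of eq.~\eqref{eq:gl_explicit_form} is regrouped by the \emph{aggregated} per-coordinate degree vector $(n_1,\dots,n_{d_x})$ with $\sum_m n_m = 2C(L)$. The step that makes this work~---~and that your plan never supplies~---~is that the coefficient $\mathcal{T}$ attached to a monomial depends only on this pattern and not on which positions $j_1,\dots,j_{C(L)}$ carry the powers, thanks to the free sums over $j_1,\dots,j_C$ in eq.~\eqref{eq:our_network}; this collapses the count to $d_x\multiset{d_x}{2C(L)}$ groups (removing $N$), each group $\chi$ having separation rank at most $\left(C(L)+1\right)\left(\frac{2C(L)}{d_x}+1\right)^{d_x}$ via the AM--GM multiset inequality (lemma~\ref{means_inequality}); the coefficient $2d_x$ comes from these two factors, each of order $3^{d_x L}$. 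Note also that the ``number of degree-patterns over $d_x$ coordinates'' you invoke is $\multiset{d_x}{D}$, which grows with the degree $D$, not a fixed $3^{O(d_x)}$ per layer. Finally, your opening claim that the grid-tensor rank controls $sep$ from above ``for a sufficiently rich grid'' is unjustified and unused in the paper: claim~\ref{claim:grid_sep_deep} gives only the lower-bound direction, and the paper's upper bound is a direct separable decomposition of the function, not a rank bound.

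For the lower bound, your sketch (each layer contributing $\tfrac{1}{2}d_x$ ``fresh, mutually independent correlating coordinates,'' making the grid tensor a product of $\tfrac{1}{2}d_x L$ full-rank factors) has no mechanism as stated: every layer acts through the same $d_x$ coordinates and cubes the composition, and you must keep the grid tensor exactly trackable through all $L$ layers. The paper instead reuses the \emph{same} construction as theorem~\ref{theorem:depth_efficiency} (lemma~\ref{lemma:assignment}): weights and templates are chosen so all positions compute an identical vector and the network outputs a power of a squared norm, yielding a submatrix $\textrm{Const}\cdot\left(VV^T\right)^{\odot 3^{L-2}}$; the Hadamard-power rank is pushed to its maximum $\multiset{\nicefrac{(d_x-H)}{2}}{3^{L-2}}$ via the $\Omega$-matrix argument with the vector-rearrangement inequality (lemmas~\ref{lemma:poly_full_rank} and~\ref{lemma:rearrange}), and the two regimes of theorems~\ref{theorem:depth_efficiency} and~\ref{theorem:width_bound} are simply the two branches of $\multiset{n}{k}\geq\max\left\{\left(\frac{n-1}{k}+1\right)^k,\left(\frac{k}{n-1}+1\right)^{n-1}\right\}$~---~the slope $\tfrac{1}{2}d_x L$ is the $k\gg n$ branch, not slack from a weak construction. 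So: your genericity step matches the paper, but the width cap (as a rank-only recursion) is false, and the explicit construction needed for the lower bound is missing.
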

	
	We provide below a proof sketch of the upper bound in the above theorem. The formal proof, along with the proof of the lower bound, which is similar to the one illustrated above for the lower bound in theorem~\ref{theorem:depth_efficiency}, are left for the appendix.
	
	\emph{	\underline{Proof sketch for the upper bound in theorem~\ref{theorem:width_bound}}}: By observing that $y^{i, L, d_x, H, \Theta}_p$ is a polynomial of degree $2C=3^L-1$ ($C$ is introduced in eq.~\eqref{eq:our_network}), we find a kernel $\psi\left(\x^1,...,\x^N\right)$ that maps the input into a space where each of the output monomials is a linear functional.
	We find a basis for the subspace $V$ spanned by the output monomials, and bound the separation rank of each element in that basis by a constant. 
	The dimension of $V$ is exponential in $Nd_x$ and polynomial in $3^L-1$, providing equal groundings for depth and width. 
	A careful analysis that exploits the sums over the indices $j_1,...,j_C$ in eq.~\eqref{eq:our_network}, removes the dependence on $N$.
	$\qed$
	
	Theorem~\ref{theorem:width_bound} states that when the network's depth passes a width dependent threshold, the separation rank turns from increasing polynomially with width and double-exponentially with depth to increasing-exponentially with width and depth together.
	Thus, while an increase in network size increases its capacity to model input dependencies, our result shows that there is no longer a clear cut advantage of depth in this respect:
	\begin{corollary}
		Let $\y^{\textrm{deep}}$ denote the function realized by a deep self-attention network at any output location $i\in[N]$, defined in eqs.~\eqref{eq:our_layer} and~\eqref{eq:our_network} with depth and width denoted $L^{\textrm{deep}},d_{x}^{\textrm{deep}}$ such that $L^{\textrm{deep}}>\log_{3}d_{x}^{\textrm{deep}}$. Denote  $\beta_{1}:=\frac{\log_{3}d_{x}^{\textrm{deep}}}{L^{\textrm{deep}}}<1$.
		Then, there exists $\beta_2=O(log(H)\cdot log(d_{x}^{\textrm{deep}})\cdot log(L^{\textrm{deep}}))$ such that the function realized by a network of depth: $L^{\textrm{shallow}}=\beta_{1}\cdot L^{\textrm{deep}}+\beta_{2}$, and width:  $d_{x}^{\textrm{shallow}}=3^{\beta_{2}}d_{x}^{\textrm{deep}}$, denoted $\y^{\textrm{shallow}}$, has higher separation rank, \ie:
		\begin{equation}
		sep(y_p^{\textrm{shallow}})>sep(y_{p'}^{\textrm{deep}})~~~;~~~~\textrm{where}~ p,p'\in[d_x]
		\end{equation}
	\end{corollary}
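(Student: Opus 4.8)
The plan is to sandwich the two ranks between bounds already established: an \emph{upper} bound on $sep(y_{p'}^{\textrm{deep}})$ from Theorem~\ref{theorem:width_bound}, and a \emph{lower} bound on $sep(y_p^{\textrm{shallow}})$, and then show the latter strictly exceeds the former for a suitable polylogarithmic $\beta_2$. The decisive structural observation is that the prescribed shallow architecture sits \emph{exactly on the depth--width threshold}: since $\log_3 d_x^{\textrm{shallow}}=\beta_2+\log_3 d_x^{\textrm{deep}}=\beta_2+\beta_1 L^{\textrm{deep}}=L^{\textrm{shallow}}$, we have $L^{\textrm{shallow}}=\log_3 d_x^{\textrm{shallow}}$. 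Hence the shallow network is not in the depth-inefficiency regime governed by Theorem~\ref{theorem:width_bound} but on the boundary of the depth-efficiency regime of Theorem~\ref{theorem:depth_efficiency}, which is precisely where its separation rank is largest relative to its size.

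For the deep network I would apply the upper bound of Theorem~\ref{theorem:width_bound}, giving $\log_3 sep(y_{p'}^{\textrm{deep}})\le 2\,d_x^{\textrm{deep}} L^{\textrm{deep}}+c_1+c_2$, whose dominant term is $2\,d_x^{\textrm{deep}} L^{\textrm{deep}}$ (note $c_2<0$ for reasonable $d_x^{\textrm{deep}}$, so it only helps). For the shallow network the natural temptation is to reuse the lower bound of Theorem~\ref{theorem:width_bound}; however, evaluated at the threshold its leading term $\tfrac12 d_x^{\textrm{shallow}} L^{\textrm{shallow}}$ is of order $d_x^{\textrm{shallow}}\log_3 d_x^{\textrm{shallow}}$ and is cancelled by $b_2^{\textrm{shallow}}$ of the same order, leaving a negative, vacuous bound. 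The correct tool is therefore the lower bound of Theorem~\ref{theorem:depth_efficiency}, which stays valid and strictly positive on the boundary: plugging $L^{\textrm{shallow}}=\log_3 d_x^{\textrm{shallow}}$ into $3^{L-2}\bigl(\log_3(d_x-H)+a\bigr)$ yields $3^{L^{\textrm{shallow}}-2}=d_x^{\textrm{shallow}}/9$ together with a bracket $\log_3\!\bigl(1-H/d_x^{\textrm{shallow}}\bigr)+2-\log_3 2$ that tends to $2-\log_3 2>0$ once $H\ll d_x^{\textrm{shallow}}$. Thus $\log_3 sep(y_p^{\textrm{shallow}})$ is at least a positive constant times $3^{\beta_2} d_x^{\textrm{deep}}$.

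The comparison then reduces, after cancelling the common factor $d_x^{\textrm{deep}}$, to requiring $3^{\beta_2}$ to exceed a constant multiple of $L^{\textrm{deep}}$: the shallow lower bound is of order $3^{\beta_2} d_x^{\textrm{deep}}$ while the deep upper bound is of order $d_x^{\textrm{deep}} L^{\textrm{deep}}$. Taking $\log_3$ of this requirement gives $\beta_2\ge \log_3 L^{\textrm{deep}}+O(1)$ at leading order; I would then inflate $\beta_2$ to $O\bigl(\log H\cdot\log d_x^{\textrm{deep}}\cdot\log L^{\textrm{deep}}\bigr)$ so as to dominate all the correction terms $b_1,b_2,c_1,c_2$ simultaneously, and in particular to guarantee $d_x^{\textrm{shallow}}=3^{\beta_2}d_x^{\textrm{deep}}\gg H$ so that $\log_3(1-H/d_x^{\textrm{shallow}})$ stays bounded away from $-\infty$. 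Since both invoked bounds hold for all weights outside a Lebesgue-null set, the strict inequality $sep(y_p^{\textrm{shallow}})>sep(y_{p'}^{\textrm{deep}})$ then holds for generic parameters of both networks, as claimed.

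I expect the main obstacle to be precisely this threshold bookkeeping. Because Theorems~\ref{theorem:depth_efficiency} and~\ref{theorem:width_bound} are stated in the complementary open regimes $L<\log_3 d_x$ and $L>\log_3 d_x$, whereas the construction lands the shallow network exactly on $L^{\textrm{shallow}}=\log_3 d_x^{\textrm{shallow}}$, I must justify applying the depth-efficiency lower bound on the boundary (its bracket converges to $2-\log_3 2>0$ and $3^{L^{\textrm{shallow}}-2}$ is of order $d_x^{\textrm{shallow}}$, so the bound remains meaningful), and then verify carefully that the polylogarithmic $\beta_2$ swamps \emph{every} correction term rather than only the leading ones. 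The remaining algebra — dividing through by $d_x^{\textrm{deep}}$ and checking that $L^{\textrm{shallow}}=\beta_1 L^{\textrm{deep}}+\beta_2<L^{\textrm{deep}}$, so the network is genuinely shallower — is routine.
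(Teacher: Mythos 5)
Your proposal is correct and is essentially the paper's own argument: the paper gives no explicit proof, stating only that the corollary ``follows from theorems~1 and~2,'' and the intended derivation is exactly your combination of the Theorem~1 lower bound for the shallow network with the Theorem~2 upper bound for the deep one, with $\beta_2$ chosen so that $3^{\beta_2}d_x^{\textrm{deep}}$ dominates $2d_x^{\textrm{deep}}L^{\textrm{deep}}$ plus corrections. Your observation that the prescribed shallow network lands exactly on $L^{\textrm{shallow}}=\log_3 d_x^{\textrm{shallow}}$ — and your justification that the grid-tensor lower-bound construction of Theorem~1 is regime-free, so its bound $3^{L-2}\left(\log_3(d_x-H)+a\right)$ remains valid and positive on the boundary while Theorem~2's lower bound becomes vacuous there — is a careful handling of a point the paper glosses over, and your conclusion that $\beta_2=\log_3 L^{\textrm{deep}}+O(1)$ already suffices (well within the stated $O(\log H\cdot\log d_x^{\textrm{deep}}\cdot\log L^{\textrm{deep}})$ envelope) is accurate.
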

	\ifdefined\SQUEEZE \vspace{-1mm} \fi
	
	The above corollary, which follows from theorems~\ref{theorem:depth_efficiency} and~\ref{theorem:width_bound}, shows that the separation rank of a function realized by a self-attention network of arbitrary depth $L>\log_3(d_x)$
	can be surpassed by a shallower network of polynomial width, contrarily to the established behavior for networks of depth  $L<\log_3(d_x)$. 
	
	We leave it as an open conjecture that a polynomially sized shallower network can exactly replicate the operation of a deeper network in this regime. 
	With that, we point out that a variety of results which directly bound different complexity measures of deep networks have been put forward, shedding light on their operation~\citep{montufar2014number,bianchini2014complexity,raghu2017expressive,serra2017bounding,8951658}.
	Bounds on the separation rank have been used to explain the operation of more veteran architectures, and we find them to be particularly relevant in the case of self-attention: this complexity measure quantifies the amount of input inter-dependency induced by the network, directly reflecting a widespread intuition on the success behind the self-attention mechanism.

	\ifdefined\SQUEEZE \vspace{-2mm} \fi
	\tocless\section{Depth-efficiency regimes in common self-attention networks \label{sec:exp}}
	\ifdefined\SQUEEZE \vspace{-2mm} \fi
	In the previous sections, we analyzed a simplified version of self-attention networks (described in  section~\ref{sec:sa}). For this class, we proved the existence of the two different depth-efficiency/\textbf{in}efficiency regimes in self-attention networks, and further quantified the transition point between regimes to be exponential in network width (and accordingly in network size).
	In this section, we demonstrate that our theoretical predictions are manifested in common self-attention networks: the experiments below were conducted over common self-attention architectures which include all operations that were omitted in our theoretical analysis.
	We describe the training setup in section~\ref{sec:exp:setup}, the experiments in section~\ref{sec:exp:exp}, and the projection regrading optimal depth to width ratios (see table~\ref{table:tab}) in section~\ref{sec:exp:proj}.

		\tocless\subsection{The training setup\label{sec:exp:setup}}
	
	We trained common self-attention architectures of depths $L=6,12,18,24,30,36,48$ and varying widths, such that the network sizes range between $10^6$ and $6\cdot10^{8}$ (full details on the widths of the trained architectures are given in the appendix). We trained decoder-only (unidirectional) models, by optimizing the autoregressive log-likelihood of the training examples.
	We used a smaller than usual vocabulary size of $2000$ so that the vocabulary embedding parameters, given by $d_x\cdot V$ for a vocabulary of size $V$, would constitute a small fraction of the learned parameters for all data points. 
	Autoregressive models were shown to work well even on character level vocabularies (\eg,~\citep{peters2018deep}); due to modeling a joint distribution over the text, they are less sensitive to vocabulary size than bidirectional models~\citep{levine2021pmimasking}.

	Our training set was English Wikipedia, BookCorpus and OpenWebText. 
	We report the loss on a held out test set of size~$170$K sequences. 
	Notably, we estimated the variance of the pretraining and evaluation procedure by rerunning $11$ of the trained architectures three times each, and found it to be very low -- the reported test loss is stable up to its third digit.
	The remainder of the training details are given in the appendix. 
	
		\tocless\subsection{Experiments\label{sec:exp:exp}}
		\tocless\subsubsection{Distinct depth-efficiency regimes in self-attention\label{sec:exp:exp:regimes}}
	\begin{figure}
		\centering
		\includegraphics[width=\linewidth]{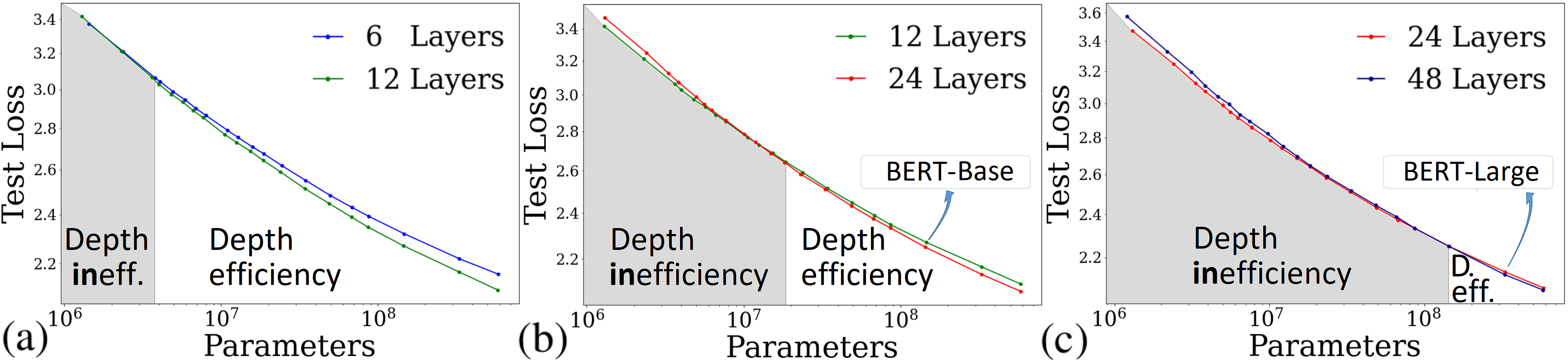}
		\vspace{-5mm} 
		\caption{An experimental validation of the existence of the two depth-efficiency/\textbf{in}efficiency regimes for common self-attention networks.  
			The transition between regimes
			occurs in exponentially larger network sizes as the networks gets deeper, in  agreement with our theory (see figure~\ref{fig:exponential}).
		}\vspace{0mm}
		\label{fig:our_graph}
	\end{figure}

		Figure~\ref{fig:our_graph} shows that the predicted devision into two depth-efficiency/\textbf{in}efficiency regimes indeed takes place in common self-attention architectures. When comparing depths $(L^{\textrm{shallow}},L^{\textrm{deep}})=\{(6,12),(12,24),(24,48)\}$, 
	a qualitatively different depth-efficiency behavior is observed as the network size varies. 
	For smaller network sizes, deepening is not favorable over widening.
	Our theoretical analysis predicts this, showing that when the width of the deeper network is not large enough it can not use its excess layers efficiently.  
	However, when the network's size is increased by widening, a transition into the depth-efficiency regime is clearly demonstrated: for the same parameter budget the deeper network performs better. Once the deeper network becomes wide enough, such that the depth threshold for depth-efficiency surpasses $L^{\textrm{shallow}}$, it is significantly more expressive.
	
	\tocless\subsubsection{Transition between regimes depends exponentially on depth}\label{sec:exp:exp:transition}
	Importantly, beyond a qualitative match to the two predicted depth-efficiency/\textbf{in}efficiency behaviors, the experiments corroborate our prediction for an exponential dependence of the ``depth-efficiency width" --- the width for which a network becomes depth-efficient --- on the network's depth. 
	By quantifying this exponential behavior (figure~\ref{fig:proj}), we attain practical guidelines for depth-to-width parameter allocation in a self-attention network of a given size.  
	
	Per network depth, we examine the width in which it diverges from the subsequent trained depth, 
	 \ie, we examine the following pairs of trained adjacent depths: $(6,12),(12,18),(18,24),(24,30),(30,36),(36,48)$. 
	For each pair, we estimate the shallower network's transition width (marking the crossing between gray and white areas in figure~\ref{fig:our_graph}) as the average of its width in two points: the first point in which the shallower network under-performs in a statistically significant manner (see standard deviation estimation in the appendix), and the point to its left in which the performance of the two is not distinguishable. We take the empirical error of this estimation to be the distance between the two points.

	Our theoretical results in section~\ref{sec:results} predict that the above empirically estimated transition should occur when the shallower network's width $d_x$ is exponential in its depth $L$. Accordingly, we fit a linear dependence of the log of the width on the depth and receive the fit coefficients $(a,b)$: $\log\left(d_x\right)=a+b\cdot L$. 
	The linear fit, shown in  Figure~\hyperref[fig:exponential]{~\ref{fig:exponential}(a)} yields measures of 
	$R^2=0.998$ and $\chi_{\textrm{red}}^2=0.854$ (see further details in the appendix). 
	These measures imply a good compatibility of the theoretically predicted dependence to the measurements, and further reinforce the practical use we make of the fit parameters $a$ and $b$ hereinafter, for predicting the network size for which the regime transition occurs per depth. 
	
		\begin{figure}
		\centering
		\includegraphics[width=\linewidth]{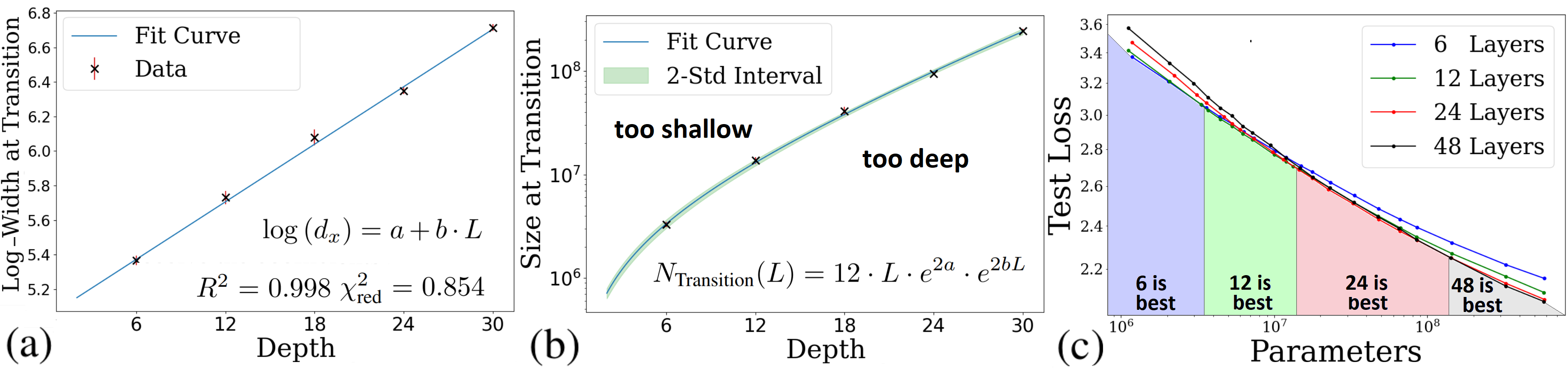}
		\caption{\textbf{(a)} A fit of the predicted exponential dependence of network width on its depth at the transition point between the depth-efficiency/inefficiency regimes.
			The experimental points are marked by black crosses and their empirical errors by red vertical lines.  
			\textbf{(b)} The network size at the transition between regimes $N_{\textrm{Transition}}$ as a function of network depth. The green area marks an interval of $2\Delta N_{\textrm{Transition}}$ as calculated in eq.~\ref{eq:fit} with the fit parameters and their variance given in eq.~\ref{eq:fit_params}.  Architectures to the top-left of the curve are too shallow relative to their size, and can be improved by deepening. \textbf{(c)} 
			The color in each range of network sizes corresponds to the color of the depth reaching the minimal loss in this range. This implies that architectures to the bottom-right of the curve in figure (b) are too deep relative to their size, and can be improved by widening.}\vspace{0mm}
		\label{fig:exponential}
	\end{figure}
	
	Specifically, we insert $d_x=e^a\cdot e^{bL}$ into the dependence 
$N=12\cdot L\cdot d_x^2$ and calculate the transition size and its propagated uncertainty as:
\begin{align}\label{eq:fit}
N_{\textrm{Transition}}(L)&=12\cdot L\cdot e^{2a}\cdot e^{2bL}\\\nonumber
\Delta N_{\textrm{Transition}}(L)&=\sqrt{\left(\nicefrac{dN}{da}\right)^{2}\sigma^2_{a}+\left(\nicefrac{dN}{db}\right)^{2}\sigma^2_{b}+2\left(\nicefrac{dN}{da}\right)\left(\nicefrac{dN}{db}\right)\sigma_{ab}}
\end{align}
with the fit parameters given by:
\begin{align}\label{eq:fit_params}
\begin{pmatrix}
a & b 
\end{pmatrix} ~~~~~&=~~~ 
\begin{pmatrix}
5.039\pm 0.030 & 5.55\cdot 10^{-2}\pm 1.3\cdot 10^{-3}
\end{pmatrix}\\\nonumber
\begin{pmatrix}
\sigma^2_{a} & \sigma_{ab} \\
\sigma_{ab} & \sigma^2_{b}
\end{pmatrix} &=~~~~~~~~~~~ 
\begin{pmatrix}
9.4\cdot 10^{-4}&-3.74 \cdot 10^{-5}\\
-3.74 \cdot 10^{-5}& 1.7\cdot 10^{-6}
\end{pmatrix}
\end{align}
\ifdefined\SQUEEZE \vspace{-2mm} \fi

Figure~\hyperref[fig:exponential]{~\ref{fig:exponential}(b)} shows the empirical transition sizes per depth on top of the projection and its error, calculated  by eq.~\ref{eq:fit} with the fit parameters in eq.~\ref{eq:fit_params}. 
Networks to the left of the curve are too shallow given their parameter budget, and can be improved by deepening at the expense of their width.

	\tocless\subsubsection{``Width-efficiency" in small network sizes}\label{sec:exp:exp:wifth}
	
	Our experiments reveal an empirical phenomenon that was not predicted by our theory. We established in section~\ref{sec:results} that depth does not have an advantage when the width is too small, but our bounds do not separate wider networks from deeper ones in this depth-inefficiency regime. 
	A surprising phenomenon is seen in  figures~\hyperref[fig:our_graph]{~\ref{fig:our_graph}(b,c)}: for small enough network sizes, deeper self-attention networks perform {\textit{worse}} than shallow ones. We leave a theoretical treatment of this regime for future work. 
	
	The above ``width-efficiency" empirical phenomenon leads to an important observation: for a given network size, a certain network can be too shallow, as we predicted theoretically and corroborated empirically above, but it can also be \textbf{too deep}. 
	In other words, the region to the right of the fit curve in figure~\hyperref[fig:exponential]{~\ref{fig:exponential}(b)} includes networks that can be improved by widening at the expense of their depth. 
	This implies that rather than representing a minimal depth per given self-attention network size, the curve in figure~\hyperref[fig:exponential]{~\ref{fig:exponential}(b)} represents the area of an \textbf{optimal depth} per network size. 
	We provide a demonstration of this idea in figure~\hyperref[fig:exponential]{~\ref{fig:exponential}(c)}, which clearly shows that when comparing networks of depths $L=6,12,24,48$, each one would be best to use in a different range of network sizes (the color in each range corresponds to  the best performing depth in that range).

\tocless\subsection{Projecting to larger networks\label{sec:exp:proj}}
	Beyond reflecting our theoretical predictions, the fit in figure~\ref{fig:exponential} can be used to project beyond the scope of our experiments in order to shed light on architecture design decisions made for much larger self-attention networks, like the contemporary huge Transformer-based language models~\citep{GPT3,raffel2019exploring,TuringNLG}. 
	Figure~\ref{fig:proj} shows the extrapolation of the fitted function and the uncertainty up to networks of depth $100$. 
	Notably, despite the uncertainty growing as the scope extends, $\frac{\Delta N_{\textrm{Transition}}(L=100)}{N_{\textrm{Transition}}(L=100)}=0.2$, \ie, the predictions for the optimal network size in the $L=100$ case are likely to be accurate within $20\%$ of the predicted size, yielding meaningful and unforeseen practical implications.
	
		\begin{wrapfigure}{r}{0.6\textwidth}
		\vspace{-5pt}
		\begin{center}
			\includegraphics[scale=0.35,clip=false,trim=0 40 0 20]{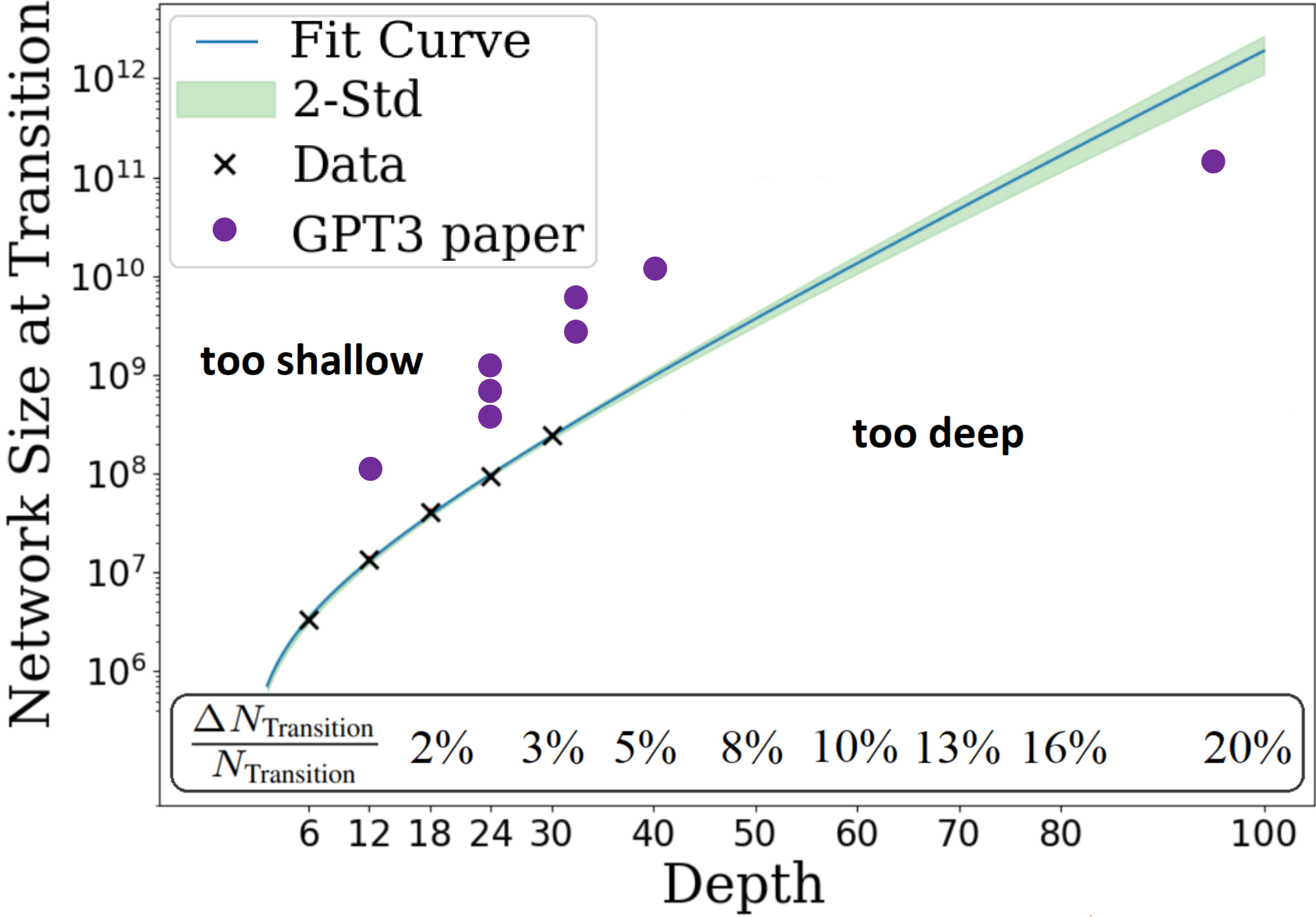}
		\end{center}
		\vspace{9pt}
		\caption{An extrapolation of the optimal size per depth (eq.~\ref{eq:fit}) beyond the scope of our experiments. The purple circles mark the GPT3 experiments detailed in table~\ref{table:tab}. \label{fig:proj}\vspace{-2em}}\end{wrapfigure}
	For example, when examining the architecture of GPT3, the deepest self-attention network trained to date with $96$ layers, we get $N_{\textrm{Transition}}(96)=1.17\pm0.23\cdot 10^{12}$, or over a Trillion parameters. This  places GPT3 with its 175B parameters significantly below our fit, suggesting that it may be too deep given its parameter budget. 
	In fact, the optimal depth  for GPT3's size is predicted to be $L=80$, since  $N_{\textrm{Transition}}(80	)=1.65\pm 0.25\cdot 10^{11}$. 
	Table~\ref{table:tab} includes further suggestion for huge models following our fit, including a suggestion to deepen networks on the left of the curve in figure~\ref{fig:proj}. With high certainty given our experimental data, the optimal model size increase towards 1 Trillion parameter models and beyond is via widening.
	
	\ifdefined\SQUEEZE \vspace{0mm} \fi
	\tocless\section{Discussion \label{sec:discussion}}
	\ifdefined\SQUEEZE \vspace{-2mm} \fi
	An apparent ``depth-\emph{\textbf{in}}efficiency" of self-attention networks was pointed out by prior works~-- in contrast to other successful deep learning architectures, in the case of self-attention there does not seem to be a clear advantage to deepening vs. widening.
	Our theoretical analysis clearly reflects this behavior in one parameter setting, but suggests an important nuance regarding its origins, while predicting a separate ``depth-efficiency" regime in another parameter setting. 
	Rather than an obvious explanation for the observed depth inefficiency, by which the self-attention mechanism does not benefit much from the operation of compounding, our analysis strongly points at the converse:
	self-attention is so effective at integrating its inputs, that it very quickly reaches saturation in the amount of dependencies that can be supported by the representation dimension.
	
	Thus, for early self-attention compounding, we prove a rapid growth in expressiveness with depth, and specifically in the ability to flexibly correlate between any input locations, which can not be accounted for by any reasonable widening.  
	However, our analysis pinpoints a transition in which the capacity of width to support the above rapid growth exhausts.
	Thus, when the width of a self-attention network is not large enough, the above depth-efficiency disappears -- deepening and widening become equivalent in terms of expressiveness. 
	
	We did not find a result which directly upper bounds depth-efficiency in other architecture classes. 
	Works by \cite{sharir2018expressive,PhysRevLett.122.065301} show an \textit{exponential} growth with depth of a measure related to the separation rank in certain classes of convolutional networks. 
	Comparing this with the \textit{double-exponential} growth shown in theorem~\ref{theorem:depth_efficiency} for early self-attention layers, it may be conjectured that convolutional networks seemingly benefit more from depth than self-attention does because their separation rank grows less rapidly, so they do not saturate some width dependent threshold as quickly as self-attention does. 
	We leave these investigations for future work.
	
	The experiments presented in section~\ref{sec:exp} reveal a qualitative and quantitative match to our theoretical predictions. Beyond reinforcing the validity of our theoretical interpretation, our comprehensive experimental setup allowed us to extrapolate and project depth-to-width trade-offs in huge self-attention networks, that are currently being trained as powerful language models. 
	For example, GPT3, the deepest self-attention network trained to date with $96$ layers, has matched this depth with an unprecedented width of $12$K. However, our projections clearly show that for this number of layers the network should be much wider. 
	\textbf{In fact, the logarithmic dependence that we establish between the optimal depth and width clearly dictates that size increase should be mainly via widening from this point ($\mathbf{\sim100B}$ models) onwards.}   
	This is good news from an engineering perspective: width can be increased more efficiently than depth in terms of parallelization.
	The multi-million dollar price tag on these architectures, along with the race to push the envelope towards $1$-Trillion parameter models and beyond,   
	make such informed guidelines an essential ingredient. 
	
	Beyond elucidating the behavior of vanilla self-attention architectures, our work theoretically motivates architectural changes that can provide the next leap in self-attention network expressiveness.   
	By indicating the network width as the limiting factor for depth-efficiency, our analysis encourages the development of methods for increasing network width with low expenses.
	For example, we point at the concept of ShuffleNet~\citep{ma2018shufflenet}, which has proven to be efficient for convolutional networks. They increase the representation dimension while using only a fraction of it for computation in each layer. This way, the computational costs are contained, but the width related theoretical limitations, posed by our work, are relaxed.
	Recently, \citet{fedus2021switch} trained a $1$-Trillion parameter model via a related approach which learns to choose the subset of parameters to apply in each layer.
	Indeed, we view our work as part of an effort to provide timely interpretations as feedback for the tremendous empirical pull in our field.
	
	\section*{Acknowledgments}
	We thank Daniel Jannai for assistance in the experiments, and Jared Kaplan for the permission to use the figure in~\cite{kaplan2020scaling}. This research was supported by the ERC (European Research Council) and the ISF (Israel Science Foundation). Experiments were performed with Cloud TPUs and supported by Google's TensorFlow Research Cloud (TFRC).
	Yoav Levine was supported by the Israel Academy of Sciences Adams fellowship.

	\small{
		\bibliographystyle{plainnat}
		\bibliography{refs}

\begin{thebibliography}{67}
\providecommand{\natexlab}[1]{#1}
\providecommand{\url}[1]{\texttt{#1}}
\expandafter\ifx\csname urlstyle\endcsname\relax
  \providecommand{\doi}[1]{doi: #1}\else
  \providecommand{\doi}{doi: \begingroup \urlstyle{rm}\Url}\fi

\bibitem[Amini et~al.(2012)Amini, Karbasi, and Marvasti]{amini2012low}
Arash Amini, Amin Karbasi, and Farokh Marvasti.
\newblock Low-rank matrix approximation using point-wise operators.
\newblock \emph{IEEE Transactions on Information Theory}, 58\penalty0
  (1):\penalty0 302--310, 2012.

\bibitem[Ba et~al.(2016)Ba, Kiros, and Hinton]{ba2016layer}
Jimmy~Lei Ba, Jamie~Ryan Kiros, and Geoffrey~E Hinton.
\newblock Layer normalization.
\newblock \emph{arXiv preprint arXiv:1607.06450}, 2016.

\bibitem[Bahdanau et~al.(2014)Bahdanau, Cho, and Bengio]{bahdanau2014neural}
Dzmitry Bahdanau, Kyunghyun Cho, and Yoshua Bengio.
\newblock Neural machine translation by jointly learning to align and
  translate.
\newblock \emph{arXiv preprint arXiv:1409.0473}, 2014.

\bibitem[Beylkin and Mohlenkamp(2002)]{beylkin2002numerical}
Gregory Beylkin and Martin~J Mohlenkamp.
\newblock Numerical operator calculus in higher dimensions.
\newblock \emph{Proceedings of the National Academy of Sciences}, 99\penalty0
  (16):\penalty0 10246--10251, 2002.

\bibitem[Beylkin et~al.(2009)Beylkin, Garcke, and
  Mohlenkamp]{beylkin2009multivariate}
Gregory Beylkin, Jochen Garcke, and Martin~J Mohlenkamp.
\newblock Multivariate regression and machine learning with sums of separable
  functions.
\newblock \emph{SIAM Journal on Scientific Computing}, 31\penalty0
  (3):\penalty0 1840--1857, 2009.

\bibitem[Bianchini and Scarselli(2014)]{bianchini2014complexity}
Monica Bianchini and Franco Scarselli.
\newblock On the complexity of neural network classifiers: A comparison between
  shallow and deep architectures.
\newblock \emph{Neural Networks and Learning Systems, IEEE Transactions on},
  25\penalty0 (8):\penalty0 1553--1565, 2014.

\bibitem[Brown et~al.(2020)Brown, Mann, Ryder, Subbiah, Kaplan, Dhariwal,
  Neelakantan, Shyam, Sastry, Askell, et~al.]{GPT3}
Tom~B Brown, Benjamin Mann, Nick Ryder, Melanie Subbiah, Jared Kaplan, Prafulla
  Dhariwal, Arvind Neelakantan, Pranav Shyam, Girish Sastry, Amanda Askell,
  et~al.
\newblock Language models are few-shot learners.
\newblock \emph{arXiv preprint arXiv:2005.14165}, 2020.

\bibitem[Brunner et~al.(2020)Brunner, Liu, Ortiz, Richter, Ciaramita, and
  Wattenhofer]{brunner2020identifiability}
Gino Brunner, Yang Liu, Damian~Pascual Ortiz, Oliver Richter, Massimiliano
  Ciaramita, and Roger Wattenhofer.
\newblock On identifiability in transformers.
\newblock 2020.

\bibitem[Bu et~al.(2020)Bu, Zhang, and Luo]{bu2020depth}
Kaifeng Bu, Yaobo Zhang, and Qingxian Luo.
\newblock Depth-width trade-offs for neural networks via topological entropy.
\newblock \emph{arXiv preprint arXiv:2010.07587}, 2020.

\bibitem[Caron and Traynor(2005)]{caron2005zero}
Richard Caron and Tim Traynor.
\newblock {The zero set of a polynomial}.
\newblock \emph{WSMR Report 05-02}, 2005.

\bibitem[Cheng et~al.(2016)Cheng, Dong, and Lapata]{cheng2016long}
Jianpeng Cheng, Li~Dong, and Mirella Lapata.
\newblock Long short-term memory-networks for machine reading.
\newblock \emph{arXiv preprint arXiv:1601.06733}, 2016.

\bibitem[Clark et~al.(2020)Clark, Luong, Le, and Manning]{ELECTRA}
Kevin Clark, Minh-Thang Luong, Quoc~V. Le, and Christopher~D. Manning.
\newblock Electra: Pre-training text encoders as discriminators rather than
  generators.
\newblock In \emph{International Conference on Learning Representations}, 2020.
\newblock URL \url{https://openreview.net/forum?id=r1xMH1BtvB}.

\bibitem[Cohen and Shashua(2017)]{cohen2017inductive}
Nadav Cohen and Amnon Shashua.
\newblock Inductive bias of deep convolutional networks through pooling
  geometry.
\newblock In \emph{5th International Conference on Learning Representations
  (ICLR)}, 2017.

\bibitem[Cohen et~al.(2016)Cohen, Sharir, and Shashua]{cohen2016expressive}
Nadav Cohen, Or~Sharir, and Amnon Shashua.
\newblock On the expressive power of deep learning: A tensor analysis.
\newblock \emph{Conference On Learning Theory (COLT)}, 2016.

\bibitem[Daniely(2017)]{daniely2017depth}
Amit Daniely.
\newblock Depth separation for neural networks.
\newblock \emph{arXiv preprint arXiv:1702.08489}, 2017.

\bibitem[de~Br{\'e}bisson and Vincent(2016)]{de2016cheap}
Alexandre de~Br{\'e}bisson and Pascal Vincent.
\newblock A cheap linear attention mechanism with fast lookups and fixed-size
  representations.
\newblock \emph{arXiv preprint arXiv:1609.05866}, 2016.

\bibitem[Devlin et~al.(2019)Devlin, Chang, Lee, and Toutanova]{BERT}
Jacob Devlin, Ming{-}Wei Chang, Kenton Lee, and Kristina Toutanova.
\newblock {BERT:} pre-training of deep bidirectional transformers for language
  understanding.
\newblock In Jill Burstein, Christy Doran, and Thamar Solorio, editors,
  \emph{Proceedings of the 2019 Conference of the North American Chapter of the
  Association for Computational Linguistics: Human Language Technologies,
  {NAACL-HLT} 2019, Minneapolis, MN, USA, June 2-7, 2019, Volume 1 (Long and
  Short Papers)}, pages 4171--4186. Association for Computational Linguistics,
  2019.
\newblock \doi{10.18653/v1/n19-1423}.
\newblock URL \url{https://doi.org/10.18653/v1/n19-1423}.

\bibitem[Eldan and Shamir(2016)]{eldan2016power}
Ronen Eldan and Ohad Shamir.
\newblock The power of depth for feedforward neural networks.
\newblock In \emph{Conference on learning theory}, pages 907--940, 2016.

\bibitem[Fan et~al.(2020)Fan, Lai, and Wang]{fan2020quasi}
Fenglei Fan, Rongjie Lai, and Ge~Wang.
\newblock Quasi-equivalence of width and depth of neural networks.
\newblock 2020.

\bibitem[Fedus et~al.(2021)Fedus, Zoph, and Shazeer]{fedus2021switch}
William Fedus, Barret Zoph, and Noam Shazeer.
\newblock Switch transformers: Scaling to trillion parameter models with simple
  and efficient sparsity.
\newblock \emph{arXiv preprint arXiv:2101.03961}, 2021.

\bibitem[Hackbusch(2006)]{hackbusch2006efficient}
Wolfgang Hackbusch.
\newblock On the efficient evaluation of coalescence integrals in population
  balance models.
\newblock \emph{Computing}, 78\penalty0 (2):\penalty0 145--159, 2006.

\bibitem[Hackbusch(2012)]{hackbusch2012tensor}
Wolfgang Hackbusch.
\newblock \emph{Tensor spaces and numerical tensor calculus}, volume~42.
\newblock Springer Science \& Business Media, 2012.

\bibitem[Hardt and Ma(2016)]{hardt2016identity}
Moritz Hardt and Tengyu Ma.
\newblock Identity matters in deep learning.
\newblock \emph{arXiv preprint arXiv:1611.04231}, 2016.

\bibitem[Hardy et~al.(1952)Hardy, Littlewood, and
  P{\'o}lya]{hardy1952inequalities}
Godfrey~Harold Hardy, John~Edensor Littlewood, and George P{\'o}lya.
\newblock \emph{Inequalities}.
\newblock Cambridge university press, 1952.

\bibitem[Harrison et~al.(2003)Harrison, Fann, Yanai, and
  Beylkin]{harrison2003multiresolution}
Robert~J Harrison, George~I Fann, Takeshi Yanai, and Gregory Beylkin.
\newblock Multiresolution quantum chemistry in multiwavelet bases.
\newblock In \emph{Computational Science-ICCS 2003}, pages 103--110. Springer,
  2003.

\bibitem[He et~al.(2016)He, Zhang, Ren, and Sun]{he2016deep}
Kaiming He, Xiangyu Zhang, Shaoqing Ren, and Jian Sun.
\newblock Deep residual learning for image recognition.
\newblock In \emph{Proceedings of the IEEE Conference on Computer Vision and
  Pattern Recognition}, pages 770--778, 2016.

\bibitem[{Inoue}(2019)]{8951658}
K.~{Inoue}.
\newblock Expressive numbers of two or more hidden layer relu neural networks.
\newblock In \emph{2019 Seventh International Symposium on Computing and
  Networking Workshops (CANDARW)}, pages 129--135, 2019.

\bibitem[Jain and Wallace(2019)]{AttentionIsNotExplanation}
Sarthak Jain and Byron~C. Wallace.
\newblock Attention is not explanation.
\newblock In Jill Burstein, Christy Doran, and Thamar Solorio, editors,
  \emph{Proceedings of the 2019 Conference of the North American Chapter of the
  Association for Computational Linguistics: Human Language Technologies,
  {NAACL-HLT} 2019, Minneapolis, MN, USA, June 2-7, 2019, Volume 1 (Long and
  Short Papers)}, pages 3543--3556. Association for Computational Linguistics,
  2019.
\newblock \doi{10.18653/v1/n19-1357}.
\newblock URL \url{https://doi.org/10.18653/v1/n19-1357}.

\bibitem[Kaplan et~al.(2020)Kaplan, McCandlish, Henighan, Brown, Chess, Child,
  Gray, Radford, Wu, and Amodei]{kaplan2020scaling}
Jared Kaplan, Sam McCandlish, Tom Henighan, Tom~B Brown, Benjamin Chess, Rewon
  Child, Scott Gray, Alec Radford, Jeffrey Wu, and Dario Amodei.
\newblock Scaling laws for neural language models.
\newblock \emph{arXiv preprint arXiv:2001.08361}, 2020.

\bibitem[Kawaguchi(2016)]{kawaguchi2016deep}
Kenji Kawaguchi.
\newblock Deep learning without poor local minima.
\newblock In \emph{Advances in neural information processing systems}, pages
  586--594, 2016.

\bibitem[Khrulkov et~al.(2018)Khrulkov, Novikov, and
  Oseledets]{khrulkov2018expressive}
Valentin Khrulkov, Alexander Novikov, and Ivan Oseledets.
\newblock Expressive power of recurrent neural networks.
\newblock In \emph{6th International Conference on Learning Representations
  (ICLR)}, 2018.

\bibitem[Levine et~al.(2018{\natexlab{a}})Levine, Sharir, Ziv, and
  Shashua]{levine2018benefits}
Yoav Levine, Or~Sharir, Alon Ziv, and Amnon Shashua.
\newblock {Benefits of depth for long-term memory of recurrent networks}.
\newblock \emph{(ICLR 2018) International Conference on Learning
  Representations workshop}, 2018{\natexlab{a}}.

\bibitem[Levine et~al.(2018{\natexlab{b}})Levine, Yakira, Cohen, and
  Shashua]{levine2018deep}
Yoav Levine, David Yakira, Nadav Cohen, and Amnon Shashua.
\newblock Deep learning and quantum entanglement: Fundamental connections with
  implications to network design.
\newblock In \emph{6th International Conference on Learning Representations
  (ICLR)}, 2018{\natexlab{b}}.

\bibitem[Levine et~al.(2019)Levine, Sharir, Cohen, and
  Shashua]{PhysRevLett.122.065301}
Yoav Levine, Or~Sharir, Nadav Cohen, and Amnon Shashua.
\newblock Quantum entanglement in deep learning architectures.
\newblock \emph{Phys. Rev. Lett.}, 122:\penalty0 065301, Feb 2019.
\newblock \doi{10.1103/PhysRevLett.122.065301}.
\newblock URL \url{https://link.aps.org/doi/10.1103/PhysRevLett.122.065301}.

\bibitem[Levine et~al.(2021)Levine, Lenz, Lieber, Abend, Leyton-Brown,
  Tennenholtz, and Shoham]{levine2021pmimasking}
Yoav Levine, Barak Lenz, Opher Lieber, Omri Abend, Kevin Leyton-Brown, Moshe
  Tennenholtz, and Yoav Shoham.
\newblock Pmi-masking: Principled masking of correlated spans.
\newblock In \emph{International Conference on Learning Representations}, 2021.
\newblock URL \url{https://openreview.net/forum?id=3Aoft6NWFej}.

\bibitem[Lin et~al.(2017)Lin, Feng, Santos, Yu, Xiang, Zhou, and
  Bengio]{lin2017structured}
Zhouhan Lin, Minwei Feng, Cicero Nogueira~dos Santos, Mo~Yu, Bing Xiang, Bowen
  Zhou, and Yoshua Bengio.
\newblock A structured self-attentive sentence embedding.
\newblock \emph{arXiv preprint arXiv:1703.03130}, 2017.

\bibitem[Liu et~al.(2019)Liu, Ott, Goyal, Du, Joshi, Chen, Levy, Lewis,
  Zettlemoyer, and Stoyanov]{RoBERTa}
Yinhan Liu, Myle Ott, Naman Goyal, Jingfei Du, Mandar Joshi, Danqi Chen, Omer
  Levy, Mike Lewis, Luke Zettlemoyer, and Veselin Stoyanov.
\newblock Roberta: {A} robustly optimized {BERT} pretraining approach.
\newblock \emph{CoRR}, abs/1907.11692, 2019.
\newblock URL \url{http://arxiv.org/abs/1907.11692}.

\bibitem[Lu et~al.(2017)Lu, Pu, Wang, Hu, and Wang]{lu2017expressive}
Zhou Lu, Hongming Pu, Feicheng Wang, Zhiqiang Hu, and Liwei Wang.
\newblock The expressive power of neural networks: A view from the width.
\newblock In \emph{Advances in neural information processing systems}, pages
  6231--6239, 2017.

\bibitem[Ma et~al.(2018)Ma, Zhang, Zheng, and Sun]{ma2018shufflenet}
Ningning Ma, Xiangyu Zhang, Hai-Tao Zheng, and Jian Sun.
\newblock Shufflenet v2: Practical guidelines for efficient cnn architecture
  design.
\newblock In \emph{Proceedings of the European conference on computer vision
  (ECCV)}, pages 116--131, 2018.

\bibitem[Merity et~al.(2016)Merity, Xiong, Bradbury, and
  Socher]{merity2016pointer}
Stephen Merity, Caiming Xiong, James Bradbury, and Richard Socher.
\newblock Pointer sentinel mixture models.
\newblock \emph{arXiv preprint arXiv:1609.07843}, 2016.

\bibitem[Michel et~al.(2019)Michel, Levy, and Neubig]{michel2019sixteen}
Paul Michel, Omer Levy, and Graham Neubig.
\newblock Are sixteen heads really better than one?
\newblock In \emph{Advances in Neural Information Processing Systems}, pages
  14014--14024, 2019.

\bibitem[Montufar et~al.(2014)Montufar, Pascanu, Cho, and
  Bengio]{montufar2014number}
Guido~F Montufar, Razvan Pascanu, Kyunghyun Cho, and Yoshua Bengio.
\newblock On the number of linear regions of deep neural networks.
\newblock In \emph{Advances in neural information processing systems}, pages
  2924--2932, 2014.

\bibitem[Nguyen et~al.(2020)Nguyen, Raghu, and Kornblith]{nguyen2020wide}
Thao Nguyen, Maithra Raghu, and Simon Kornblith.
\newblock Do wide and deep networks learn the same things? uncovering how
  neural network representations vary with width and depth.
\newblock \emph{arXiv preprint arXiv:2010.15327}, 2020.

\bibitem[Parikh et~al.(2016)Parikh, T{\"a}ckstr{\"o}m, Das, and
  Uszkoreit]{parikh2016decomposable}
Ankur~P Parikh, Oscar T{\"a}ckstr{\"o}m, Dipanjan Das, and Jakob Uszkoreit.
\newblock A decomposable attention model for natural language inference.
\newblock \emph{arXiv preprint arXiv:1606.01933}, 2016.

\bibitem[Paulus et~al.(2017)Paulus, Xiong, and Socher]{paulus2017deep}
Romain Paulus, Caiming Xiong, and Richard Socher.
\newblock A deep reinforced model for abstractive summarization.
\newblock \emph{arXiv preprint arXiv:1705.04304}, 2017.

\bibitem[Peters et~al.(2018)Peters, Neumann, Iyyer, Gardner, Clark, Lee, and
  Zettlemoyer]{peters2018deep}
Matthew~E Peters, Mark Neumann, Mohit Iyyer, Matt Gardner, Christopher Clark,
  Kenton Lee, and Luke Zettlemoyer.
\newblock Deep contextualized word representations.
\newblock \emph{arXiv preprint arXiv:1802.05365}, 2018.

\bibitem[Press et~al.(2019)Press, Smith, and Levy]{press2019improving}
Ofir Press, Noah~A Smith, and Omer Levy.
\newblock Improving transformer models by reordering their sublayers.
\newblock \emph{arXiv preprint arXiv:1911.03864}, 2019.

\bibitem[Pruthi et~al.(2019)Pruthi, Gupta, Dhingra, Neubig, and
  Lipton]{pruthi2019learning}
Danish Pruthi, Mansi Gupta, Bhuwan Dhingra, Graham Neubig, and Zachary~C
  Lipton.
\newblock Learning to deceive with attention-based explanations.
\newblock \emph{arXiv preprint arXiv:1909.07913}, 2019.

\bibitem[Radford et~al.(2019)Radford, Wu, Child, Luan, Amodei, and
  Sutskever]{GPT2}
Alec Radford, Jeff Wu, Rewon Child, David Luan, Dario Amodei, and Ilya
  Sutskever.
\newblock Language models are unsupervised multitask learners.
\newblock 2019.

\bibitem[Rae et~al.(2018)Rae, Dyer, Dayan, and Lillicrap]{rae2018fast}
Jack~W Rae, Chris Dyer, Peter Dayan, and Timothy~P Lillicrap.
\newblock Fast parametric learning with activation memorization.
\newblock \emph{arXiv preprint arXiv:1803.10049}, 2018.

\bibitem[Raffel et~al.(2019{\natexlab{a}})Raffel, Shazeer, Roberts, Lee,
  Narang, Matena, Zhou, Li, and Liu]{T5}
Colin Raffel, Noam Shazeer, Adam Roberts, Katherine Lee, Sharan Narang, Michael
  Matena, Yanqi Zhou, Wei Li, and Peter~J Liu.
\newblock Exploring the limits of transfer learning with a unified text-to-text
  transformer.
\newblock \emph{arXiv preprint arXiv:1910.10683}, 2019{\natexlab{a}}.

\bibitem[Raffel et~al.(2019{\natexlab{b}})Raffel, Shazeer, Roberts, Lee,
  Narang, Matena, Zhou, Li, and Liu]{raffel2019exploring}
Colin Raffel, Noam Shazeer, Adam Roberts, Katherine Lee, Sharan Narang, Michael
  Matena, Yanqi Zhou, Wei Li, and Peter~J Liu.
\newblock Exploring the limits of transfer learning with a unified text-to-text
  transformer.
\newblock \emph{arXiv preprint arXiv:1910.10683}, 2019{\natexlab{b}}.

\bibitem[Raghu et~al.(2017)Raghu, Poole, Kleinberg, Ganguli, and
  Dickstein]{raghu2017expressive}
Maithra Raghu, Ben Poole, Jon Kleinberg, Surya Ganguli, and Jascha~Sohl
  Dickstein.
\newblock On the expressive power of deep neural networks.
\newblock In \emph{Proceedings of the 34th International Conference on Machine
  Learning-Volume 70}, pages 2847--2854. JMLR. org, 2017.

\bibitem[Richter and Wattenhofer(2020)]{richter2020normalized}
Oliver Richter and Roger Wattenhofer.
\newblock Normalized attention without probability cage.
\newblock \emph{arXiv preprint arXiv:2005.09561}, 2020.

\bibitem[Rosset(2020)]{TuringNLG}
Corby Rosset.
\newblock {Turing-NLG}: A 17-billion-parameter language model by microsoft.
\newblock
  \url{https://www.microsoft.com/en-us/research/blog/turing-nlg-a-17-billion-parameter-language-model-by-microsoft/},
  2020.
\newblock Accessed: 2020-04-12.

\bibitem[Saxe et~al.(2013)Saxe, McClelland, and Ganguli]{saxe2013exact}
Andrew~M Saxe, James~L McClelland, and Surya Ganguli.
\newblock Exact solutions to the nonlinear dynamics of learning in deep linear
  neural networks.
\newblock \emph{arXiv preprint arXiv:1312.6120}, 2013.

\bibitem[Serra et~al.(2017)Serra, Tjandraatmadja, and
  Ramalingam]{serra2017bounding}
Thiago Serra, Christian Tjandraatmadja, and Srikumar Ramalingam.
\newblock Bounding and counting linear regions of deep neural networks.
\newblock \emph{arXiv preprint arXiv:1711.02114}, 2017.

\bibitem[Sharir and Shashua(2018)]{sharir2018expressive}
Or~Sharir and Amnon Shashua.
\newblock On the expressive power of overlapping architectures of deep
  learning.
\newblock In \emph{6th International Conference on Learning Representations
  (ICLR)}, 2018.

\bibitem[Sharir et~al.(2016)Sharir, Tamari, Cohen, and
  Shashua]{sharirtractable}
Or~Sharir, Ronen Tamari, Nadav Cohen, and Amnon Shashua.
\newblock Tractable generative convolutional arithmetic circuits.
\newblock 2016.

\bibitem[Simonyan and Zisserman(2014)]{simonyan2014very}
Karen Simonyan and Andrew Zisserman.
\newblock Very deep convolutional networks for large-scale image recognition.
\newblock \emph{arXiv preprint arXiv:1409.1556}, 2014.

\bibitem[Tan and Le(2019)]{tan2019efficientnet}
Mingxing Tan and Quoc~V Le.
\newblock Efficientnet: Rethinking model scaling for convolutional neural
  networks.
\newblock \emph{arXiv preprint arXiv:1905.11946}, 2019.

\bibitem[Vaswani et~al.(2017)Vaswani, Shazeer, Parmar, Uszkoreit, Jones, Gomez,
  Kaiser, and Polosukhin]{Transformer}
Ashish Vaswani, Noam Shazeer, Niki Parmar, Jakob Uszkoreit, Llion Jones,
  Aidan~N. Gomez, Lukasz Kaiser, and Illia Polosukhin.
\newblock Attention is all you need.
\newblock In Isabelle Guyon, Ulrike von Luxburg, Samy Bengio, Hanna~M. Wallach,
  Rob Fergus, S.~V.~N. Vishwanathan, and Roman Garnett, editors, \emph{Advances
  in Neural Information Processing Systems 30: Annual Conference on Neural
  Information Processing Systems 2017, 4-9 December 2017, Long Beach, CA,
  {USA}}, pages 5998--6008, 2017.
\newblock URL \url{http://papers.nips.cc/paper/7181-attention-is-all-you-need}.

\bibitem[Veit et~al.(2016)Veit, Wilber, and Belongie]{veit2016residual}
Andreas Veit, Michael~J Wilber, and Serge Belongie.
\newblock Residual networks behave like ensembles of relatively shallow
  networks.
\newblock In \emph{Advances in neural information processing systems}, pages
  550--558, 2016.

\bibitem[Wang et~al.(2020)Wang, Zhou, Chen, Hu, and Chen]{wang2020off}
Chengwei Wang, Tengfei Zhou, Chen Chen, Tianlei Hu, and Gang Chen.
\newblock Off-policy recommendation system without exploration.
\newblock In \emph{Pacific-Asia Conference on Knowledge Discovery and Data
  Mining}, pages 16--27. Springer, 2020.

\bibitem[Wu et~al.(2019)Wu, Shen, and Van Den~Hengel]{wu2019wider}
Zifeng Wu, Chunhua Shen, and Anton Van Den~Hengel.
\newblock Wider or deeper: Revisiting the resnet model for visual recognition.
\newblock \emph{Pattern Recognition}, 90:\penalty0 119--133, 2019.

\bibitem[Yang et~al.(2019)Yang, Dai, Yang, Carbonell, Salakhutdinov, and
  Le]{XLNet}
Zhilin Yang, Zihang Dai, Yiming Yang, Jaime~G. Carbonell, Ruslan Salakhutdinov,
  and Quoc~V. Le.
\newblock Xlnet: Generalized autoregressive pretraining for language
  understanding.
\newblock In Hanna~M. Wallach, Hugo Larochelle, Alina Beygelzimer, Florence
  d'Alch{\'{e}}{-}Buc, Emily~B. Fox, and Roman Garnett, editors, \emph{Advances
  in Neural Information Processing Systems 32: Annual Conference on Neural
  Information Processing Systems 2019, NeurIPS 2019, 8-14 December 2019,
  Vancouver, BC, Canada}, pages 5754--5764, 2019.
\newblock URL
  \url{http://papers.nips.cc/paper/8812-xlnet-generalized-autoregressive-pretraining-for-language-understanding}.

\bibitem[Zagoruyko and Komodakis(2016)]{zagoruyko2016wide}
Sergey Zagoruyko and Nikos Komodakis.
\newblock Wide residual networks.
\newblock \emph{arXiv preprint arXiv:1605.07146}, 2016.

\end{thebibliography}
	}
	
	\clearpage
	\appendix
	\tableofcontents
	\section{Upper bounds on the separation rank}
	\subsection{The function realized by a deep multi-headed self-attention network}\label{sec:sa_function}
	In this subsection, we prove facts  on the general structure of the function realized by the analyzed self-attention architecture that will be of use to us in the upcoming proofs.
	For a cleaner presentation, we will rewrite eq.~\eqref{eq:our_layer} in vectorized notation:
	\begin{align}\label{eq:our_layer_matrix_notations}
	Y=\sum_{h=1}^{H}W^{\textrm{O},h}W^{\textrm{V},h}XX^{T}\left(W^{\textrm{K},h}\right)^{T}W^{\textrm{Q},h}X
	\end{align}
	where $X,Y\left(X\right)\in\mathbb{R}^{d_{x}\times N}$ denote matrices
	respectively holding $\x^{j}$,$\y^{j}\left(\x^{1},...,\x^{N}\right)$ in their $j$'th column.
	Similarly treating eq.~\eqref{eq:our_network}, we will denote by $Y^{L,d_{x},H,\Theta}\left(X\right)\in\mathbb{R}^{d_{x}\times N}$
	the matrix holding $y^{j,L,d_{x},H,\Theta}\left(x^{1},...,x^{N}\right)$  in its $j$'th column.
	
	We begin by proving a lemma that reveals the structure of  $\mathbf{g}^L$ presented in eq.~\eqref{eq:our_network}:
	\begin{lemma}
		\label{lemma:sa_abstract_structre}Defining $C\left(L\right)\coloneqq\frac{3^{L}-1}{2}$, any depth $L$
		composition of the self-attention layers defined in eq.~\eqref{eq:our_layer} can be written as:
		
		\begin{align}\label{eq:sa_abstract_structre}
		Y^{L,d_{x},H,\Theta}=\sum_{h\in\left[H\right]^{\left[C\left(L\right)\right]}}B^{\left(0,h\right)T}M^{\left(1,h\right)}\cdots M^{\left(C\left(L\right),h\right)}A^{\left(0,h\right)}X
		\end{align}
		
		where $\forall h\in\left[H\right]^{\left[C\right]}\,0\leq c\leq C\left(L\right):M^{\left(c,h\right)}=A^{\left(c,h\right)}XX^{T}B^{\left(c,h\right)T}$
		and $A^{\left(c,h\right)},B^{\left(c,h\right)}\in\mathbb{R}^{d_{a}\times d_{x}}$.
	\end{lemma}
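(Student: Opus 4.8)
The plan is to prove the lemma by induction on the depth $L$, at each step peeling off the \emph{last} (output-side) layer so that the inductive hypothesis is applied to a generic input matrix $X$. Write $C:=C(L)=\frac{3^{L}-1}{2}$ throughout. For the base case $L=1$ we have $C(1)=1$ and $[H]^{[1]}=[H]$, so eq.~\eqref{eq:sa_abstract_structre} reduces to a single-head-indexed sum of chains $B^{(0,h)T}A^{(1,h)}XX^{T}B^{(1,h)T}A^{(0,h)}X$; matching this against the single layer in eq.~\eqref{eq:our_layer_matrix_notations} term by term yields the identifications $B^{(0,h)}=(W^{\textrm{O},h})^{T}$, $A^{(1,h)}=W^{\textrm{V},h}$, $B^{(1,h)}=W^{\textrm{K},h}$, $A^{(0,h)}=W^{\textrm{Q},h}$, all of which lie in $\R^{d_{a}\times d_{x}}$ as required.

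For the inductive step I would write the depth-$(L+1)$ network as one application of eq.~\eqref{eq:our_layer_matrix_notations}, with fresh layer-$(L+1)$ weights, to the depth-$L$ output $Y^{L}:=Y^{L,d_{x},H,\Theta}$, so that $Y^{L+1}=\sum_{h=1}^{H}W^{\textrm{O},h}W^{\textrm{V},h}\,Y^{L}\,(Y^{L})^{T}(W^{\textrm{K},h})^{T}W^{\textrm{Q},h}\,Y^{L}$. Substituting the inductive form for each of the three copies of $Y^{L}$ and denoting by $Q_{h'}=B^{(0,h')T}M^{(1,h')}\cdots M^{(C,h')}A^{(0,h')}X$ the summand indexed by $h'\in[H]^{[C]}$, every term of $Y^{L+1}$ becomes $W^{\textrm{O},h}W^{\textrm{V},h}\,Q_{h_1}Q_{h_2}^{T}(W^{\textrm{K},h})^{T}W^{\textrm{Q},h}\,Q_{h_3}$. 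The combinatorial heart of the argument is the identity $C(L+1)=3C(L)+1$ together with the bijection $[H]^{[C(L+1)]}\cong [H]\times([H]^{[C(L)]})^{3}$, under which the new summation index $(h,h_1,h_2,h_3)$ is exactly a head assignment to the $C(L+1)$ slots of the target chain, reproducing the single outer sum of eq.~\eqref{eq:sa_abstract_structre}.

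The crux is then to reorganize each such term into the canonical chain $\tilde B^{(0)T}\tilde M^{(1)}\cdots \tilde M^{(C(L+1))}\tilde A^{(0)}X$ by tracking the $XX^{T}$ blocks in order. The $C$ blocks of $Q_{h_1}$, the $C$ blocks of $Q_{h_2}^{T}$ (each $M^{(c,h_2)T}=B^{(c,h_2)}XX^{T}A^{(c,h_2)T}$ is again of the admissible form), and the $C$ blocks of $Q_{h_3}$ supply $3C$ of the required factors; the single extra factor arises at the junction where $Q_{h_1}$ ends in $A^{(0,h_1)}X$ and $Q_{h_2}^{T}$ begins with $X^{T}A^{(0,h_2)T}$, producing a genuinely new block $A^{(0,h_1)}XX^{T}A^{(0,h_2)T}$. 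Crucially, the other junction behaves differently: between $Q_{h_2}^{T}$ and $Q_{h_3}$ the trailing $X^{T}$ and leading $X$ are separated by $(W^{\textrm{K},h})^{T}W^{\textrm{Q},h}$ and the boundary matrices, so no new $XX^{T}$ block forms there; instead these weight matrices are absorbed into the adjacent $\tilde A,\tilde B$. Likewise $W^{\textrm{O},h}W^{\textrm{V},h}$ is absorbed into $\tilde B^{(0)T}$ at the left end, and $A^{(0,h_3)}X$ supplies the trailing $\tilde A^{(0)}X$. I would close by verifying factor by factor that every resulting $\tilde A^{(c)},\tilde B^{(c)}$ is a product of original $\R^{d_a\times d_x}$ matrices and fresh weights that again lands in $\R^{d_a\times d_x}$ (each absorbed block always passes through a $d_a$-dimensional factor), and that the total $X$-count $3(2C(L)+1)=2C(L+1)+1$ matches the target chain exactly.

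I expect the main obstacle to be precisely this asymmetry between the two junctions: one must check that the $Q_{h_1}$--$Q_{h_2}^{T}$ junction contributes the unique ``$+1$'' block while the $Q_{h_2}^{T}$--$Q_{h_3}$ junction merely reshuffles matrices into the $\tilde A,\tilde B$ slots, and that all dimension and rank constraints $\tilde A^{(c)},\tilde B^{(c)}\in\R^{d_a\times d_x}$ are preserved throughout the absorption. Everything else is routine bookkeeping of matrix products.
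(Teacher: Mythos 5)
Your proposal is correct and takes essentially the same route as the paper's proof: induction on $L$ that peels off the final layer, substitution of the inductive form into the three copies of $Y^{L}$, unification of the four head sums via $C(L+1)=3C(L)+1$, with the $Q_{h_1}$--$Q_{h_2}^{T}$ junction supplying the single new block $A^{(0,h_1)}XX^{T}A^{(0,h_2)T}$ while $W^{\textrm{O},h}W^{\textrm{V},h}$ and $(W^{\textrm{K},h})^{T}W^{\textrm{Q},h}$ are absorbed into adjacent $\R^{d_a\times d_x}$ factors. The paper's eq.~\eqref{eq:sa_abstract_structr_induction_step} carries out exactly the asymmetric junction bookkeeping you describe (including the admissibility of the transposed blocks $M^{(c,h_2)T}$), so there is no gap.
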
	
	\begin{proof}
		By Induction on $L$. Base case:
		
		\begin{align*}
		Y^{\left(1\right)}\left(X\right)=\sum_{h=1}^{H}\underbrace{W^{\textrm{O},h}}_{B^{T}}\underbrace{W^{\textrm{V},h}XX^{T}\left(W^{\textrm{K},h}\right)^{T}}_{M}\underbrace{W^{\textrm{Q},h}}_{A}X
		\end{align*}
		\begin{align*}
		Y^{\left(L+1\right)}\left(X\right)=\sum_{h=1}^{H}W^{\textrm{O},h}W^{\textrm{V},h}Y^{\left(L\right)}\left(X\right)Y^{\left(L\right)}\left(X\right)^{T}\left(W^{\textrm{K},h}\right)^{T}W^{\textrm{Q},h}Y^{\left(L\right)}\left(X\right)
		\end{align*}
		Now, substituting in the induction hypothesis on the structure of  $Y^{\left(L\right)}\left(X\right)$ yields:
		\begin{align*}
		& =\sum_{h=1}^{H}W^{\textrm{O},h}W^{\textrm{V},h}\left(\sum_{h_{1}\in\left[H\right]^{\left[C\left(L\right)\right]}}B^{\left(0,h_{1}\right)T}M^{\left(1,h_{1}\right)}\cdots M^{\left(C\left(L\right),h_{1}\right)}A^{\left(0,h_{1}\right)}X\right)\\
		& \left(\sum_{h_{2}\in\left[H\right]^{\left[C\left(L\right)\right]}}X^{T}A^{\left(0,h_{2}\right)T}M^{\left(C\left(L\right),h_{2}\right)T}\cdots M^{\left(1,h_{2}\right)T}B^{\left(0,h_{2}\right)}\right)\left(W^{\textrm{K},h}\right)^{T}W^{\textrm{Q},h}\\
		& \left(\sum_{h_{3}\in\left[H\right]^{\left[C\left(L\right)\right]}}^{H}B^{\left(0,h_{3}\right)T}M^{\left(1,h_{3}\right)}\cdots M^{\left(C\left(L\right),h_{3}\right)}A^{\left(0,h_{3}\right)}X\right)
		\end{align*}
		Finally unifying the summations over $h,h_{1}h_{2},h_{3}$ to single
		sum over $\left[H\right]^{\left[C\left(L\right)\cdot3+1=C\left(L+1\right)\right]}$ gives
		\begin{align}\label{eq:sa_abstract_structr_induction_step}
		\sum_{h\in\left[H\right]^{\left[C\left(L+1\right)\right]}}\underbrace{W^{\textrm{O},h}W^{\textrm{V},h}B^{\left(0,h\right)T}}_{\in\mathbb{R}^{d_{x}\times d_{a}}}M^{\left(1,h\right)}\cdots M^{\left(C\left(L\right),h\right)}\underbrace{A^{\left(0,h\right)}XX^{T}A^{\left(0,h\right)T}}_{\text{in the desired form of }M}M^{\left(C\left(L\right),h\right)T}\cdots M^{\left(2,h\right)T}\\
		\underbrace{M^{\left(1,h\right)T}B^{\left(0,h\right)}\left(W^{\textrm{K},h\left(0\right)}\right)^{T}W^{\textrm{Q},h}B^{\left(0,h\right)T}}_{\text{in the desired form of }M}M^{\left(1,h\right)}\cdots M^{\left(C\left(L\right),h\right)}A^{\left(0,h\right)}X
		\end{align}
		Note that the number of $M$ units, each with a summation on a different index $j\in[N]$, is $3C\left(L\right)+1=C\left(L+1\right)$, implying $C\left(L\right)=\frac{3^{L}-1}{2}$ as needed.
	\end{proof}
	\begin{corollary}
		Defining $C\left(L\right)\coloneqq\frac{3^{L}-1}{2}$, any depth $L$
		composition of $L$ self-attention layers can be written as:
		\begin{align}\label{eq:gl_explicit_form}
		\y^{i, L, d_x, H, \Theta}\left(\x^1,...,\x^N\right)=\sum_{j_1,...,j_C=1}^{N}\mathbf{g}^L\left(\x^i,\x^{j_1},...,\x^{j_C}\right)
		\end{align}	
		Where
		\begin{align*}
		\mathbf{g}^L\left(\x^i,\x^{j_1},...,\x^{j_C}\right)\coloneqq\sum_{h\in\left[H\right]^{\left[C\left(L\right)\right]}}\sum_{r_{1},\dots,r_{C\left(L\right)+1}=1}^{d_{a}} \left[B^{\left(0,h\right)}\right]_{r_{1},p}\left(\prod_{c=1}^{C\left(L\right)}\left\langle A_{r_{c}}^{\left(c,h\right)},\x^{\left(j_{c}\right)}\right\rangle \left\langle B_{r_{c+1}}^{\left(c,h\right)},\x^{\left(j_{c}\right)}\right\rangle \right)\left\langle A_{r_{C\left(L\right)+1}}^{\left(0,h\right)},\x^{\left(i\right)}\right\rangle 
		\end{align*}
	\end{corollary}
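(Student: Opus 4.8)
The plan is to obtain the claimed entry-wise form by directly reading off the $\left(p,i\right)$ entry of the matrix expression supplied by Lemma~\ref{lemma:sa_abstract_structre} and expanding every factor $XX^{T}$ into a sum over a position index. Since $y^{i,L,d_{x},H,\Theta}_{p}$ is by definition the $p$th entry of the $i$th column of $Y^{L,d_{x},H,\Theta}$, I would start from
\begin{align*}
y^{i,L,d_{x},H,\Theta}_{p}=\left[Y^{L,d_{x},H,\Theta}\right]_{p,i}=\sum_{h\in\left[H\right]^{\left[C\left(L\right)\right]}}\left[B^{\left(0,h\right)T}M^{\left(1,h\right)}\cdots M^{\left(C\left(L\right),h\right)}A^{\left(0,h\right)}X\right]_{p,i}.
\end{align*}

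The key algebraic observation is that, writing $X$ for the $d_{x}\times N$ matrix holding $\x^{j}$ in its $j$th column, one has $XX^{T}=\sum_{j=1}^{N}\x^{j}\left(\x^{j}\right)^{T}$, so each middle factor expands as
\begin{align*}
M^{\left(c,h\right)}=A^{\left(c,h\right)}XX^{T}B^{\left(c,h\right)T}=\sum_{j_{c}=1}^{N}\left(A^{\left(c,h\right)}\x^{j_{c}}\right)\left(B^{\left(c,h\right)}\x^{j_{c}}\right)^{T}.
\end{align*}
The $\left(r_{c},r_{c+1}\right)$ entry of this rank-structured matrix is therefore $\sum_{j_{c}=1}^{N}\left\langle A_{r_{c}}^{\left(c,h\right)},\x^{j_{c}}\right\rangle \left\langle B_{r_{c+1}}^{\left(c,h\right)},\x^{j_{c}}\right\rangle$, where $A_{r}^{\left(c,h\right)}$ denotes the $r$th row of $A^{\left(c,h\right)}$.

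Next I would write the matrix product as an explicit contraction over the $C\left(L\right)+1$ bond indices $r_{1},\dots,r_{C\left(L\right)+1}\in\left[d_{a}\right]$ that join the $C\left(L\right)+2$ factors: the leftmost factor contributes $\left[B^{\left(0,h\right)T}\right]_{p,r_{1}}=\left[B^{\left(0,h\right)}\right]_{r_{1},p}$, each $M^{\left(c,h\right)}$ contributes the entry computed above with incoming index $r_{c}$ and outgoing index $r_{c+1}$, and the rightmost factor $A^{\left(0,h\right)}X$ contributes $\left[A^{\left(0,h\right)}X\right]_{r_{C\left(L\right)+1},i}=\left\langle A_{r_{C\left(L\right)+1}}^{\left(0,h\right)},\x^{i}\right\rangle$. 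Because the $C\left(L\right)$ position sums introduced by the distinct $XX^{T}$ factors are mutually independent, they can be pulled to the front of the expression; collecting the resulting summand under a single symbol then yields exactly $\sum_{j_{1},\dots,j_{C}=1}^{N}\mathbf{g}^{L}\left(\x^{i},\x^{j_{1}},\dots,\x^{j_{C}}\right)$ with $\mathbf{g}^{L}$ as stated.

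The argument is essentially bookkeeping, so I do not anticipate a substantive obstacle; the only point requiring care is the index accounting --- verifying that the chain of $B^{\left(0,h\right)T}$, the $C\left(L\right)$ factors $M^{\left(c,h\right)}$, and $A^{\left(0,h\right)}X$ produces precisely $C\left(L\right)$ independent position sums $j_{1},\dots,j_{C}$ and $C\left(L\right)+1$ contracted bond indices $r_{1},\dots,r_{C\left(L\right)+1}$, and that each row index $r_{c}$ pairs the correct $A$- and $B$-factor with the correct input vector. Since this structure is inherited directly from Lemma~\ref{lemma:sa_abstract_structre}, no additional induction is required.
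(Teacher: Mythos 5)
Your proposal is correct and takes essentially the same route as the paper's own proof: the paper likewise reads off the $(p,i)$ entry of the expression supplied by Lemma~\ref{lemma:sa_abstract_structre}, expands each middle factor entrywise as $M^{(c,h)}_{r_c,r_{c+1}}=\sum_{j_c=1}^{N}\left\langle A_{r_c}^{(c,h)},\x^{(j_c)}\right\rangle\left\langle B_{r_{c+1}}^{(c,h)},\x^{(j_c)}\right\rangle$, and contracts the chain over the $C(L)+1$ bond indices with $\left[B^{(0,h)}\right]_{r_1,p}$ on the left and $\left\langle A_{r_{C(L)+1}}^{(0,h)},\x^{(i)}\right\rangle$ on the right, pulling the $C(L)$ independent position sums to the front. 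Your index accounting matches the paper's derivation exactly, so no gap remains.
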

	\begin{proof}
		To get the required form, we will use lemma~\ref{lemma:sa_abstract_structre} above and write the matrix multiplication in eq.~\eqref{eq:sa_abstract_structre} explicitly.
		$$
		M^{\left(c,h\right)}_{r_{1},r_{2}}=\sum_{j=1}^{N}\left[A^{\left(c,h\right)}X\right]_{r_{1},j}\left[X^{T}B^{\left(c,h\right)T}\right]_{j,r_{2}}=\sum_{j=1}^{N}\left\langle A_{r_{1}}^{\left(c,h\right)},\x^{\left(j\right)}\right\rangle \left\langle B_{r_{2}}^{\left(c,h\right)},\x^{\left(j\right)}\right\rangle 
		$$
		
		Therefore
		\begin{align*}
		y_{p}^{i,L,d_{x},H,\Theta}\left(\x^{\left(1\right)},...,\x^{\left(N\right)}\right) & =\sum_{h\in\left[H\right]^{\left[C\left(L\right)\right]}}B_{p}^{\left(0,h\right)T}M^{\left(1,h\right)}\cdots M^{\left(C\left(L\right),h\right)}A^{\left(0,h\right)}\x^{\left(i\right)}\\
		& =\sum_{j_{1},\dots,j_{C\left(L\right)}=1}^{N}\sum_{h\in\left[H\right]^{\left[C\left(L\right)\right]}}\sum_{r_{1},\dots,r_{C\left(L\right)+1}=1}^{d_{a}}\\
		& \left[B^{\left(0,h\right)}\right]_{r_{1},p}\left(\prod_{c=1}^{C\left(L\right)}\left\langle A_{r_{c}}^{\left(c,h\right)},\x^{\left(j_{c}\right)}\right\rangle \left\langle B_{r_{c+1}}^{\left(c,h\right)},\x^{\left(j_{c}\right)}\right\rangle \right)\left\langle A_{r_{C\left(L\right)+1}}^{\left(0,h\right)},\x^{\left(i\right)}\right\rangle 
		\end{align*}

	\end{proof}
	In the next two subsections, we will use the above lemma~\ref{lemma:sa_abstract_structre} to prove the two competing upper bounds on the separation rank of self-attention networks.
	
	\subsection{Proof of the upper bound in theorem~1}
	In the following theorem, we show how an upper bound on the separation rank is implied by the form of eq.~\eqref{eq:sa_abstract_structre} in the statement of lemma~\ref{lemma:sa_abstract_structre}.
	\begin{theorem}
		Defining $C\left(L\right)\coloneqq\frac{3^{L}-1}{2}$, for any depth $L\ge1$ input size $N>1$ partition $P\cupdot Q=\left[N\right]$ and
		output locations $i\in\left[N\right],\:p\in\left[d_{x}\right]$, the following holds:
		\[
		sep\left(y_{p}^{i,L,d_{x},H,\Theta},P,Q\right)\ensuremath{\leq\left(H\left(d_a+1\right)\right)^{C\left(L\right)}}
		\]
	\end{theorem}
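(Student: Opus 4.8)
The plan is to read a matrix-chain (tensor-train) closed form off of Lemma~\ref{lemma:sa_abstract_structre} and its corollary, and then to bound the separation rank of that chain. Writing $C:=C(L)=\frac{3^{L}-1}{2}$, the corollary expresses the scalar output as
\[
y_{p}^{i,L,d_{x},H,\Theta}=\sum_{h\in[H]^{[C]}}\left(u^{(h)}\right)^{T}M^{(1,h)}\cdots M^{(C,h)}v^{(h)},
\]
where $u^{(h)}_{r}=[B^{(0,h)}]_{r,p}$ is a constant vector, $v^{(h)}_{r}=\langle A_{r}^{(0,h)},\x^{i}\rangle$ depends only on $\x^{i}$, and $M^{(c,h)}_{r,r'}=\sum_{j=1}^{N}\langle A_{r}^{(c,h)},\x^{j}\rangle\langle B_{r'}^{(c,h)},\x^{j}\rangle$ plays the role of the $c$-th transfer matrix, contracted along bond indices $r_{1},\dots,r_{C+1}$. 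Since the separation rank is subadditive over sums, it suffices to bound each chain scalar (for a fixed head assignment $h$) by $(d_a+1)^{C}$: there are $H^{C}=|[H]^{[C]}|$ such terms, so this yields $sep(y_p)\le H^{C}(d_a+1)^{C}$ immediately.

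For a fixed $h$, I split each transfer matrix along the partition, $M^{(c,h)}=M_{P}^{(c,h)}+M_{Q}^{(c,h)}$, keeping in $M_{P}$ only the summands with $j\in P$ and in $M_{Q}$ only those with $j\in Q$. Each entry of $M_{P}$ is then a function of the inputs in $P$ alone, hence of separation rank at most $1$, and symmetrically for $M_{Q}$; the vector $u^{(h)}$ is constant, and $v^{(h)}$ is a function of $\x^{i}$ alone, so it lies entirely on the $P$-side or the $Q$-side according to whether $i\in P$ or $i\in Q$ (say $i\in P$; the other case is symmetric). Expanding the chain produces $\sum_{\epsilon\in\{P,Q\}^{C}}(u^{(h)})^{T}M_{\epsilon_{1}}^{(1,h)}\cdots M_{\epsilon_{C}}^{(C,h)}v^{(h)}$, a sum of $2^{C}$ scalars indexed by colour patterns $\epsilon$.

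For a fixed $\epsilon$ I would view the scalar as a contraction over the bond indices and collapse maximal runs of equal colour: the product of matrices inside a monochromatic run is a single-side matrix, so every bond internal to a run can be summed away within one side of the cut, whereas each colour change in $\epsilon$ — together with the junction to $v^{(h)}$, which carries the fixed colour of $\x^{i}$ — forces exactly one bond index to be shared across the cut. Keeping precisely those crossing indices as explicit summation variables exhibits the scalar as a sum of products of a $P$-only function times a $Q$-only function, one product per joint value of the crossing indices, so its separation rank is at most $d_a^{\,\mathrm{cr}(\epsilon)}$, where $\mathrm{cr}(\epsilon)$ is the number of $c\in[C]$ with $\epsilon_{c}\ne\epsilon_{c+1}$ under the convention that $\epsilon_{C+1}$ equals the colour of $\x^{i}$. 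This bookkeeping — certifying that only the crossing bonds must remain free while all within-run bonds contract inside a single side — is the main obstacle, and the one step I would write out in full (it is cleanest to induct on the chain length, maintaining the run-grouped factorization).

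Finally I sum over $\epsilon$. A one-line transfer-matrix recursion on $s(c):=\sum_{\epsilon_{c},\dots,\epsilon_{C}}d_a^{\#\{c'\ge c:\,\epsilon_{c'}\ne\epsilon_{c'+1}\}}$, obtained by conditioning on whether $\epsilon_{c}$ matches $\epsilon_{c+1}$, gives $s(c)=(d_a+1)\,s(c+1)$ with base case $s(C)=d_a+1$, whence $\sum_{\epsilon\in\{P,Q\}^{C}}d_a^{\,\mathrm{cr}(\epsilon)}=(d_a+1)^{C}$. Combining the three estimates yields $sep\left(y_{p}^{i,L,d_{x},H,\Theta},P,Q\right)\le H^{C}\cdot(d_a+1)^{C}=\left(H(d_a+1)\right)^{C(L)}$, as claimed. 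A quick sanity check at $L=1$ (so $C=1$) recovers the bound $H(d_a+1)$, matching the single-layer count of one non-crossing term plus $d_a$ crossing terms per head.
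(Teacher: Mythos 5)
Your proposal is correct and follows essentially the same route as the paper's proof: both split each transfer matrix $M^{(c,h)}=M_{P}^{(c,h)}+M_{Q}^{(c,h)}$, expand over the $2^{C(L)}$ colour patterns, observe that only the bond indices at colour changes (the paper's swap indices $\beta_{1},\dots,\beta_{b}$, your $\mathrm{cr}(\epsilon)$ crossing bonds) need remain free while within-run bonds contract on a single side of the cut, and handle the head sum by subadditivity to obtain the factor $H^{C(L)}$, including the same caveat that the $h$ indices cannot be pushed inside the factors. Your transfer-matrix recursion $\sum_{\epsilon}d_{a}^{\,\mathrm{cr}(\epsilon)}=(d_{a}+1)^{C(L)}$ is simply the product form of the paper's binomial count $\sum_{b=0}^{C(L)}\binom{C(L)}{b}d_{a}^{\,b}=(d_{a}+1)^{C(L)}$.
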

	\begin{proof}
		We begin by writing the matrix multiplication in eq.~\eqref{eq:sa_abstract_structre} explicitly.
		$$
		M^{\left(c,h\right)}_{r_{1},r_{2}}=\sum_{j=1}^{N}\left[A^{\left(c,h\right)}X\right]_{r_{1},j}\left[X^{T}B^{\left(c,h\right)T}\right]_{j,r_{2}}=\sum_{j\in P}\left\langle A_{r_{1}}^{\left(c,h\right)},\x^{\left(j\right)}\right\rangle \left\langle B_{r_{2}}^{\left(c,h\right)},\x^{\left(j\right)}\right\rangle+\sum_{j\in Q}\left\langle A_{r_{1}}^{\left(c,h\right)},\x^{\left(j\right)}\right\rangle \left\langle B_{r_{2}}^{\left(c,h\right)},\x^{\left(j\right)}\right\rangle
		$$
		Therefore, rewriting the summation to be over $\{P_c\in\{P,Q\}\}_{c=1}^{C(L)}$ that correspond to the two partition segments $P/Q$.
		\begin{align*}
		y_{p}^{i,L,d_{x},H,\Theta}\left(\x^{\left(1\right)},...,\x^{\left(N\right)}\right) & =\sum_{h\in\left[H\right]^{\left[C\left(L\right)\right]}}B_{p}^{\left(0,h\right)T}M^{\left(1,h\right)}\cdots M^{\left(C\left(L\right),h\right)}A^{\left(0,h\right)}\x^{\left(i\right)}\\
		& =\sum_{h\in\left[H\right]^{\left[C\left(L\right)\right]}}\sum_{r_{1},\dots,r_{C\left(L\right)+1}=1}^{d_{a}}\sum_{P_{1},\dots,P_{C\left(L\right)}\in\left\{ P,Q\right\} }\\
		& B^{\left(0,h\right)}_{r_{1},p}\left(\prod_{c=1}^{C\left(L\right)}\sum_{j\in P_{c}}\left\langle A_{r_{c}}^{\left(c,h\right)},\x^{\left(j\right)}\right\rangle \left\langle B_{r_{c+1}}^{\left(c,h\right)},\x^{\left(j\right)}\right\rangle \right)\left\langle A_{r_{C\left(L\right)+1}}^{\left(0,h\right)},\x^{\left(i\right)}\right\rangle 
		\end{align*}
		Now we reorder the above sum by summing over indices of swaps between $P$ and $Q$, i.e. $\beta\in[C]$ such that  $P_{\beta}\ne P_{\beta+1}$, and split the multiplication $\prod_{c=1}^{C\left(L\right)}$ according to the crossing indices:
		\begin{align*}
		& =\sum_{h\in\left[H\right]^{\left[C\left(L\right)\right]}}\sum_{r_{1},\dots,r_{C\left(L\right)+1}=1}^{d_{a}}\sum_{b=0}^{C\left(L\right)}\sum_{0=\beta_{b+1}<\beta_{b}\le\beta_{b-1}\ldots\le\beta_{1}<\beta_{0}=C\left(L\right)}B_{r_{1},p}^{\left(0,h\right)}\\
		& \left(\left(\prod_{m=0}^{\left\lfloor \frac{b}{2}\right\rfloor }\prod_{c=\beta_{2m+1}+1}^{\beta_{2m}}\sum_{j\in P}\left\langle A_{r_{c}}^{\left(c,h\right)},\x^{\left(j\right)}\right\rangle \left\langle B_{r_{c+1}}^{\left(c,h\right)},\x^{\left(j\right)}\right\rangle \right)\left\langle A_{r_{C\left(L\right)+1}}^{\left(0,h\right)},\x^{\left(i\right)}\right\rangle \right)\\
		& \left(\prod_{m=0}^{\left\lceil \frac{b}{2}\right\rceil -1}\prod_{c=\beta_{2m+2}+1}^{\beta_{2m+1}}\sum_{j\in Q}\left\langle A_{r_{c}}^{\left(c,h\right)},\x^{\left(j\right)}\right\rangle \left\langle B_{r_{c+1}}^{\left(c,h\right)},\x^{\left(j\right)}\right\rangle \right)
		\end{align*}
		Where we assume w.l.o.g that $i\in P$ and therefore $P_{\beta_{1}},P_{\beta_{1}+1},\dots,P_{\beta_{0}-1},P_{\beta_{0}}=P$.
		The above reordering allows pushing the summation of non swapping $r_c$ indices into the $P,Q$ parentheses:
		\begin{align}\label{eq:crossing_idices}
		& =\sum_{h\in\left[H\right]^{\left[C\left(L\right)\right]}}\sum_{b=0}^{C\left(L\right)}\sum_{0=\beta_{b+1}<\beta_{b}\le\beta_{b-1}\ldots\le\beta_{1}\leq\beta_{0}=C\left(L\right)}\sum_{r_{\beta_{1}+1},\dots,r_{\beta_{b}+1}=1}^{d_{a}}B_{r_{1},p}^{\left(0,h\right)}\\\nonumber
		&\underbrace{\left(\underbrace{\sum_{r_{C\left(L\right)+1}=1}^{d_{a}}}_{\substack{\text{just for }\beta_{1}<C\\
					\text{ otherwise ignore}
				}
			}\underbrace{\sum_{r_{1}=1}^{d_{a}}}_{\substack{\text{used either}\\
					\text{in }P\text{ or }Q
				}
			}\left(\prod_{m=0}^{\left\lfloor \frac{b}{2}\right\rfloor }\sum_{\substack{r_{\beta_{2m+1}+2}\\
					\vdots\\
					r_{\beta_{2m}}
				}
				=1}^{d_{a}}\prod_{c=\beta_{2m+1}+1}^{\beta_{2m}}\sum_{j\in P}\left\langle A_{r_{c}}^{\left(c,h\right)},\x^{\left(j\right)}\right\rangle \left\langle B_{r_{c+1}}^{\left(c,h\right)},\x^{\left(j\right)}\right\rangle \right)\left\langle A_{r_{C\left(L\right)+1}}^{\left(0,h\right)},\x^{\left(i\right)}\right\rangle \right)}_{\text{function of }P}\\\nonumber
		&\underbrace{\left(\underbrace{\sum_{r_{1}=1}^{d_{a}}}_{\substack{\text{used either}\\
					\text{in }P\text{ or }Q
				}
			}\prod_{m=0}^{\left\lceil \frac{b}{2}\right\rceil -1}\sum_{r_{\beta_{2m+2}+2},\dots,r_{\beta_{2m+1}}=1}^{d_{a}}\prod_{c=\beta_{2m+2}+1}^{\beta_{2m+1}}\sum_{j\in Q}\left\langle A_{r_{c}}^{\left(c,h\right)},\x^{\left(j\right)}\right\rangle \left\langle B_{r_{c+1}}^{\left(c,h\right)},\x^{\left(j\right)}\right\rangle \right)}_{\text{function of }Q}
		\end{align}

		Since the separation rank of each term in the above summation is $1$, we proved the following upper bound on the separation rank:
		
		\begin{align*}
		sep\left(y_{p}^{i,L,d_{x},H,\Theta},P,Q\right)\ensuremath{\leq\sum_{h\in\left[H\right]^{\left[C\left(L\right)\right]}}\sum_{b=0}^{C\left(L\right)}\sum_{0=\beta_{b+1}<\beta_{b}\le\beta_{b-1}\ldots\le\beta_{1}\leq\beta_{0}=C\left(L\right)}\sum_{r_{\beta_{1}+1},\dots,r_{\beta_{b}+1}=1}^{d_{a}}1\\=H^{C\left(L\right)}\sum_{b=0}^{C\left(L\right)}\binom{C\left(L\right)}{b}\left(d_{a}\right)^{b}=H^{C\left(L\right)}\left(d_{a}+1\right)^{C\left(L\right)}=\left(H\left(d_a+1\right)\right)^{C\left(L\right)}}
		\end{align*}
		
		We  note that unlike the $d_{a}$ case, the same $H$ index can affect nonconsecutive $M^{\left(c_{1},h\right)},M^{\left(c_{2},h\right)}$, therefore we can't simply push the $h$ indices as done for the $r$ indices in eq.~\eqref{eq:crossing_idices}.
	\end{proof}
	From here, the upper bound in theorem~\ref{theorem:depth_efficiency} follows by
	
	\begin{align}\label{eq:depth_upper_sep}
	\log_{3}\left(sep\left(y_{p}^{i,L,d_{x},H,\Theta},P,Q\right)\right)\leq\log_{3}\left(\left(H\left(d_{a}+1\right)\right)^{C\left(L\right)}\right)=\frac{3^{L}-1}{2}\log_{3}\left(d_{x}+H\right)
	\end{align}
	\subsection{Proof of the upper bound in theorem~2}

	In the following theorem, we show how an upper bound on the separation rank is implied by the polynomial degree of $y_p^{i,L,d_{x},H,\Theta}$ in eq.~\eqref{eq:gl_explicit_form}. We will use the notation of $\multiset{n}{k}$ -- the multiset coefficient, given in the binomial form by
	$\binom{n+k-1}{k}$. We will use the identity $\abs{\left\{a_1\ldots a_n\in\mathbb{Z}\geq0:\sum_{r=1}^{n}a_{r}=k\right\} } = \multiset{n}{k}$.
	
	\begin{theorem}\label{theorem:dx_upper_bound}
		Defining $C\left(L\right)\coloneqq\frac{3^{L}-1}{2}$, for any depth $L\ge1$ input size $N>1$ partition $P\cupdot Q=\left[N\right]$ and
		output locations $i\in\left[N\right],\:p\in\left[d_{x}\right]$, the following holds:
		\begin{align}\label{eq:dx_upper_bound}
		sep\left(y_{p}^{i,L,d_{x},H,\Theta},P,Q\right)\ensuremath{\leq d_{x}\left(C\left(L\right)+1\right)\left(\binom{d_{x}}{2C\left(L\right)}\right)\left(\frac{2C\left(L\right)}{d_{x}}+1\right)^{d_{x}}}
		\end{align}
	\end{theorem}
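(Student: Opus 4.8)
The plan is to exploit the vectorized form of Lemma~\ref{lemma:sa_abstract_structre}, which shows that $y_p^{i,L,d_x,H,\Theta}$ depends on the $N$ input vectors only through the Gram matrix $G:=XX^T=\sum_{j=1}^N \x^j(\x^j)^T\in\R^{d_x\times d_x}$: every factor $M^{(c,h)}=A^{(c,h)}XX^TB^{(c,h)T}$ is linear in $G$, so $y_p$ is a homogeneous polynomial of degree $C(L)$ in the entries of $G$, multiplied by a single linear form in $\x^i$. This is exactly the step that removes the dependence on $N$ promised in the proof sketch: although eq.~\eqref{eq:gl_explicit_form} is a priori a polynomial in all $Nd_x$ coordinates, it is really a polynomial in the $d_x\times d_x$ object $G$, whose size is independent of $N$. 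Throughout I use the elementary fact that a single monomial is multiplicatively separable with respect to any partition, hence has separation rank $1$, together with the sub-additivity of the separation rank.

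Next I would split the Gram matrix along the partition, $G=G_P+G_Q$ with $G_P:=\sum_{j\in P}\x^j(\x^j)^T$ a function of the $P$-side inputs only and $G_Q$ a function of the $Q$-side inputs only. Expanding the degree-$C(L)$ form in $G=G_P+G_Q$ multinomially writes $y_p$ as a sum of terms, each a product of a monomial in the entries of $G_P$ (of some degree $a\le C(L)$) and a monomial in the entries of $G_Q$ (of degree $C(L)-a$), with the linear $\x^i$ factor attached to whichever side contains $i$. Every such term is separable with respect to $(P,Q)$, so by sub-additivity $sep\left(y_p^{i,L,d_x,H,\Theta},P,Q\right)$ is at most the number of distinct $P$-side factors that can occur. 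The task therefore reduces to counting these one-sided functions.

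The crux, and the step I expect to be the main obstacle, is to count the $P$-side functions tightly enough. A naive bound -- all monomials of degree $\le C(L)$ in the $\binom{d_x+1}{2}$ independent entries of $G_P$, times $d_x$ for the coordinate of the $\x^i$ form -- gives roughly $\multiset{\binom{d_x+1}{2}}{C(L)}$, whose logarithm scales like $\tfrac{1}{2}d_x^2\cdot L$; this is quadratic in $d_x$ and far weaker than the target $2d_x\cdot L$ implied by eq.~\eqref{eq:dx_upper_bound}. To reach the stated bound I would instead index each $P$-side monomial by its aggregate per-coordinate degree vector $\delta\in\Z_{\ge 0}^{d_x}$, where $\delta_k$ records the total power of the $k$-th coordinate accumulated over all $P$-positions; since the $G_P$-part has even degree $2a\le 2C(L)$, the identity $\abs{\{a_1\ldots a_n\in\Z_{\ge0}:\sum_r a_r=k\}}=\multiset{n}{k}$ bounds the number of admissible $\delta$ by $\multiset{d_x}{2C(L)}$, while the factors $(C(L)+1)$ and $d_x$ account for the split of the total degree between the two sides and for the coordinate of the linear $\x^i$ form, respectively.

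What makes the final factor $\left(\tfrac{2C(L)}{d_x}+1\right)^{d_x}$ delicate is that the degree vector $\delta$ does not by itself determine the $P$-side function (for instance $G_{P,11}G_{P,22}$ and $G_{P,12}^2$ share $\delta=(2,2,0,\dots)$ yet are distinct). I would bound the number of one-sided functions sharing a fixed $\delta$ by counting the ways the $2a$ degree-slots prescribed by $\delta$ can be paired into the index pairs of a $G_P$-monomial, and this combinatorial multiplicity is where the chained, multilinear structure of $M^{(1,h)}\cdots M^{(C(L),h)}$ must be exploited, since an unstructured count would reintroduce the quadratic-in-$d_x$ blow-up. Establishing that this per-$\delta$ multiplicity is controlled by $\left(\tfrac{2C(L)}{d_x}+1\right)^{d_x}$, uniformly in $\delta$, is the technical heart of the argument; once it is in place, multiplying the number of admissible $\delta$ by the per-$\delta$ multiplicity and by the bookkeeping factors $d_x\left(C(L)+1\right)$ yields eq.~\eqref{eq:dx_upper_bound}.
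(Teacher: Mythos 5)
Your skeleton---the Gram-matrix reformulation (equivalent to the paper's explicit expansion of eq.~\eqref{eq:gl_explicit_form}, with the free sums over $j_1,\dots,j_{C(L)}$ providing the $N$-independence), the split $G=G_P+G_Q$, and the grouping by aggregate per-coordinate degree vectors---matches the paper's proof, and your worry about the last factor is exactly right; but the resolution you propose cannot be completed, and the gap is structural rather than technical. First, your plan is arithmetically self-defeating: you keep each individual monomial in the entries of $G_P$ as its own one-sided function, and then (number of admissible $\delta$) times (per-$\delta$ multiplicity) can never be smaller than the total number of such monomials, which is the $\multiset{\binom{d_x+1}{2}}{C(L)}$-type count you yourself rejected as giving a $\tfrac{1}{2}d_x^2\cdot L$ exponent---partitioning a set into classes does not shrink it. Second, the uniform per-$\delta$ bound you hope to establish is false: the $P$-side monomials with aggregate degree vector $\delta$ correspond to symmetric nonnegative-integer matrices with row sums $\delta$ (multigraphs with degree sequence $\delta$), and their number scales like $C(L)^{\binom{d_x}{2}}$, which for $d_x\geq 4$ dwarfs $\left(\frac{2C(L)}{d_x}+1\right)^{d_x}$ precisely in the regime $3^L\gg d_x$ where this theorem is needed.

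The missing idea, which is how the paper's proof proceeds, is aggregation: the separation rank counts one-sided \emph{functions} in a decomposition, not monomials, so one must collapse each degree class into a single function rather than count its members. The paper writes $y_p^{i,L,d_x,H,\Theta}$ as a combination of at most $d_x\multiset{d_x}{2C(L)}$ functions $\chi_{n_1,\dots,n_{d_x},\alpha_{C(L)+1}}$, each defined as the sum of \emph{all} input-coordinate monomials with aggregate degrees $(n_1,\dots,n_{d_x})$ and even per-position degrees, and then bounds $sep(\chi)$ directly: splitting each coordinate's total degree between the two sides, $n_m=r_{m,P}+r_{m,Q}$, together with the split $E\in\{0,\dots,C(L)\}$ of the position budget $\sum_j o_j=C(L)$, exhibits $\chi$ as a sum of $\left(C(L)+1\right)\prod_{m=1}^{d_x}\left(n_m+1\right)$ separable terms, where each one-sided factor is itself an aggregated sum over all per-position distributions consistent with $(r_{\cdot,P},E)$; the arithmetic--geometric mean lemma for multiset coefficients (lemma~\ref{means_inequality}) then gives $\prod_{m}\left(n_m+1\right)=\prod_m\multiset{2}{n_m}\leq\left(\frac{2C(L)}{d_x}+1\right)^{d_x}$ uniformly in $n$. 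So in the paper the factor $C(L)+1$ pays for the position-budget split and $\left(\frac{2C(L)}{d_x}+1\right)^{d_x}$ pays for the coordinate-wise degree splits---neither is a per-class multiplicity---and the distinction you flag between, say, $G_{P,11}G_{P,22}$ and $G_{P,12}^2$ never has to be resolved, because both are absorbed into the same aggregate one-sided factors. Without this aggregation step your argument cannot get below a quadratic-in-$d_x$ exponent, and with it the stated bound follows by multiplying $d_x\multiset{d_x}{2C(L)}$ by the per-$\chi$ separation-rank bound.
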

	\begin{proof}
		We begin by opening the inner products in eq.~\eqref{eq:gl_explicit_form}, explicitly writing the indices:
		\begin{align*}
		y_{p}^{i,L,d_{x},H,\Theta}=\sum_{j_{1},\dots,j_{C\left(L\right)}=1}^{N}\sum_{h\in\left[H\right]^{\left[C\left(L\right)\right]}}\sum_{r_{1},\dots,r_{C\left(L\right)+1}=1}^{d_{a}}
		B^{\left(0,h\right)}_{r_{1},p}\left(\prod_{c=1}^{C\left(L\right)}\left\langle A_{r_{c}}^{\left(c,h\right)},\x^{\left(j_c\right)}\right\rangle \left\langle B_{r_{c+1}}^{\left(c,h\right)},\x^{\left(j_c\right)}\right\rangle \right)\left\langle A_{r_{C\left(L\right)+1}}^{\left(0,h\right)},\x^{\left(i\right)}\right\rangle\\
		=\sum_{\alpha_{1},\dots,\alpha_{C\left(L\right)+1},\beta_{1},\dots,\beta_{C\left(L\right)}=1}^{d_{x}}\sum_{j_{1},\dots,j_{C\left(L\right)}=1}^{N}\sum_{h\in\left[H\right]^{\left[C\left(L\right)\right]}}\sum_{r_{1},\dots,r_{C\left(L\right)+1}=1}^{d_{a}}\\
		\left(B^{\left(0,h\right)}_{r_{1},p}A_{r_{C\left(L\right)+1},\alpha_{C\left(L\right)}+1}^{\left(0,h\right)}\x_{\alpha_{C\left(L\right)+1}}^{\left(i\right)}\prod_{c=1}^{C\left(L\right)}A_{r_{c},\alpha_{c}}^{\left(c,h\right)}\x_{\alpha_{c}}^{\left(j_{c}\right)}B_{r_{c},\beta_{c}}^{\left(c,h\right)}\x_{\beta_{c}}^{\left(j_{c}\right)}\right)
		\end{align*}
		And separating between  coefficients and $\x$'s:
		
		\begin{align*}
		=\sum_{\alpha_{1},\dots,\alpha_{C\left(L\right)+1},\beta_{1},\dots,\beta_{C\left(L\right)}=1}^{d_{x}}\underbrace{\left(\sum_{h\in\left[H\right]^{\left[C\left(L\right)\right]}}\sum_{r_{1},\dots,r_{C\left(L\right)+1}=1}^{d_{a}}B^{\left(0,h\right)}_{r_{1},p}
			A_{r_{C\left(L\right)+1},\alpha_{C\left(L\right)}+1}^{\left(0,h\right)}\prod_{c=1}^{C\left(L\right)}A_{r_{c},\alpha_{c}}^{\left(c,h\right)}B_{r_{c},\beta_{c}}^{\left(c,h\right)}\right)}_{\coloneqq\mathcal{T}_{\alpha_{1},\dots,\alpha_{C\left(L\right)+1},\beta_{1},\dots,\beta_{C\left(L\right)}}}\\\left(\sum_{j_{1},\dots,j_{C\left(L\right)}=1}^{N}\x_{\alpha_{C\left(L\right)+1}}^{\left(i\right)}\prod_{c=1}^{C\left(L\right)}\x_{\alpha_{c}}^{\left(j_{c}\right)}\x_{\beta_{c}}^{\left(j_{c}\right)}\right)
		\end{align*}
		
		Now we can group monomials by the powers $n_1,\dots,n_{d_x}$ of each coordinate

		\begin{align*}
		=\sum_{\alpha_{C\left(L\right)+1}}^{d_x}\underbrace{\sum_{n_1+\dots n_{d_x}=2C\left(L\right)}}_{\text{The powers}}\left(\overbrace{\sum_{\substack{\alpha_{!},\dots,\alpha_{C\left(L\right)},\beta_{1},\dots,\beta_{C\left(L\right)}\in\left[d_{x}\right]\\\forall m\in\left[d_{x}\right]\,\left|\left\{ c\in\left[C\left(L\right)\,:\alpha_{c}=m\right]\right\} \right|+\left|\left\{ c\in\left[C\left(L\right)\,:\beta_{c}=m\right]\right\} \right|=n_{m}\\}}}^{\text{How to distributethe powers between the }c\text{'s}}\mathcal{T}_{\alpha_{1},\dots,\alpha_{C\left(L\right)+1},\beta_{1},\dots,\beta_{C\left(L\right)}}\right)\\\chi_{n_{1}.\dots,n_{d_{x}},\alpha_{C\left(L\right)+1}}\left(\x^{\left(1\right)},\dots,\x^{\left(N\right)}\right)
		\end{align*}
		Where:
		$$\chi_{n_{1}.\dots,n_{d_{x}},\alpha_{C\left(L\right)+1}}\left(\x^{\left(1\right)},\dots,\x^{\left(N\right)}\right)\coloneqq\underbrace{\sum_{o_{1}+\dots+o_{N}=C\left(L\right)}}_{\substack{\text{How many }j\text{ indices}\\
				\text{ \text{equal to each} }\left[N\right]
			}
		}\underbrace{\sum_{\substack{0\leq n_{1,1},\dots,n_{d_{x},N}\leq2C\left(L\right)\\
					\forall m\in\left[d_{x}\right]\,\sum_{j=1}^{N}n_{m,j}=n_{m}\\
					\forall j\in\left[N\right]\,\sum_{m=1}^{d_{x}}n_{m},j=2o_{j}
				}
		}}_{\text{How to distribute the powers between  }\left[N\right]}\x_{\alpha_{C\left(L\right)+1}}^{\left(i\right)}\prod_{j=1}^{N}\prod_{m=1}^{d_{x}}\left(\x_{m}^{\left(j\right)}\right)^{n_{m,j}}$$

		Finally, we need to bound the separation rank of $\chi_{n_{1}.\dots,n_{d_{x}},\alpha_{C\left(L\right)+1}}$.
		W.l.o.g we choose the partition $P=\left\{ 1,\dots,\frac{N}{2}\right\} ,Q=\left\{ \frac{N}{2}+1,\dots,N\right\} $
		and $i\in P$ then we can divide the powers between $P,Q$ in the following way:

		\begin{align*}
		\chi_{n_{1}.\dots,n_{d_{x}},\alpha_{C\left(L\right)+1}}\left(\x^{\left(1\right)},\dots,\x^{\left(N\right)}\right)=\sum_{\substack{0\leq r_{1,P},\dots,r_{d_{x},P}\leq2C\left(L\right)\\
				0\leq r_{1,Q},\dots,r_{d_{x},Q}\leq2C\left(L\right)\\
				\forall m\in\left[d_{x}\right]\,r_{m,P}+r_{m,Q}=n_{m}
			}
		}\sum_{E=0}^{C\left(L\right)}\\
		\underbrace{\left(\sum_{o_{1}+\dots+o_{\frac{N}{2}}=E}\sum_{\substack{0\leq n_{1,1},\dots,n_{d_{x},\frac{N}{2}}\leq2C\left(L\right)\\
					\forall m\in\left[d_{x}\right]\,\sum_{j\in P}^{N}n_{m,j}=r_{m,P}\\
					\forall j\in\left[N\right]\,\sum_{m=1}^{d_{x}}n_{m},j=2o_{j}
				}
			}\x_{\alpha_{C\left(L\right)+1}}^{\left(i\right)}\prod_{j\in P}\prod_{m=1}^{d_{x}}\left(\x_{m}^{\left(j\right)}\right)^{n_{m,j}}\right)}_{\text{function of }P}\\	
		\underbrace{\left(\sum_{o_{\frac{N}{2}+1}+\dots+o_{N}=C\left(L\right)-E}\sum_{\substack{0\leq n_{1,1},\dots,n_{d_{x},\frac{N}{2}}\leq2C\left(L\right)\\
					\forall m\in\left[d_{x}\right]\,\sum_{j\in Q}^{N}n_{m,j}=r_{m,Q}\\
					\forall j\in\left[N\right]\,\sum_{m=1}^{d_{x}}n_{m},j=2o_{j}
				}
			}\prod_{j\in Q}\prod_{m=1}^{d_{x}}\left(\x_{m}^{\left(j\right)}\right)^{n_{m,j}}\right)}_{\text{function of }Q}	
		\end{align*}
		Thus, since each summand is of separation rank $1$, the separation rank of $\chi_{n_{1}.\dots,n_{d_{x}},\alpha_{C\left(L\right)+1}}$
		is bounded by the number of summands:
		\[
		\left(C\left(L\right)+1\right)\prod_{\beta=1}^{d_{x}}\left(\binom{2}{r_{\beta}}\right)\overbrace{\leq}^{\text{lemma~\ref{means_inequality}}}\left(C\left(L\right)+1\right)\left(\frac{2C\left(L\right)}{d_{x}}+1\right)^{d_{x}}
		\]
		
		where the inequality followed from lemma \ref{means_inequality}.
		Since we have at most $d_{x}\left(\binom{d_{x}}{2C\left(L\right)}\right)$ different $\chi$ we conclude that:
		\[
		\text{\ensuremath{sep\left(y_{p}^{i,L,d_{x},H,\Theta},P,Q\right)\leq \underbrace{d_{x}\left(\binom{d_{x}}{2C\left(L\right)}\right)}_{\text{number of }\chi}\left(C\left(L\right)+1\right)\left(\frac{2C\left(L\right)}{d_{x}}+1\right)^{d_{x}}}}
		\]
		\begin{align*}
		\end{align*}
	\end{proof}
	
	From here, theorem~\ref{theorem:width_bound} follows by the multiset identity in lemma~\ref{lemma:multiset_bound}: 
	\begin{align}\label{eq:upper2}\log_{3}\left[sep\left(y_{p}^{i,L,d_{x},H,\Theta},P,Q\right)\right] & \leq\log_{3}\left[d_{x}\left(C\left(L\right)+1\right)\left(\binom{d_{x}}{2C\left(L\right)}\right)\left(\frac{2C\left(L\right)}{d_{x}}+1\right)^{d_{x}}\right]\\\nonumber
	& \le\log_{3}\left[d_{x}\left(C\left(L\right)+1\right)\left(\frac{2e\left(d_{x}+2C\left(L\right)\right)}{d_{x}}\right)^{d_{x}}\left(\frac{2C\left(L\right)}{d_{x}}+1\right)^{d_{x}}\right]\\\nonumber
	& \le\log_{3}\left[3^{L}d_{x}\left(2e\right)^{d_{x}}\left(\frac{3^{L}-1}{d_{x}}+1\right)^{2d_{x}}\right]\\\nonumber
	&\overbrace{\le}^{\textrm{only for $3^L> d_x$}}
	\log_{3}[3^{L}d_{x}\left(2e\right)^{d_{x}}(2\cdot\frac{3^{L}-1}{d_{x}})^{2d_{x}}]\\\nonumber
	& \le L+\log_{3}d_{x}+d_{x}\log_{3}2e+2d_{x}\log_{3}\left\{ \left(\frac{2\cdot3^{L}}{d_{x}}\right)\right\} \\\nonumber
	& \le\left(2d_{x}+1\right)L+\log_{3}d_{x}+2d_{x}\left(\log_{3}2\sqrt{2e}-\log_{3}d_{x}\right)\qed
	\end{align}
	
	\subsection{{The effect of residual connections}}
	
	\begin{figure}[h]
		\centering
		\includegraphics[width=\linewidth]{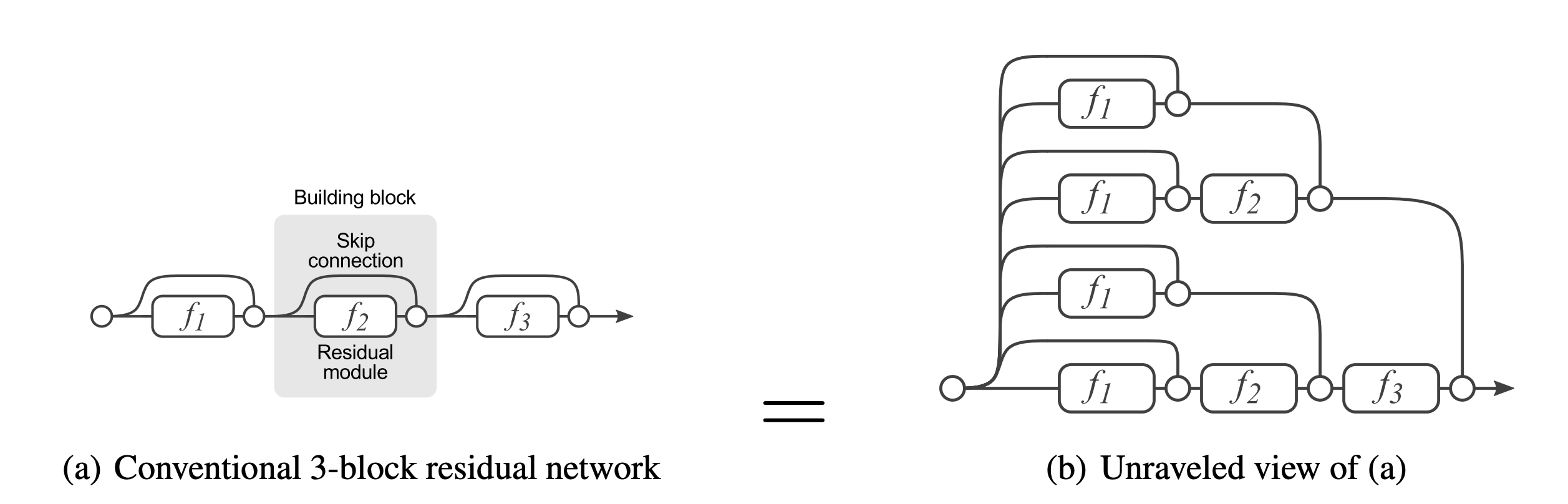}
		\vspace{-2mm} 
		\caption{A residual network in  its compressed and unraveled form, taken from~\cite{veit2016residual}.}
		\label{fig:residual}
	\end{figure}

	Having upper-bounded the separation rank of the deep self-attention network defined in section 2.2 of the main text, we comment on the effect of adding residual connections over each layer, as is done in the regular network (described in section 2.1 of the main text).
	Consider a network composed of a concatenation of the building blocks shown in figure~\ref{fig:residual}(a), taken from \citet{veit2016residual}. 
	A building block in layer $l$ includes a module $f_l$, which in our case is the self-attention layer given in eq.~(3) of the main text,\footnote{We have embedded the Feed-Forward layer within $W^{\textrm{O}}$ due to the linearity of the analyzed model.} and a skip connection which adds $f_l$'s input to its output (circles denote addition). \citet{veit2016residual} propose an unraveled view of such a network, shown in figure~\ref{fig:residual}(b), which we will employ in the proof of theorem~\ref{th:res_th_1} for clarity of presentation.
	
	We begin by proving a lemma that quantifies how the separation rank of the composition of a self-attention layer over a function is related to the function's separation rank:
	
	\begin{lemma} \label{lemma:sep_layer}
		Let $g^{j}\in\R^{d_x}$ be an input vector at position $j$ to a self-attention layer defined by eq.~(3) of the main text, and let $K $ be an upper bound to the separation rank of any of the entries $p\in [d_x]$ of any input $g^{j}\in\R^{d_x}$, i.e., $\forall p\in[d_x],j\in[N]:$ $Sep\left(g_{p}^{j}\right)\le K$.
		Let $y^{i}_p$ be the $p$th entry of the self-attention layer output at position $i$. Then, an upper bound to the separation rank of $y^{i}_p\in\R$ is given by:
		$$Sep\left(y_{p}^{i}\right)\le \frac{Nd_x^4}{H}K^3$$
	\end{lemma}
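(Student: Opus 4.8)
The plan is to expand the layer output into elementary monomials in the entries of the input vectors $g^j$ and to control the separation rank through two elementary properties of the measure that follow immediately from the definition in eq.~\eqref{eq:sep_rank}: sub-additivity, $Sep\left(f_1+f_2\right)\le Sep\left(f_1\right)+Sep\left(f_2\right)$ (concatenate the separable representations), and sub-multiplicativity, $Sep\left(f_1\cdot f_2\right)\le Sep\left(f_1\right)\cdot Sep\left(f_2\right)$ (take pairwise products of the separable summands). These two facts are the only structural tools needed; everything else is index bookkeeping. The one subtlety that makes the stated constant come out correctly, rather than a factor of $H$ too large, is the order in which the sums are peeled off, as explained below.

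First I would write the $p$th entry of the layer output at position $i$ explicitly, using $\langle W^{\textrm{Q},h}g^i,W^{\textrm{K},h}g^j\rangle=\sum_{a=1}^{d_a}[W^{\textrm{Q},h}g^i]_a[W^{\textrm{K},h}g^j]_a$ on the layer of eq.~\eqref{eq:our_layer}, as
\begin{equation*}
y^i_p=\sum_{j=1}^{N}\sum_{a=1}^{d_a}U^{j,a}_p,\qquad U^{j,a}_p:=\sum_{h=1}^{H}[W^{\textrm{Q},h}g^i]_a\,[W^{\textrm{K},h}g^j]_a\,[W^{\textrm{O},h}W^{\textrm{V},h}g^j]_p.
\end{equation*}
By sub-additivity this already gives $Sep\left(y^i_p\right)\le N d_a\cdot\max_{j,a}Sep\left(U^{j,a}_p\right)$, contributing the prefactor $N d_a=\nicefrac{Nd_x}{H}$. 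The key move is to keep the head sum $\sum_{h}$ \emph{inside} $U^{j,a}_p$ and absorb it into scalar coefficients, so that $H$ never multiplies the number of outer summands.

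Next I would bound $Sep\left(U^{j,a}_p\right)$ for fixed $j,a$. Expanding every inner product into coordinates, $[W^{\textrm{Q},h}g^i]_a=\sum_m W^{\textrm{Q},h}_{a,m}g^i_m$ and likewise for the key and value/output factors, and then regrouping by the monomial $g^i_m g^j_{m'}g^j_n$ with the scalar coefficient $\lambda_{m,m',n}=\sum_h W^{\textrm{Q},h}_{a,m}W^{\textrm{K},h}_{a,m'}(W^{\textrm{O},h}W^{\textrm{V},h})_{p,n}$, yields $U^{j,a}_p=\sum_{m,m',n=1}^{d_x}\lambda_{m,m',n}\,g^i_m g^j_{m'}g^j_n$. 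Each monomial is a product of three entries of input vectors, so the hypothesis $Sep\left(g^i_m\right),Sep\left(g^j_{m'}\right),Sep\left(g^j_n\right)\le K$ together with sub-multiplicativity gives $Sep\left(g^i_m g^j_{m'}g^j_n\right)\le K^3$, and summing over the $d_x^3$ monomials via sub-additivity gives $Sep\left(U^{j,a}_p\right)\le d_x^3 K^3$. Combining with the previous step yields $Sep\left(y^i_p\right)\le N d_a d_x^3 K^3=\nicefrac{Nd_x^4}{H}\,K^3$, as claimed.

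The main obstacle, and really the only place where care is required, is the combinatorial accounting in the middle step: one must verify that peeling off $j$ and the inner-product index $a$ as genuine summands, while collapsing the head index $h$ into the monomial coefficients $\lambda_{m,m',n}$, is exactly what avoids a spurious factor of $H$. A naive expansion that treats $h$ (and both attention-dimension indices) as separate summands and bounds each of the three factors by $d_x K$ produces the looser $N H d_a d_x^3 K^3 = N d_x^4 K^3$; the monomial regrouping for each fixed $(j,a)$ is what replaces the excess $d_x$ by $d_a=\nicefrac{d_x}{H}$ and lands precisely on the stated constant.
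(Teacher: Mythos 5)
Your proof is correct, and it follows the same basic route as the paper's: expand the layer output explicitly into a sum of terms, each a scalar coefficient times a product of three entries of input vectors, bound each such product by $K^3$ via sub-multiplicativity of the separation rank, and count the summands via sub-additivity. The difference is in the bookkeeping, and here your version is actually the more careful one. The paper's proof writes eq.~(3) as a sum over all seven indices ($h$, two attention-dimension indices, $j$, and three $d_x$-dimensional coordinate indices) and concludes by ``multiplying the number of summed terms by $K^3$''; taken literally, that count is $H\cdot d_a\cdot d_x\cdot N\cdot d_x\cdot d_a\cdot d_x=\nicefrac{Nd_x^5}{H}$, a factor of $d_x$ above the stated constant. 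Your regrouping --- absorbing the head sum over $h$ and the $W^{\textrm{O},h}W^{\textrm{V},h}$ contraction into the scalar coefficients $\lambda_{m,m',n}$, while keeping only $j$, the inner-product index $a$, and the three monomial coordinates $(m,m',n)$ as genuine summands --- is exactly what lands on $\nicefrac{Nd_x^4}{H}\,K^3$, so the ``subtlety'' you flag is not cosmetic: some such grouping is needed to justify the constant as stated, and the paper's one-line count glosses over it. One could even go a step further: since the monomials $g^i_m g^j_{m'} g^j_n$ do not depend on $a$, the sum over $a$ can be absorbed into the coefficients as well, giving the strictly tighter bound $Sep\left(y^i_p\right)\le N d_x^3 K^3$; either constant is more than sufficient for the way the lemma is used downstream, where the residual-connection bound only needs the prefactor to be polynomial in $N$ and $d_x$.
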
 
	\begin{proof}
		Denote by $G\in\R^{d_x\times N}$ the matrix holding $g^{j}\in\R^{d_x}$ in its $j$th
		column. Note that by the conditions of the lemma any entry of $G$ upholds $Sep(G_{\alpha\beta})\le K$. Writing eq.~(3) of the main text as a summation over matrix indices:
		\begin{equation}
		y^{i}_p= \sum_{h=1}^{H}\sum_{\alpha_1=1}^{d_x/H}\sum_{\alpha_2=1}^{d_x}\sum_{j=1}^{N}\sum_{\alpha_4=1}^{d_x}\sum_{\alpha_5=1}^{d_x/H}\sum_{\alpha_6=1}^{d_x}W^{\textrm{O},h}_{p\alpha_1}W^{\textrm{V},h}_{\alpha_1\alpha_2}G_{\alpha_2j}(G^\top)_{j\alpha_4}(W^{\textrm{K},h})^\top_{\alpha_4\alpha_5}W^{\textrm{Q},h}_{\alpha_5\alpha_6}G_{\alpha_6i}\\
		\end{equation}
		The lemma follows by multiplying the number of summed terms by an upper bound on the separation rank of each summed term, $K^3$.
	\end{proof}
	
	We now prove a theorem which establishes that the integration of skip connections modifies the upper bound in theorem~1 of the main text by a small factor. 
	\begin{theorem}\label{th:res_th_1}
		For $p\in[d_x]$, let $y^{i, L}_p$ be the scalar function computing the $p$th entry of an output vector at position $i\in[N]$ of the depth-$L$
		residual network depicted in figure~\ref{fig:residual}, where $f_l$ is the self-attention layer in eq.~(3) of the main text.
		Then:
		$$\log_3 Sep(y_p^{L,i})\le L\log_3L+\left(4\log_3d_x+\log_3N-\log_3H\right)\cdot\frac{3^L-1}{2}$$
	\end{theorem}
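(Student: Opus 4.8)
The plan is to combine the unraveled view of figure~\ref{fig:residual} with the per-layer separation-rank cost from lemma~\ref{lemma:sep_layer}. First I would telescope the residual recursion $y^{l,i}=y^{l-1,i}+[f_l]^i\left(y^{l-1}\right)$ into $y^{L,i}=\x^i+\sum_{m=1}^{L}[f_m]^i\left(y^{m-1}\right)$, and use the unraveled view to guide a decomposition of the output over the $2^L$ routes indexed by subsets $S\subseteq[L]$: a route that activates layers $l_1<\cdots<l_k$ (with $k=\abs{S}$) contributes the $k$-fold composition $f_{l_k}\circ\cdots\circ f_{l_1}$ applied to the input $X$, while the skip-only route contributes $\x^i$ itself, which has separation rank $1$.

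Next I would bound the separation rank of a single $k$-layer route by iterating lemma~\ref{lemma:sep_layer}. The network input satisfies $Sep(\x^i_p)=1$ for every coordinate, and the lemma shows that each additional self-attention layer multiplies the separation-rank bound by $M:=\nicefrac{Nd_x^4}{H}$ and cubes it. Hence the bound obeys $K_j\le M K_{j-1}^3$ with $K_0=1$, whose solution is exactly $K_k\le M^{C(k)}$ with $C(k)=\frac{3^k-1}{2}$ -- the same double-exponential-in-depth growth that drives theorem~\ref{theorem:depth_efficiency}. Since separation rank is subadditive and there are $\binom{L}{k}$ routes of length $k$, this would yield
\begin{equation}
Sep(y_p^{L,i})\le\sum_{k=0}^{L}\binom{L}{k}M^{C(k)}.
\end{equation}

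Finally I would bound the sum. Because $M\ge1$ and $C(\cdot)$ is increasing, the $k=L$ term $M^{C(L)}$ dominates while every other term is suppressed by a factor $M^{-(C(L)-C(k))}$; using the crude count $\binom{L}{k}\le L^{L}$ then gives $Sep(y_p^{L,i})\le L^{L}M^{C(L)}$. Taking $\log_3$ and substituting $M=\nicefrac{Nd_x^4}{H}$ produces
\begin{equation}
\log_3 Sep(y_p^{L,i})\le L\log_3 L+\left(4\log_3 d_x+\log_3 N-\log_3 H\right)\frac{3^L-1}{2},
\end{equation}
which is precisely the claimed bound, with the residual overhead isolated in the single factor $L^L$.

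The hard part will be making the route decomposition rigorous. Unlike a linear residual block, the self-attention module $f_l$ is trilinear in its input, so substituting the unraveled sum into $f_l$ does not split into independent routes but branches into a tree of cross-terms; a naive enumeration of these branches reintroduces a double-exponential number of summands and only recovers the weak recursion $K_L\le K_{L-1}+MK_{L-1}^3$, whose correction term is exponential rather than $L\log_3 L$. The delicate work is therefore the combinatorial bookkeeping -- organizing the trilinear branches and charging each to a route of some effective length $k$ with composition cost $M^{C(k)}$ -- so that the separation-rank budget collapses back to the clean sum $\sum_k\binom{L}{k}M^{C(k)}$ and the compounding of residual connections is genuinely controlled by the factor $L^L$ rather than by the crude per-layer cubing.
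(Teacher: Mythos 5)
Your proposal stalls exactly where you say it does, and the gap is real. The decomposition of $y_p^{L,i}$ into $2^L$ routes, with $\binom{L}{k}$ routes of length $k$ each a clean composition $f_{l_k}\circ\cdots\circ f_{l_1}(X)$, is false for these layers: since each $f_l$ is trilinear in its input, substituting $y^{l-1}=X+(y^{l-1}-X)$ into the three slots of $f_l$ branches into $2^3$ cross-terms per layer (this is exactly the $E,F,G\in\{X,Y^{(L)}-X\}$ expansion carried out in lemma~\ref{lemma:sa_abstract_resnet_structre}), not a binary ``use layer $l$ or skip it'' choice, so the multiplicities of terms of effective length $k$ are not $\binom{L}{k}$. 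Your key inequality $Sep(y_p^{L,i})\le\sum_{k=0}^{L}\binom{L}{k}M^{C(k)}$ with $M=\nicefrac{Nd_x^4}{H}$ is therefore never established, and everything downstream (the bound $2^L\le L^L$ and hence the $L\log_3 L$ overhead) hangs on it. The steps you do justify are correct --- iterating lemma~\ref{lemma:sep_layer} on a pure $k$-fold composition indeed gives $K_k\le M^{C(k)}$ --- but combined only with subadditivity they yield the recursion $K_L\le K_{L-1}+MK_{L-1}^3$, which, as you yourself note, inflates the coefficient of $\frac{3^L-1}{2}$ (roughly $\log_3 M\mapsto\log_3 2M$) rather than producing an $L\log_3 L$ term.

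The paper's proof avoids the route combinatorics altogether, and this is the move your sketch is missing. Working in the unraveled view of figure~\ref{fig:residual}, it tracks a single scalar per depth, $S_l=Sep(\textrm{longest branch}(l))$: the output of the depth-$l$ unraveled network is a sum of $l+1$ branches, so $Sep(y^{l})\le(l+1)S_l$, and in particular the \emph{entire} input to $f_L$ --- treated as one function, so that all trilinear cross-terms are automatically internal to it --- has separation rank at most $L\cdot S_{L-1}$. Lemma~\ref{lemma:sep_layer} applied to this composite function then gives $S_L\le\frac{Nd_x^4}{H}\left(L\cdot S_{L-1}\right)^3$, which the paper unrolls to $Sep(y_p^{L,i})\le\prod_{l=1}^{L}(l+1)\left(\frac{Nd_x^4}{H}\right)^{3^{l-1}}$, matching the theorem. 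Two caveats worth knowing: first, your instinct that prefactors compound under cubing is correct and in fact applies to the paper's own unrolling --- the recursion $S_L\le ML^3S_{L-1}^3$ honestly carries $\prod_{l}l^{3^{L-l+1}}$ rather than $\prod_l(l+1)$, an overhead of $3\sum_{l}3^{L-l}\log_3 l=O(3^L)$ with a small absolute constant, which is benign (absorbable by slightly inflating $\nicefrac{Nd_x^4}{H}$) but does not literally collapse to $L\log_3 L$; second, if you insist on salvaging the route-sum picture, the right scaffolding is lemma~\ref{lemma:sa_abstract_resnet_structre}, bounding the multiplicities $n_j$ of terms with $j$ attention factors, which is how the paper handles residual connections in the theorem-2 analogue. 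As submitted, though, your proof does not go through: its central identity is asserted, flagged as ``the hard part,'' and left unproven.
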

	Comparing this dependence to the upper bound in the theorem~1 of the main text, given in eq.~\eqref{eq:depth_upper_sep}, this theorem implies that the effect of residual connections is insignificant to our analysis.  
	
	\begin{proof}
		Observing figure~\ref{fig:residual}(b) which gives the $L=3$ example, we upper bound the separation rank of the entire unraveled network by noting that its output is composed from $L+1$ additions of outputs from branches of depth $l=0,....,L$ ($0$ being the direct link of the input to the output), such that schematically the separation rank at the output of the entire network can be upper bounded by: $$Sep(y_p^{L,i})\le(L+1)Sep(\textrm{longest branch}(L))$$	
		where we denoted $\textrm{longest branch}(L)$ as the function at the output of $f_L$, before the addition with the other branches.
		Noting that the input to $f_L$ can be recursively viewed as an output of  an unraveled network of depth $L-1$, we bound the separation rank of the function at the input to $f_L$ by $L\cdot Sep(\textrm{longest branch}(L-1))$. 
		Since $f_L$ is a self-attention layer, 
		Lemma~\ref{lemma:sep_layer} implies that $Sep(\textrm{longest branch}(L))\le \frac{Nd_x^4}{H} \left(L\cdot Sep(\textrm{longest branch}(L-1))\right)^3$. Continuing recursively, and inserting the stopping condition $Sep(\textrm{longest branch}(L=1))=\frac{Nd_x^4}{H}$ (since the input to $f_1$ is a specific entry of the input to the entire network, of separation rank 1), we attain:
		$$Sep(y_p^{L,i})\le
		\prod_{l=1}^L(l+1)\left(\frac{Nd_x^4}{H}\right)^{3^{l-1}},$$
		satisfying the theorem.
	\end{proof}
	
	We now prove a theorem which establishes that the integration of skip connections modifies the upper bound in theorem~2 of the main text by a small factor. 
	\begin{theorem}
		Defining $C\left(L\right)\coloneqq\frac{3^{L}-1}{2}$, for $p\in[d_x]$, let $y_{p,\text{residual}}^{i,L,d_{x},H,\Theta}$ be the scalar function computing the $p$th entry of an output vector at position $i\in[N]$ of the depth-$L$
		residual network depicted in figure~\ref{fig:residual}, where $f_l$ is the self-attention layer in eq.~(3) of the main text. 
		Then for any partition $P\cupdot Q=\left[N\right]$, the following holds:
		\[
		sep\left(y_{p,\text{residual}}^{i,L,d_{x},H,\Theta},P,Q\right)\ensuremath{\leq d_{x}\left(C\left(L\right)+1\right)^{2}\left(\binom{d_{x}}{2C\left(L\right)}\right)\left(\frac{2C\left(L\right)}{d_{x}}+1\right)^{d_{x}}}
		\]
	\end{theorem}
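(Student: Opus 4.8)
The plan is to reduce to the non-residual width bound of Theorem~\ref{theorem:dx_upper_bound} by decomposing the residual network's output into its homogeneous components and controlling how many of them can be nonzero. Writing the residual recursion as $Y^{(l)}=Y^{(l-1)}+f_l\!\left(Y^{(l-1)}\right)$, where $f_l$ is the trilinear self-attention map of eq.~\eqref{eq:our_layer_matrix_notations}, I would first track the set $D_l$ of total polynomial degrees (in the input entries $\x_m^{(j)}$) present in $Y^{(l)}$. Since $f_l$ feeds three copies of its argument into the products of inner products, it sends components of degrees $a,b,c$ to a component of degree $a+b+c$, so the degree sets obey $D_l=D_{l-1}\cup\{a+b+c:a,b,c\in D_{l-1}\}$ with $D_0=\{1\}$. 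Two facts follow: every degree is odd (a sum of three odd numbers is odd, and the skip branch preserves parity), and the maximal degree is $3^L=2C(L)+1$, attained only by the path through all $L$ modules, matching the homogeneous degree of the non-residual network. Hence the nonzero homogeneous components are indexed by a subset of the odd integers in $\{1,\dots,3^L\}$, of which there are exactly $\tfrac{3^L+1}{2}=C(L)+1$.

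Next I would invoke subadditivity of the separation rank, $sep(f+g;P,Q)\le sep(f;P,Q)+sep(g;P,Q)$, which is immediate from the definition in eq.~\eqref{eq:sep_rank}. Decomposing $y_{p,\text{residual}}^{i,L,d_x,H,\Theta}=\sum_d y_p^{(d)}$ into its homogeneous pieces gives $sep\big(y_{p,\text{residual}}^{i,L,d_x,H,\Theta},P,Q\big)\le\sum_d sep\big(y_p^{(d)},P,Q\big)$, a sum of at most $C(L)+1$ terms by the degree count above. It therefore suffices to show that each component obeys the bound that Theorem~\ref{theorem:dx_upper_bound} proves for the full non-residual output, namely $sep\big(y_p^{(d)},P,Q\big)\le d_x\,(C(L)+1)\,\left(\binom{d_x}{2C(L)}\right)\,\left(\frac{2C(L)}{d_x}+1\right)^{d_x}$; summing $C(L)+1$ such terms then yields exactly the claimed extra factor of $C(L)+1$.

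The heart of the argument is this per-component bound, for which I would re-run the monomial-grouping proof of Theorem~\ref{theorem:dx_upper_bound} on $y_p^{(d)}$ rather than cite it verbatim. Each $y_p^{(d)}$ is again a sum of products of inner products carrying the self-attention position sums $\sum_j$, so I can group its monomials by the coordinate power profile $(n_1,\dots,n_{d_x})$, split these powers and the per-position half-degrees between $P$ and $Q$, and bound each separable term by $1$ exactly as before. The only change is that the product-side total degree is now some even value $d-1\le 2C(L)$ instead of exactly $2C(L)$; since the number of power profiles $\left(\binom{d_x}{d-1}\right)$, the crossing-index count $\lfloor\frac{d-1}{2}\rfloor+1$, and the power-split factor controlled by lemma~\ref{means_inequality} are all monotone non-decreasing in the degree, each is dominated by its value at degree $2C(L)$, so the degree-$2C(L)$ bound of Theorem~\ref{theorem:dx_upper_bound} upper bounds every component.

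The step I expect to be the main obstacle is precisely this last reduction: verifying that every homogeneous component retains enough of the self-attention structure for the $N$-independence trick of Theorem~\ref{theorem:dx_upper_bound} to go through, since with residual connections a component of degree $d<3^L$ is a sum of many cross-terms mixing outputs of different sub-depths rather than a clean depth-$k$ composition. I would handle this by carrying the degree bookkeeping at the level of the structural form of eq.~\eqref{eq:sa_abstract_structre}, showing inductively that the degree-$d$ part of $Y^{(l)}$ is still a sum of terms $B^TM^{(1)}\cdots M^{(c)}A\,X$ with $M^{(\cdot)}=A\,XX^{T}B^{T}$ and with the number of $M$-factors equal to $\frac{d-1}{2}\le C(L)$, so that the position-sum accounting and the split over the crossing index $E$ that produces the $(C(L)+1)$ factor apply unchanged to each component.
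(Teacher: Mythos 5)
Your proposal is correct and is essentially the paper's own proof in lightly different packaging: your homogeneous-degree decomposition (odd degrees $2c+1$ with $c\le C(L)$, plus the degree-$1$ skip term $X$) coincides term-for-term with the structure established in lemma~\ref{lemma:sa_abstract_resnet_structre}, where the index $j$ counts $M$-factors and hence indexes exactly those degrees, and your subadditivity-plus-monotonicity step is the paper's observation that carrying the $j$-summation through the argument of theorem~\ref{theorem:dx_upper_bound} multiplies the bound by at most $C(L)$ while the extra $X$ contributes $1$, yielding the same factor $C(L)+1$. The structural induction you flag as the main obstacle in your final paragraph is precisely that lemma (proved in the paper by expanding $Y^{(L)}=X+(Y^{(L)}-X)$ trilinearly), so no gap remains.
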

	Comparing this dependence to the upper bound in the theorem~2 of the main text, given in eq.~\eqref{eq:dx_upper_bound}, the above theorem implies that the effect of residual connections is insignificant to our analysis. 
	\begin{proof}
		We will adapt the proof of  theorem~\ref{theorem:dx_upper_bound}. All of the arguments remain unchanged, except that we obtain the network structure via lemma~\ref{lemma:sa_abstract_resnet_structre} instead of lemma~\ref{lemma:sa_abstract_structre}.
		Following the new structure we will have  two additional summations, one over $j$ and one over $\alpha$ (see lemma~\ref{lemma:sa_abstract_resnet_structre}), as well as an additional input $X$ factor.
		We will leave the summation over $j$ during the whole proof, thus multiplying the separation rank by at most $C\left(L\right)$.  
		Note that similarity to the $h$ summation, the summation over $\alpha$ has no influence on the separation rank, since it collapses into a single  coefficient $\mathcal{T}$.
		Finally the unput $X$ factor contribute at most $1$ to the separation rank, therefore we can bound the separation rank by $C\left(L\right)+1$ times the bound  in eq.~\eqref{eq:dx_upper_bound}.
	\end{proof}
	
	Finally, since the upper bounds undergo such minor increases in the presence of skip connections, the lower bounds can be left with no further tightening, without affecting the analysis and its conclusions.
	
	\subsection{Technical lemmas}
	
	\begin{lemma}
		\label{means_inequality}(inequality of arithmetic and geometric multiset
		coefficient means)
		
		Let $n,k\in\mathbb{N}$ and $\phi:\mathbb{N}^{k}\rightarrow\mathbb{N}\coloneqq r_{1},\dots r_{k}\vdash\prod_{j=1}^{k}\left(\binom{n}{r_{j}}\right)$
		then:
		\[
		\forall r_{z},\dots r_{k}\in\mathbb{N}\quad\phi\left(r_{1},\dots r_{k}\right)\leq\frac{\left(\prod_{t=1}^{n-1}\left(\frac{M}{k}+t\right)\right)^{k}}{\left(\left(n-1\right)!\right)^{k}}
		\]
		where $M\coloneqq\sum_{j=1}^{k}r_{j}$
	\end{lemma}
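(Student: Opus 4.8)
The plan is to convert the multiset coefficients into an explicit polynomial product and then recognize the whole inequality as a single instance of Jensen's inequality applied to a concave function. First I would rewrite the multiset coefficient in product form,
\[
\multiset{n}{r}=\binom{n+r-1}{r}=\frac{1}{(n-1)!}\prod_{t=1}^{n-1}(r+t),
\]
which follows immediately from expanding $(n+r-1)!/r!=\prod_{t=1}^{n-1}(r+t)$. Substituting this into the definition of $\phi$ gives
\[
\phi(r_1,\dots,r_k)=\frac{1}{\left((n-1)!\right)^{k}}\prod_{j=1}^{k}\prod_{t=1}^{n-1}(r_j+t),
\]
so the claimed bound, which carries the identical denominator $\left((n-1)!\right)^{k}$, reduces to the clean statement
\[
\prod_{j=1}^{k}f(r_j)\le f\!\left(\tfrac{M}{k}\right)^{k},\qquad f(r):=\prod_{t=1}^{n-1}(r+t),
\]
where $M/k$ is the arithmetic mean of the $r_j$.

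The second step is to establish that $\log f$ is concave on $[0,\infty)$. Indeed $\log f(r)=\sum_{t=1}^{n-1}\log(r+t)$ is a finite sum of the maps $r\mapsto\log(r+t)$, each of which has second derivative $-(r+t)^{-2}<0$ and is therefore concave, and concavity is preserved under summation. I would also note that every factor satisfies $r+t\ge t\ge 1>0$ for $r\ge 0$, so the logarithm is well defined and $f>0$; this is all that is needed, and in particular it lets us treat $f$ as a function of a real argument even though $M/k$ need not be an integer.

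The final step applies Jensen's inequality to the concave function $\log f$ with uniform weights $1/k$,
\[
\frac{1}{k}\sum_{j=1}^{k}\log f(r_j)\le \log f\!\left(\frac{1}{k}\sum_{j=1}^{k}r_j\right)=\log f\!\left(\frac{M}{k}\right),
\]
and then exponentiates and raises to the $k$-th power to obtain $\prod_{j=1}^{k}f(r_j)\le f(M/k)^{k}$, which combined with the reduction of the first step proves the lemma. I expect no serious obstacle: the only things to get right are the passage to the product form and the recognition that $\log f$ is a sum of concave logarithms, after which Jensen closes the argument immediately.
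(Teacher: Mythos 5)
Your proof is correct and takes essentially the same route as the paper: both start from the expansion $\multiset{n}{r}=\frac{1}{(n-1)!}\prod_{t=1}^{n-1}(r+t)$, and your single application of Jensen's inequality to the concave function $\log f$ is a repackaging of the paper's factor-by-factor AM--GM bound $\prod_{j=1}^{k}(r_j+t)\le\left(\frac{M}{k}+t\right)^{k}$ for each $t$, since AM--GM is exactly Jensen for the logarithm. No gaps; your explicit remark that $f$ extends to real arguments (so that $M/k$ need not be an integer) is a point the paper leaves implicit.
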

	\begin{proof}
		Define $f_{t}\coloneqq\prod_{j=1}^{k}\left(r_{j}+t\right)$ and $\psi\coloneqq\prod_{t=1}^{n-1}f_{t}$
		than by the inequality of arithmetic and geometric means 
		\[
		\forall t\in\text{\ensuremath{\left[k\right]}\ensuremath{\quad\ensuremath{f_{t}\leq\left(\frac{1}{k}\sum_{j=1}^{k}\left(r_{j}+t\right)\right)^{k}}=\ensuremath{\left(\frac{M}{k}+t\right)^{k}}}}
		\]
		Therefore 
		\begin{align*}
		\phi\left(r_{1},\dots,r_{k}\right) & =\prod_{j=1}^{k}\left(\binom{n}{r_{j}}\right)=\prod_{j=1}^{k}\binom{n+r_{j}-1}{r_{j}}=\prod_{j=1}^{k}\frac{\left(n+r_{j}-1\right)!}{r_{j}!\left(n-1\right)!}\\
		& =\frac{1}{\left(\left(n-1\right)!\right)^{k}}\prod_{j=1}^{k}\prod_{t=1}^{n-1}\left(r_{j}+t\right)=\frac{1}{\left(\left(n-1\right)!\right)^{k}}\prod_{t=1}^{n-1}f_{t}\leq\frac{\prod_{t=1}^{n-1}\left(\frac{M}{k}+t\right)^{k}}{\left(\left(n-1\right)!\right)^{k}}
		\end{align*}
		One can see that when $M$ divided by $k$ it hold that
		\[
		\phi\left(\overbrace{\frac{M}{k},\dots,\frac{M}{k}}^{k\:\text{times }}\right)=\frac{1}{\left(\left(n-1\right)!\right)^{k}}\prod_{t=1}^{n-1}f_{t}=\frac{1}{\left(\left(n-1\right)!\right)^{k}}\prod_{t=1}^{n-1}\left(\frac{M}{k}+t\right)^{k}=\left(\prod_{t=1}^{n-1}\left(\frac{M}{k}+t\right)\right)^{k}
		\]
		hence the name of this lemma. 
	\end{proof}
	
	\begin{lemma}\label{lemma:multiset_bound}
		$\ensuremath{\left(\binom{n}{k}\right)\leq\left(\frac{2e\left(n+k\right)}{n}\right)}^{n}$
	\end{lemma}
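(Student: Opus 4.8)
The plan is to reduce the multiset coefficient to an ordinary binomial coefficient and then invoke the textbook estimate $\binom{m}{r}\le(em/r)^r$. First I would rewrite $\multiset{n}{k}=\binom{n+k-1}{k}=\binom{n+k-1}{n-1}$, deliberately choosing the lower index to be $n-1$ so that the resulting exponent matches the target power $n$ (up to one unit, which I absorb at the end). Applying the standard bound with $m=n+k-1$ and $r=n-1$ then gives, for $n\ge2$,
\[
\multiset{n}{k}\le\left(\frac{e(n+k-1)}{n-1}\right)^{n-1}.
\]

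Next I would show that this base is dominated by the target base, i.e. $\frac{e(n+k-1)}{n-1}\le\frac{2e(n+k)}{n}$. After cancelling $e$ and cross-multiplying (legitimate since $n>1$) this is equivalent to $n(n+k-1)\le 2(n-1)(n+k)$, whose difference simplifies to $n(n-1)+k(n-2)$; this is manifestly nonnegative for every $n\ge2$ and $k\ge0$, so the base comparison holds.

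The final step combines the two estimates. Since the target base $\frac{2e(n+k)}{n}$ exceeds $1$ (indeed it is at least $2e$), I can both enlarge the base and raise the exponent from $n-1$ to $n$ without decreasing the right-hand side:
\[
\left(\frac{e(n+k-1)}{n-1}\right)^{n-1}\le\left(\frac{2e(n+k)}{n}\right)^{n-1}\le\left(\frac{2e(n+k)}{n}\right)^{n}.
\]
The degenerate case $n=1$ must be checked by hand, since the estimate $\binom{m}{r}\le(em/r)^r$ is unavailable at $r=0$; there $\multiset{1}{k}=\binom{k}{k}=1$ while the right-hand side equals $2e(1+k)>1$, so the inequality is immediate.

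I expect no serious obstacle: the only genuine content is the base comparison, where the factor $2$ is \emph{essential} — without it the inequality $n(n+k-1)\le(n-1)(n+k)$ fails for all $k\ge1$ — and that same factor simultaneously forces the base above $1$, which is exactly what licenses bumping the exponent up to $n$. The remaining manipulations reduce to the routine bound $r!\ge(r/e)^r$ underlying the standard binomial estimate.
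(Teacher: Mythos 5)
Your proof is correct and takes essentially the same route as the paper's: both rewrite $\multiset{n}{k}=\binom{n+k-1}{n-1}$ and apply the standard estimate $\binom{m}{r}\le\left(\frac{em}{r}\right)^{r}$ with $m=n+k-1$, $r=n-1$. The paper compresses this into a single line; you merely supply the intermediate base comparison $\frac{e(n+k-1)}{n-1}\le\frac{2e(n+k)}{n}$, the exponent bump from $n-1$ to $n$, and the $n=1$ edge case that the paper leaves implicit.
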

	\begin{proof}
		: by using the inequality $\binom{n}{k}\leq\left(\frac{en}{k}\right)^{k}$ we have$$\left(\binom{n}{k}\right)=\binom{n+k-1}{n-1}\leq\left(\frac{2e\left(n+k\right)}{n}\right)^{n}$$
	\end{proof}
	
	We now prove a lemma which reveals the alternation to the network structure as expressed in eq.~\eqref{eq:sa_abstract_structre} when taking skip connections into account.
	\begin{lemma}
		\label{lemma:sa_abstract_resnet_structre}Defining $C\left(L\right)\coloneqq\frac{3^{L}-1}{2}$, any depth $L$
		skip connection composition of the self-attention layers defined in eq.~\eqref{eq:our_layer} can be written as:
		
		\begin{align}\label{eq:sa_abstract_resnet_structre}
		Y^{L,d_{x},H,\Theta}=X+\sum_{j=1}^{C\left(L\right)}\sum_{\alpha=1}^{n_j}\sum_{h\in\left[H\right]^{\left[j\right]}}B^{\left(0,h,j,\alpha\right)T}M^{\left(1,h,j,\alpha\right)}\cdots M^{\left(j,h,j,\alpha\right)}A^{\left(0,h,j,\alpha\right)}X
		\end{align}
		
		where $\forall j\in\left[C\left(L\right)\right]\quad n_j\ge0$
		and $\forall\alpha\in\left[n_j\right]\, h\in\left[H\right]^{\left[j\right]}\,0\leq c\leq j:M^{\left(c,h\right)}=A^{\left(c,h\right)}XX^{T}B^{\left(c,h\right)T}$
		and $A^{\left(c,h\right)},B^{\left(c,h\right)}\in\mathbb{R}^{d_{a}\times d_{x}}$.
	\end{lemma}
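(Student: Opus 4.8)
The plan is to prove the claim by induction on $L$, mirroring the proof of Lemma~\ref{lemma:sa_abstract_structre} but tracking the extra identity branch introduced by the skip connection. The starting point is the recursion obeyed by the residual network, $Y^{(L+1)}=Y^{(L)}+f_{L+1}\left(Y^{(L)}\right)$, where $f_{L+1}$ is the self-attention layer of eq.~\eqref{eq:our_layer} applied to the depth-$L$ output. For the base case $L=1$ I would simply note that $Y^{(1)}=X+\sum_{h=1}^{H}W^{\textrm{O},h}W^{\textrm{V},h}XX^{T}(W^{\textrm{K},h})^{T}W^{\textrm{Q},h}X$, whose second summand is already a single-$M$ chain of the desired form (reading $W^{\textrm{O},h}$ as $B^{(0,h)T}$, $W^{\textrm{V},h}XX^{T}(W^{\textrm{K},h})^{T}$ as $M^{(1,h)}$, and $W^{\textrm{Q},h}$ as $A^{(0,h)}$), so that $C(1)=1$ and $n_1=1$.

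For the inductive step I would substitute the induction hypothesis for $Y^{(L)}$ into the recursion. The additive copy of $Y^{(L)}$ reproduces the explicit $X$ together with all chains carrying $1\le j\le C(L)$ factors of $M$, all of which are already admissible for $Y^{(L+1)}$ since $C(L)\le C(L+1)$. The substantive work lies in the cubic term $f_{L+1}\left(Y^{(L)}\right)=\sum_{h}W^{\textrm{O},h}W^{\textrm{V},h}\,Y^{(L)}\,(Y^{(L)})^{T}(W^{\textrm{K},h})^{T}W^{\textrm{Q},h}\,Y^{(L)}$: expanding each of the three copies of $Y^{(L)}$ turns this into a sum over ordered triples of summands, where each chosen factor is either the bare $X$ (contributing $j_i=0$ factors of $M$) or a chain with $j_i\ge1$ factors of $M$.

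The crux is to show that any such triple, with the middle factor transposed and sandwiched between $W^{\textrm{V},h}$ and $(W^{\textrm{K},h})^{T}W^{\textrm{Q},h}$, collapses to a single admissible chain of length exactly $j_1+j_2+j_3+1$. I would verify this by the same absorption bookkeeping used in eq.~\eqref{eq:sa_abstract_structr_induction_step}: the $j_i$ inner $M$-units of each factor are preserved (the transpose of $AXX^{T}B^{T}$ is again of this form, with the roles of $A$ and $B$ swapped), while the new layer's weight matrices, together with the leading and trailing $A^{(0)},B^{(0)}$ of the adjacent factors, are absorbed into exactly one new junction unit $AXX^{T}B^{T}$ with $A,B\in\R^{d_a\times d_x}$, accounting for the ``$+1$''. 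The main obstacle, demanding the most care, is that this absorption must be checked across every pattern of which factors equal $X$ (including the all-$X$ case, which yields the minimal length $j=1$, and the mixed cases in which an intervening product of weight matrices arising between two factors must be folded into a neighboring junction unit). Since each $j_i$ ranges over $\{0,\dots,C(L)\}$, the total length $j=j_1+j_2+j_3+1$ ranges over $\{1,\dots,3C(L)+1=C(L+1)\}$, and the new head index $h\in[H]$ joins the factors' head multi-indices to form an element of $[H]^{[j]}$. Finally I would regroup all resulting chains by their common length $j$, absorbing the multiplicity of contributing triples and internal choices into the single index $\alpha\in[n_j]$ with an appropriate $n_j\ge0$, which yields exactly the claimed form for $Y^{(L+1)}$ and closes the induction.
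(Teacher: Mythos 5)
Your proposal is correct and takes essentially the same route as the paper's proof: induction on $L$, splitting each of the three copies of $Y^{(L)}$ in the new layer's trilinear form into the bare $X$ plus the sum of $M$-chains (the paper phrases this as expanding over $E,F,G\in\{X,\,Y^{(L)}-X\}$), using that the transpose of $AXX^{T}B^{T}$ is again of $M$-form, and absorbing the new layer's weights into junction units exactly as in the non-residual induction step of lemma~\ref{lemma:sa_abstract_structre}, with the same length bookkeeping $j_{1}+j_{2}+j_{3}+1\leq 3C(L)+1=C(L+1)$. If anything, your recursion $Y^{(L+1)}=Y^{(L)}+f_{L+1}\bigl(Y^{(L)}\bigr)$ is stated more carefully than the paper's, whose displayed recursion omits the top-level skip term $Y^{(L)}$ (evidently a typo, since the bare $X$ in the claimed form can only arise from that additive copy).
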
	
	\begin{proof}
		By Induction on $L$. Base case:
		
		\begin{align*}
		Y^{\left(1\right)}\left(X\right)=X+\sum_{h=1}^{H}\underbrace{W^{\textrm{O},h}}_{B^{T}}\underbrace{W^{\textrm{V},h}XX^{T}\left(W^{\textrm{K},h}\right)^{T}}_{M}\underbrace{W^{\textrm{Q},h}}_{A}X
		\end{align*}
		\begin{align*}
		Y^{\left(L+1\right)}\left(X\right)=\sum_{h=1}^{H}W^{\textrm{O},h}W^{\textrm{V},h}Y^{\left(L\right)}\left(X\right)Y^{\left(L\right)}\left(X\right)^{T}\left(W^{\textrm{K},h}\right)^{T}W^{\textrm{Q},h}Y^{\left(L\right)}\left(X\right)
		\end{align*}
		Now, rewriting $Y^{\left(L\right)}$ as $X + \left(Y^{\left(L\right)}-X\right)$ yields:
		\begin{align*}
		Y^{\left(L+1\right)}\left(X\right)=\sum_{E,F,G\in\left\{X,Y^{\left(L\right)}-X\right\}}\sum_{h=1}^{H}W^{\textrm{O},h}W^{\textrm{V},h}E F^{T}\left(W^{\textrm{K},h}\right)^{T}W^{\textrm{Q},h}G
		\end{align*}
		Now, substituting in the induction hypothesis on the structure of  $Y^{\left(L\right)}\left(X\right)$ yields:
		\begin{align*}
		Y^{\left(L+1\right)}\left(X\right)=\sum_{E,F,G\in\left\{X,\sum_{j=1}^{C\left(L\right)}\sum_{\alpha=1}^{n_j}\sum_{h\in\left[H\right]^{\left[j\right]}}B^{\left(0,h,j,\alpha\right)T}M^{\left(1,h,j,\alpha\right)}\cdots M^{\left(j,h,j,\alpha\right)}A^{\left(0,h,j,\alpha\right)}X\right\}}\\
		\sum_{h=1}^{H}W^{\textrm{O},h}W^{\textrm{V},h}E F^{T}\left(W^{\textrm{K},h}\right)^{T}W^{\textrm{Q},h}G
		\end{align*}
		Similarly to eq.~\eqref{eq:sa_abstract_structr_induction_step} each of the $8$ terms in the outer summation is of the required form, thus we complete the proof.
	\end{proof}

	\section{Lower bounds on the separation rank}
	\subsection{preliminaries}
	\subsubsection{Tensors and their matricization}
	We begin by laying out basic concepts in tensor theory required for
	the upcoming analysis. The core concept of a \emph{tensor} may be thought of as a
	multi-dimensional array. The \emph{order} of a
	tensor is defined to be the number of indexing entries in the array,
	referred to as \emph{modes}. The \emph{dimension} of a tensor in a particular
	mode is defined as the number of values taken by the index in that
	mode. If $\A$ is a tensor of order $N$ and dimension $M_i$ in each mode
	$i\in[N]$, its entries are denoted $\A_{d_1...d_N}$, where the index in each
	mode takes values $d_i\in [M_i]$.
	
	We will make use of the concept of the \emph{matricization of $\A$ \wrt~the balanced partition $(I,J)$}, denoted $\mat{\A}_{I,J}\in\R^{M^{\nicefrac{N}{2}}\times M^{\nicefrac{N}{2}}}$, which is essentially the arrangement of the tensor elements as a matrix whose rows correspond to $I$ and columns to $J$.
	Suppose $\A\in\R^{M{\times\cdots\times}M}$
	is a tensor of order $N$, and let $(I,J)$ be a balanced partition of $[N]$, \ie~$I$
	and~$J$ are disjoint size $\nicefrac{N}{2}$ subsets of $[N]$ whose union gives~$[N]$.
	The \emph{matricization of $\A$ \wrt~the partition $(I,J)$}, denoted
	$\mat{\A}_{I,J}$, is the $M^{{\nicefrac{N}{2}}}$-by-$M^{{\nicefrac{N}{2}}}$ matrix holding the entries of $\A$ such that $\A_{d_1{\ldots}d_N}$ is placed in row index $1+\sum_{t=1}^{{\nicefrac{N}{2}}}(d_{i_t}-1)M^{{\nicefrac{N}{2}}-t}$ and column index $1+\sum_{t=1}^{{\nicefrac{N}{2}}}(d_{j_t}-1)M^{{\nicefrac{N}{2}}-t}$.
	
	\subsubsection{Grid tensors provide lower bounds for the separation rank} 
	
	We now present the concept of grid tensors, which are a form of function discretization~\citep{hackbusch2012tensor}. Essentially, the function is
	evaluated for a set of points on an exponentially large grid in the
	input space and the outcomes are stored in a tensor. Formally, fixing a set of \emph{template} vectors
	$\x^{(1)},\ldots,\x^{(M)} \in \R^{d_x}$, the points on the grid are the set
	$\{(\x^{(d_1)},\ldots,\x^{(d_N)})\}_{d_1,\ldots,d_N=1}^M$. Given a function
	$y(\x^1,\ldots,\x^N)$, the set of its values on the grid arranged in the form of a tensor are
	called the grid tensor induced by $y$, denoted
	$\A(y)_{d_1,\ldots,d_N} \equiv y(\x^1=\x^{(d_1)},\ldots,\x^N=\x^{(d_N)})$.
	
	The following claim establishes a fundamental relation between a function's separation rank (see section~\ref{sec:sep_rank}) and the rank of the matrix obtained by the
	corresponding grid tensor matricization. This relation, which holds for all functions, is
	formulated below for functions realized by self-attention networks:
	\begin{claim}\label{claim:grid_sep_deep}
		For $p\in[d_x]$, let $y^{i, L, d_x, H, \Theta}_p$ be the scalar function computing the $p$th entry of an output vector at position $i\in[N]$ of the depth-$L$ self-attention network with hidden dimension $d_x$ and $H$ attention heads per layer, defined in eqs.~\eqref{eq:our_layer} and~\eqref{eq:our_network}. Then, for any integer $M$ and any set of template vectors $\x^{(1)},\ldots,\x^{(M)} \in \R^{d_x}$ it
		holds that:
		\begin{equation}
		\sep{I,J}{y^{i, L, d_x, H, \Theta}_p}\geq \rank{\mat{\A(y^{i, L, d_x, H, \Theta}_p)}_{I,J}},
		\end{equation}
		where $\A(y^{i, L, d_x, H, \Theta}_p)$ is the grid tensor of $y^{i, L, d_x, H, \Theta}_p$ with
		respect to the above template vectors.
	\end{claim}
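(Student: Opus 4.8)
The plan is to prove this as a completely general fact relating a function's separation rank to the rank of the matricized grid tensor; the self-attention structure plays no role whatsoever, which is precisely why the claim can be asserted ``for all functions.'' Accordingly I would argue for an arbitrary $y:(\R^{d_x})^N\to\R$ and simply instantiate the conclusion at $y^{i,L,d_x,H,\Theta}_p$. If $\sep{I,J}{y^{i,L,d_x,H,\Theta}_p}=\infty$ the inequality is vacuous, so I assume the separation rank equals some finite $R$, and I work throughout with the single partition $(I,J)$ — the same one indexing the matricization and the separation rank.

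By the definition of separation rank (eq.~\eqref{eq:sep_rank}) applied to $(I,J)$, finiteness yields functions $g_1,\ldots,g_R$ and $g'_1,\ldots,g'_R$ with
\[
y^{i,L,d_x,H,\Theta}_p(\x^1,\ldots,\x^N)=\sum_{r=1}^R g_r\left(\{\x^j:j\in I\}\right)\,g'_r\left(\{\x^j:j\in J\}\right).
\]
The first step is to evaluate both sides on the grid. Since the grid tensor is defined by pointwise evaluation at the template vectors and evaluation distributes over the product, each summand factorizes: $g_r$ depends only on the indices $\{d_j:j\in I\}$ and $g'_r$ only on $\{d_j:j\in J\}$. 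Writing $\A(g_r)$ for the order-$\nicefrac{N}{2}$ grid tensor of $g_r$ over its $\abs{I}$ inputs, and $\A(g'_r)$ likewise, I obtain
\[
\A\left(y^{i,L,d_x,H,\Theta}_p\right)_{d_1\ldots d_N}=\sum_{r=1}^R \A(g_r)_{(d_j)_{j\in I}}\,\A(g'_r)_{(d_j)_{j\in J}}.
\]

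The second step is to matricize. Under $\mat{\cdot}_{I,J}$ the rows are indexed by $(d_j)_{j\in I}$ and the columns by $(d_j)_{j\in J}$, so each summand above becomes an outer product of the two flattened tensors. Setting $\uu_r$ to be $\A(g_r)$ arranged as a length-$M^{\nicefrac{N}{2}}$ vector along the row indices and $\vv_r$ to be $\A(g'_r)$ arranged along the column indices, I get $\mat{\A(y^{i,L,d_x,H,\Theta}_p)}_{I,J}=\sum_{r=1}^R \uu_r \vv_r^\top$. A sum of $R$ outer products has rank at most $R$, hence $\rank{\mat{\A(y^{i,L,d_x,H,\Theta}_p)}_{I,J}}\le R=\sep{I,J}{y^{i,L,d_x,H,\Theta}_p}$, which is the claim.

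I do not expect a genuine obstacle; this is the standard separation-rank/matricization correspondence underlying the whole lower-bound program (it is what makes matrix rank a legitimate lower bound for the separation rank, to be exploited later via a concrete weight assignment). The only points demanding care are bookkeeping rather than substance: verifying that the row/column index ordering fixed in the matricization definition is consistent with assigning $I$ to rows and $J$ to columns, so that the flattenings of $\A(g_r)$ and $\A(g'_r)$ land in the correct row- and column-index spaces; and stating explicitly that the bound holds trivially when the separation rank is not finite.
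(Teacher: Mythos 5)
Your proposal is correct and matches the paper's own proof essentially step for step: handle the infinite case trivially, evaluate the finite separable decomposition on the grid so each summand factorizes into tensors over the $I$- and $J$-indices, and observe that matricization turns each summand into an outer product, bounding the rank by $R$. The bookkeeping points you flag (row/column index consistency, the infinite case) are exactly the only details the paper attends to as well, so there is nothing to add.
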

	\ifdefined\SQUEEZE \vspace{-4mm} \fi
	\begin{proof}
		If $\sep{I,J}{y^{i, L, d_x, H, \Theta}_p} = \infty$ then the inequality is trivially
		satisfied. Otherwise, assume that
		$\sep{I,J}{y^{i, L, d_x, H, \Theta}_p} = K \in \N$, and let $\{g_\nu^I, g_\nu^J\}_{\nu=1}^K$
		be the functions of the respective decomposition to a sum of separable
		functions, i.e. that the following holds:
		\begin{align*}
		y^{i, L, d_x, H, \Theta}_p(\x^1,\ldots,\x^N)
		&= \sum_{\nu=1}^K g_\nu^I(\x^j:j\in I)
		\cdot g_\nu^J(\x^j:j\in J).
		\end{align*}
		Then, by definition of the grid tensor, for any template vectors $\x^{(1)},\ldots,\x^{(M)}\in \R^{d_x}$ the following
		equality holds:
		\begin{align*}
		\A(y^{i, L, d_x, H, \Theta}_p)_{d_1,\ldots,d_N} &=
		\sum_{\nu = 1}^K g_\nu^I(\x^{(d_j)}:j\in I)
		\cdot g_\nu^J(\x^{(d_j)}:j\in J) \\
		&\equiv \sum_{\nu=1}^K V^\nu_{d_j:j\in [I]} U^\nu_{d_j:j\in [J]},
		\end{align*}
		where $V^\nu$ and $U^\nu$ are the tensors holding the values of $g_\nu^I$
		and $g_\nu^J$, respectively, at the points defined by the template vectors.
		Under the matricization according to the $(I,J)$ partition, it holds that
		$\mat{V^\nu}_{I,J}$ and $\mat{U^\nu}_{I,J}$ are column and row vectors,
		respectively, which we denote by $\vv_\nu$ and $\uu_\nu^T$. It follows that the
		matricization of the grid tensor is given by:
		\begin{align*}
		\mat{\A(y^{i, L, d_x, H, \Theta}_p)}_{I,J} &= \sum_{\nu=1}^K \vv_\nu \uu_\nu^T,
		\end{align*}
		which means that
		$\rank{\mat{\A(y^{i, L, d_x, H, \Theta}_p)}_{I,J}}\leq K=\sep{I,J}{y^{i, L, d_x, H, \Theta}_p}$.
	\end{proof}
	\vspace{2mm}

	\subsubsection{Method for bounding the grid tensor's rank}
	Claim~\ref{claim:grid_sep_deep} assures us that
	the separation rank of the function realized by a self-attention network is lower bounded by the rank of the matrix obtained by the
	corresponding grid tensor matricization, for any choice of template
	vectors. Specifically:
	\begin{equation*}
	\sep{I,J}{y^{i, L, d_x, H, \Theta}_p}
	\geq \rank{\mat{\A(y^{i, L, d_x, H, \Theta}_p)}_{I,J}}.
	\end{equation*}
	Thus, proving that
	$\rank{\mat{\A(y^{i, L, d_x, H, \Theta}_p)}_{I,J}} $
	is higher than the lower bounds stated in theorems~\ref{theorem:depth_efficiency} and~\ref{theorem:width_bound}
	for all of the values of the parameters $\Theta$ but a set of Lebesgue measure
	zero, would satisfy the theorems.
	
	We note that since the network's operation is polynomial in $\Theta$, then the entries of the grid tensor are also polynomial. 
	\cite{sharirtractable} prove a claim regarding the prevalence of the maximal matrix rank
	for matrices whose entries are polynomial functions. Essentially, they show that it suffices to find a single configuration of the parameters, denoted $\theta\in \R^{K}$ (where $K$ is the number of scalar parameters), for which the resultant matrix is of rank $r$, in order to show the rank is at least $r$ for all configurations in $\R^K$ but a set of measure zero in $\R^K$. 
	For simplicity of the proof we will find a single configuration $\theta\in\C^K$ for which the resultant matrix is of the required rank. We therefore modify the original claim to fit this setting, still proving the rank is lower bounded for all configurations in $\R^K$ but a set of measure zero in $\R^K$:
	\begin{claim} \label{claim:rank_everywhere}
		Let $M, N, K \in \N$, $1 \leq r \leq \min\{M,N\}$ and an $M\times N$ matrix $A$ where each entry is a polynomial mapping $A_{ij}$ over $K$ variables
		for every $i \in [M]$ and $j\in [N]$. If there exists a
		point $\theta\in\mathbb{F}^K$, where $\mathbb{F}$ is either $\R$ or $\C$,  s.t. ${\textrm {rank}}{(A(\theta))} \geq r$, then the set
		$\{\theta \in \R^K : \textrm{rank}{(A(\theta))} < r\}$ has zero measure (w.r.t. the
		Lebesgue measure over $\R^K$).
	\end{claim}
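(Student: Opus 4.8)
The plan is to reduce the statement to the elementary fact that the real zero locus of a nonzero polynomial in $K$ real variables has Lebesgue measure zero. First I would recall the determinantal characterization of matrix rank: $\rank{A(\theta)}\ge r$ precisely when at least one of the $\binom{M}{r}\binom{N}{r}$ minors of order $r$ fails to vanish at $\theta$. Each such minor is the determinant of an $r\times r$ submatrix of $A$, and since every entry $A_{ij}$ is a polynomial in the $K$ variables, each minor is itself a polynomial $p_s$ in those variables, the index $s$ running over the choices of $r$ rows and $r$ columns.

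Next I would use the hypothesis to exhibit a minor that is not identically zero. By assumption there is a point $\theta\in\mathbb{F}^K$ with $\rank{A(\theta)}\ge r$, so by the determinantal characterization there is an index $s_0$ for which $p_{s_0}(\theta)\ne0$; in particular $p_{s_0}\not\equiv0$. This is the only place where the real-versus-complex distinction enters, and it is handled at once: a polynomial with real coefficients that takes a nonzero value at \emph{any} point of $\C^K$ cannot be the zero polynomial, so a single nonvanishing evaluation — whether the witness $\theta$ lies in $\R^K$ or in $\C^K$ — already certifies $p_{s_0}\not\equiv0$ as a polynomial identity over $\R$.

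Finally I would invoke the key lemma that a polynomial $q\not\equiv0$ in $K$ real variables has $\{\theta\in\R^K:q(\theta)=0\}$ of Lebesgue measure zero, which follows by a standard induction on $K$ with Fubini's theorem (the base case being that a nonzero univariate polynomial has finitely many roots). The rank-deficient set $\{\theta\in\R^K:\rank{A(\theta)}<r\}$ is exactly the locus on which every order-$r$ minor vanishes, so it is contained in $\{\theta\in\R^K:p_{s_0}(\theta)=0\}$; since $p_{s_0}\not\equiv0$, the lemma forces this set — and hence the rank-deficient set — to be null. The main (indeed essentially the only) obstacle is the polynomial zero-set lemma, and even that is routine; the remainder is bookkeeping with minors, so I expect the argument to be short.
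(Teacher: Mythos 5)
Your argument is correct, and its skeleton matches the paper's: both proofs characterize $\mathrm{rank}(A(\theta))\geq r$ via the order-$r$ minors, observe that each minor is a polynomial in $\theta$, use the witness point to certify that the relevant polynomial is not identically zero, and conclude via the fact that the real zero set of a nonzero polynomial is Lebesgue-null. The one substantive difference is the aggregation step. The paper combines \emph{all} $c=\binom{M}{r}\binom{N}{r}$ minors $\{f_i\}$ into a single polynomial $f(\theta)=\sum_{i=1}^{c}f_i(\theta)^2$, so that $f(\theta)=0$ holds exactly when $\mathrm{rank}(A(\theta))<r$, and then cites a reference for the null-zero-set fact; you instead fix a single minor $p_{s_0}$ that is nonzero at the witness and use the containment $\{\theta\in\R^K:\mathrm{rank}(A(\theta))<r\}\subseteq\{\theta\in\R^K:p_{s_0}(\theta)=0\}$, proving the null-zero-set lemma yourself by induction with Fubini. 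Your single-minor route is in fact slightly cleaner at the one delicate point, the $\mathbb{F}=\C$ case: $p_{s_0}(\theta)\neq 0$ at a complex witness immediately gives $p_{s_0}\not\equiv 0$, whereas the paper's sum of squares $f$ can vanish at a complex point even though some minor does not (\eg, $f_1(\theta)=1$, $f_2(\theta)=\mathbf{i}$ gives $f_1^2+f_2^2=0$), so the paper's one-line dismissal of ``$f$ is the zero polynomial'' implicitly needs your observation that a (real-coefficient) polynomial nonvanishing somewhere on $\C^K$ is not the zero polynomial, hence not identically zero on $\R^K$. The paper's aggregation buys an exact polynomial description of the rank-deficient set rather than a containment, but for the measure-zero conclusion the containment is all that is needed, so nothing is lost in your version.
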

	\begin{proof} (based on a proof in \cite{sharirtractable})
		Recall that $\rank{A(\theta)} \geq r $ iff there exits a non-zero $r \times r$ minor of $A(\theta)$. 
		Note that a minor of $A(\theta)$ is polynomial in the entries of $A(\theta)$, and so it is polynomial in $\theta$ as well. Let $c = {M \choose r} \cdot {N \choose r}$
		be the number of minors in $A$, denote the minors by $\{f_i(\theta)\}_{i=1}^c$, and define a new polynomial
		function $f(\theta) = \sum_{i=1}^c f_i(\theta)^2$. It thus holds that $f(\theta) = 0$ iff for all $i \in [c]$ it holds that $f_i(\theta) = 0$,
		i.e. $f(\theta) = 0$ iff $\rank{A(\theta)} < r$.
		
		Now, $f(\theta)$ is a polynomial in the entries of $\theta$, and so it either vanishes on a set of zero measure in $\R^K$, or
		it is the zero polynomial~(see \citet{caron2005zero} for proof). Since we assumed that there exists $\theta \in \mathbb{F}^K$ s.t.
		$\textrm{rank}(A(\theta)) \geq r$, the latter option is not possible. 
	\end{proof}

	\subsection{Proof of the lower bounds in theorems~\ref{theorem:depth_efficiency} and~\ref{theorem:width_bound}}
	
	In this section, we show there exists an assignment for the  weight matrices of a self-attention network, along with a specific choice of template vectors, for
	which $\rank{\mat{\A(y^{i, L, d_x, H, \Theta}_p)}_{I,J}}$  surpasses the lower bounds stated in theorems~\ref{theorem:depth_efficiency} and~\ref{theorem:width_bound} in the appropriate depth to width ratios.
	In accordance with Claim~\ref{claim:rank_everywhere}, the lower bounds in the theorems will follow since such an assignment implies this rank is achieved for all configurations of
	the self-attention network weights but a set of Lebesgue measure zero.
	
	\begin{proof}(of lower bounds in theorems~\ref{theorem:depth_efficiency} and~\ref{theorem:width_bound}).
		
		Relying on claim~\ref{claim:grid_sep_deep} we will bound the separation rank from below via the rank of the matricization \wrt~a partition $(I,J)$ of a grid tensor induced by $y^{i, L, d_x, H, \Theta}_p$, computed by any set of template vectors: $\sep{I,J}{y^{i, L, d_x, H, \Theta}_p}\geq \rank{\mat{\A(y^{i, L, d_x, H, \Theta}_p)}_{I,J}}$.
		Relying on claim~\ref{claim:rank_everywhere}, we ensure that the rank of $\mat{\A(y^{i, L, d_x, H, \Theta}_p)}_{I,J}$ is above a certain value almost everywhere by finding an assignment of the network parameters for which it achieves this value.
		
		Lemma~\ref{lemma:assignment} assures us that for any matrix $V \in \R^{\nicefrac{M}{2} \times \nicefrac{(d_x - H)}{2}}$ with $l^2$ normalized rows, there exists a choice of $M+1$ template vectors $\x^{(1)},\ldots,\x^{(M+1)} \in \R^{d_x}$, as well as an assignment to the self-attention network weights 
		for which:
		\begin{equation}
		\mat{\A(y^{i, L, d_x, H, \Theta}_p)}_{\tilde{I},\tilde{J}}=\textrm{Const.} \cdot \left(V V^T\right)^{\odot (3^{L-2})},
		\end{equation}
		where $\mat{\A(y^{i, L, d_x, H, \Theta}_p)}_{\tilde{I},\tilde{J}}$ is a sub-matrix of the grid tensor matricization $\mat{\A(y^{i, L, d_x, H, \Theta}_p)}_{I,J}$ of size   $\nicefrac{M}{2}\times\nicefrac{M}{2}$ and $\odot$ represents the Hadamard power operation, i.e., $\left(A^{\odot k}\right)_{ij} = A_{ij}^k.$
		Since proving the existence of a sub-matrix of a certain rank lower-bounds the rank of the full matrix by this rank, it suffices to find a matrix $V$ such that $\rank{\left(V V^T\right)^{\odot (3^{L-2})}}$ upholds the stated dependence.
		
		Noting that the operation of raising a rank $r$ matrix to the Hadamard power of $p$ results in a matrix upper bounded by $\multiset{r}{p}$ (see proof in~\cite{amini2012low} for example) 
		with the notation of the multiset coefficient $\multiset{n}{k}:=\binom{n+k-1}{k}$, and that the rank of $V V^T$ is upper bounded by $\nicefrac{(d_x - H)}{2}$, we choose the dimension 
		$\nicefrac{M}{2}=\multiset{\nicefrac{\left(d_{x}-H\right)}{2}}{3^{L-2}}$ to facilitate the rank increase.
		
		For this choice, observe that it suffices to prove that the  sub-matrix $\mat{\A(y^{i, L, d_x, H, \Theta}_p)}_{\tilde{I},\tilde{J}}\in R^{\nicefrac{M}{2}\times \nicefrac{M}{2}}$ is fully ranked in order to satisfy the theorems. This follows by using the identity $\binom{n}{k}\geq\left(\frac{n}{k}\right)^k$ we have:
		$\multiset{n}{k}=\binom{n+k-1}{k}=\binom{n+k-1}{n-1}\geq\max\left\{\left(\frac{n-1}{k}+1\right)^k,\left(\frac{k}{n-1}+1\right)^{n-1}\right\}$
		
		And accordingly:
		$$\multiset{\nicefrac{\left(d_{x}-H\right)}{2}}{3^{L-2}}\geq\max\left\{\left(\frac{\nicefrac{\left(d_{x}-H\right)}{2}-1}{3^{L-2}}+1\right)^{3^{L-2}},\left(\frac{3^{L-2}}{\nicefrac{\left(d_{x}-H\right)}{2}-1}+1\right)^{\nicefrac{\left(d_{x}-H\right)}{2}-1}\right\}$$
		and the log of this bounds the expressions in the theorems' lower bounds, where for each regime the tighter lower bound is used.
		
		Defining for brevity $d:=\nicefrac{\left(d_{x}-H\right)}{2}$ and $\lambda:=3^{L-2}$, it remains only to find a specific matrix $V \in \R^{\multiset{d}{\lambda} \times d}$ with $l^2$ normalized rows such that the operation of taking the rank $d$ matrix $VV^\top$ to the Hadamard power of $\lambda$ would result in a fully ranked matrix.
		We will provide such a matrix, and prove for it that:
		\begin{equation}\label{eq:schmatic}
		\left(VV^\top\right)^{\odot\lambda}=\sum_{k=1}^{\multiset{d}{\lambda}} \aaa^{(k)} \otimes \bb^{(k)}\end{equation}
		for $\{\aaa^{(k)}\}_{k=1}^{\multiset{d}{\lambda}}$ and $\{\bb^{(k)}\}_{k=1}^{\multiset{d}{\lambda}}$ which are two	
		sets of linearly independent vectors.
		
		For $\alpha,\beta\in[\multiset{d}{\lambda}]$, observing an entry of  $\left(VV^\top\right)^{\odot\lambda}$:
		\begin{align}
		\ensuremath{\left(\left(VV^{\top}\right)^{\odot\lambda}\right)_{\alpha\beta}=}&\left(VV^{\top}\right)_{\alpha\beta}^{\lambda}=\left(\sum_{r=1}^{d}v_{r}^{\left(\alpha\right)}v_{r}^{\left(\beta\right)}\right)^{\lambda}=\\\label{eq:multinomial}\sum_{k_{1}+\cdots+k_{d}=\lambda}\left(\begin{matrix}\lambda\\
		k_{1},\ldots,k_{d}
		\end{matrix}\right)&\left[\prod_{r=1}^{d}\left(v_{r}^{\left(\alpha\right)}\right)^{k_{r}}\right]\left[\left[\prod_{r=1}^{d}\left(v_{r}^{\left(\beta\right)}\right)^{k_{r}}\right]\right]
		\end{align}	
		where the first equality follows from the definition of the Hadamard power, in the section we denoted $v_{r}^{\left(\alpha\right)},v_{r}^{\left(\beta\right)}$ as the $r$th entries in rows $\alpha$ and $\beta$ of $V$, and in the second line we expanded the power with the multinomial identity. 
		Identifying the form of eq.~\eqref{eq:multinomial} with the schematic form of eq.~\eqref{eq:schmatic}, it remains to find a specific matrix $V \in \R^{\multiset{d}{\lambda} \times d}$ with $l^2$ normalized rows for 
		which the size $\multiset{d}{\lambda}$ set $\left\{\aaa^{(k_{1},\ldots,k_{d})}\right\}_{k_{1}+\cdots+k_{d}=\lambda}$ is linearly independent, where $a_\alpha^{(k_{1},\ldots,k_{d})}= \prod_{r=1}^{d}\left(v_{r}^{\left(\alpha\right)}\right)^{k_{r}}$. 
		
		We show this is the case for $V$ in which the rows are each associated with one of $\multiset{d}{\lambda}$ configurations of distributing $d$ integer numbers that sum up to $\lambda$, \ie, in which each row is associated with specific $\left\{q_{1}^{\alpha},\ldots,q_{d}^{\alpha}\geq0,\sum_{r=1}^{d}q_{r}^{\alpha}=\lambda\right\} $. Explicitly, we take the rows $\vv_{r}^{\left(\alpha\right)}$ to be: 
		$$\forall r\in[d]:v_{r}^{\left(\alpha\right)}=\nicefrac{\Omega^{q_{r}^{\alpha}}}{\sqrt{\sum_{r'=1}^{d}\Omega^{2q_{r'}^{\alpha}}}}$$
		
		Given this $V$, each vector in the above defined set $\left\{\aaa^{(k_{1},\ldots,k_{d})}\right\}_{k_{1}+\cdots+k_{d}=\lambda}$  is equal to: 
		\begin{align*}
		a_{\alpha}^{(k_{1},\ldots,k_{d})}=\prod_{r=1}^{d}\left(v_{r}^{\left(\alpha\right)}\right)^{k_{r}}=&\prod_{r=1}^{d}\left(\frac{\Omega^{q_{r}^{\alpha}}}{\sqrt{\sum_{r'=1}^{d}\Omega^{2q_{r'}^{\alpha}}}}\right)^{k_{r}}=\frac{\prod_{r=1}^{d}\Omega^{q_{r}^{\alpha}k_{r}}}{\prod_{r=1}^{d}\left(\sum_{r'=1}^{d}\Omega^{2q_{r'}^{\alpha}}\right)^{\frac{k_{r}}{2}}}\\&={\left(\sum_{r'=1}^{d}\Omega^{2q_{r'}^{\alpha}}\right)^{-\frac{\lambda}{2}}}\cdot\left[\Omega^{\sum_{r=1}^{d}q_{r}^{\alpha}k_{r}}\right]
		\end{align*}
		Observing that the factor attained from the normalization depends only on the rows and doesn't vary with the different vectors labeled by $(k_{1},\ldots,k_{d})$, we note it does not affect their linear dependence (amounts to a multiplication by a diagonal matrix with non-zero entries on the diagonal - does not affect the rank). 
		
		We prove that the set$\left\{\hat{\aaa}^{(k_{1},\ldots,k_{d})}\right\}_{k_{1}+\cdots+k_{d}=\lambda}$ for $\hat{a}_{\alpha}^{(k_{1},\ldots,k_{d})}=\Omega^{\sum_{r=1}^{d}q_{r}^{\alpha}k_{r}}$ is linearly independent by arranging it as the columns of the matrix
		$A\in \R^{\multiset{d}{\lambda} \times \multiset{d}{\lambda}}$, and showing that $A$ is fully ranked.
		
		Since the elements of $A$ are polynomial in $\Omega$, then as
		lemma~\ref{lemma:poly_full_rank} shows, it is sufficient to show that there exists
		a single contributor to the determinant of $A$ that has the highest degree
		of $\Omega$ in order to ensure that the matrix is fully ranked for all values of $\Omega$
		but a finite set, so $\Omega$ should simply be chosen to be any number that is outside of this set. Observing the summands of the determinant, i.e.
		$ \Omega^{\sum_{q_{1}+\cdots+q_{d}=\lambda}
			\inprod{\q}{\sigma(\q)}}$,
		where $\sigma$ is a permutation on the columns of $A$, 
		lemma~\ref{lemma:rearrange} assures us the existence of a strictly maximal
		contributor, satisfying the conditions of lemma~\ref{lemma:poly_full_rank}, thus the set $\left\{\hat{\aaa}^{(k_{1},\ldots,k_{d})}\right\}_{k_{1}+\cdots+k_{d}=\lambda}$ is linearly
		independent, and the lower bounds in the theorems follow.
	\end{proof}

	\subsection{Technical lemmas}
	The following lemma details the assignment of the self-attention network weights and the choice of template vectors which help us establish theorem~\ref{theorem:depth_efficiency}.
	
	\begin{lemma}\label{lemma:assignment}
		For any balanced partition of $[N]$, denoted $(I,J)$, for any even $M$, and for any matrix $V \in \R^{\nicefrac{M}{2} \times \nicefrac{(d_x - H)}{2}}$ with rows that are $l^2$ normalized, there exists a choice of $M+1$ template vectors $\x^{(1)},\ldots,\x^{(M+1)} \in \R^{d_x}$, as well as an assignment to the self-attention network weights,
		for which:
		\begin{equation}\label{eq:submatrix}
		\mat{\A(y^{i, L, d_x, H, \Theta}_p)}_{\tilde{I},\tilde{J}}=\textrm{Const.} \cdot \left(V V^T\right)^{\odot 3^{L-2}},
		\end{equation}
		where $\mat{\A(y^{i, L, d_x, H, \Theta}_p)}_{\tilde{I},\tilde{J}}$ is a sub-matrix of the grid tensor matricization $\mat{\A(y^{i, L, d_x, H, \Theta}_p)}_{I,J}$ of size   $\nicefrac{M}{2}\times\nicefrac{M}{2}$ and $\odot$ represents the Hadamard power operation, i.e., $\left(A^{\odot k}\right)_{ij} = A_{ij}^k$.
	\end{lemma}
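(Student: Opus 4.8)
The plan is to exhibit, for the given $V$, an explicit set of template vectors and an explicit assignment of $\Theta$ under which a prescribed $\nicefrac M2\times\nicefrac M2$ block of $\mat{\A(y_p^{i,L,d_x,H,\Theta})}_{I,J}$ collapses to $\text{Const}\cdot(VV^T)^{\odot 3^{L-2}}$. First I would split the $d_x$ coordinates into $d_x-H$ \emph{data} coordinates and $H$ \emph{control} coordinates. For $\alpha\in[\nicefrac M2]$ I would take the template $\x^{(\alpha)}$ to carry the row $\vv^{(\alpha)}$ of $V$ in the data coordinates and a fixed pattern in the control coordinates, and reserve $\x^{(M+1)}$ as a \emph{neutral} template to be placed at every non-special input position. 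The submatrix $\mat{\A}_{\tilde I,\tilde J}$ is then obtained by fixing all input positions to $\x^{(M+1)}$ except one distinguished position $a\in I$, swept over $[\nicefrac M2]$ to index the row $\alpha$, and one distinguished position $b\in J$, swept over $[\nicefrac M2]$ to index the column $\beta$; the output index $i$ and all remaining positions are held fixed.

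Next I would choose the weights layer-by-layer. Keeping a single active head per layer (zeroing the others collapses the sum over $h\in[H]^{[C(L)]}$ to one term and is what produces the $d_x-H$ inside the bound), I would take block-sparse $W^{\textrm K},W^{\textrm Q},W^{\textrm V},W^{\textrm O}$ that (i) project the key/query onto the data subspace so that each attention coupling $\langle W^{\textrm K}\x,W^{\textrm Q}\x'\rangle$ between two special positions evaluates to $\langle \vv^{(\alpha)},\vv^{(\beta)}\rangle=(VV^T)_{\alpha\beta}$, and (ii) use the control coordinates to route, so that any coupling touching a neutral position $\x^{(M+1)}$, or coupling a special position to itself, contributes only a fixed scalar. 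Because the rows of $V$ are $l^2$-normalized, such self-couplings equal $1$, which is exactly what keeps the overall prefactor a single position-independent constant rather than an $\alpha,\beta$-dependent quantity that would corrupt the Hadamard structure.

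I would then track the power of $(VV^T)_{\alpha\beta}$ accumulated through the recursion by induction on $L$, using the abstract form of Lemma~\ref{lemma:sa_abstract_structre} and the relation $C(L+1)=3C(L)+1$. The point is that the layer-$(l{+}1)$ output feeds three copies of the layer-$l$ output into the Key, Query and Value slots, so the number of surviving special couplings triples at each step; with the two outermost levels spent on setting up the $a$--$b$ routing and on the query at $i$, the count starts at one and triples over the remaining $L-2$ levels, giving exactly $3^{L-2}$. The sums over the position indices $j_1,\dots,j_C$ collapse because the control routing annihilates every assignment other than the intended placement of $a,b$ at the distinguished slots, leaving $\mat{\A}_{\tilde I,\tilde J}^{\,\alpha\beta}=\text{Const}\cdot(VV^T)_{\alpha\beta}^{\,3^{L-2}}$, i.e.~eq.~\eqref{eq:submatrix}.

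The main obstacle I expect is the combinatorial bookkeeping of the branching recursion: I must verify that \emph{exactly} the intended $3^{L-2}$ couplings carry $(VV^T)_{\alpha\beta}$ while every other coupling, head choice, and position assignment contributes only the uniform constant, so that no cross terms survive to break either the clean Hadamard power or the uniformity of $\text{Const}$ across the block. Making the control-coordinate routing rigid enough to enforce this simultaneously at all $C(L)$ coupling sites, while still leaving a legitimate weight assignment (which by Claim~\ref{claim:rank_everywhere} then suffices for the generic statement), is the delicate step; once it is in place, reading off the submatrix and matching it to $(VV^T)^{\odot 3^{L-2}}$ is immediate.
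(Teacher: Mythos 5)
Your scaffolding (two distinguished positions indexing the $\nicefrac{M}{2}\times\nicefrac{M}{2}$ block, a neutral template at all other positions, few active heads, an induction with tripling that yields $3^{L-2}$, and the appeal to claim~\ref{claim:rank_everywhere} at the end) matches the paper's setup, but the crux of the lemma is left unresolved, and the mechanism you sketch for it would fail. In the linearized network every coupling site is of the form $M^{(c,h)}_{r_1,r_2}=\sum_{j=1}^{N}\left\langle A^{(c,h)}_{r_1},\x^{(j)}\right\rangle\left\langle B^{(c,h)}_{r_2},\x^{(j)}\right\rangle$ (lemma~\ref{lemma:sa_abstract_structre}): the weights are position-independent, and each $M$ pairs a position \emph{with itself}, summed over all $N$ positions. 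There is no routing mechanism that can ``annihilate every assignment other than the intended placement of $a,b$'': even if you zero the neutral template on the data coordinates, the diagonal assignments $j=a$ and $j=b$ survive and, by the $l^2$ normalization you invoke, contribute additive constants $\norm{\vv^{(\alpha)}}^2=\norm{\vv^{(\beta)}}^2=1$ \emph{in the same sum} as the signal. So each relevant site evaluates to something of the form $c_1+c_2\,(VV^T)_{\alpha\beta}$ rather than a pure $(VV^T)_{\alpha\beta}$, and multiplying across the $C(L)$ sites produces a binomial mixture $\sum_k c_k\,(VV^T)_{\alpha\beta}^{\,k}$ — not the clean Hadamard power $\textrm{Const}\cdot(VV^T)_{\alpha\beta}^{\,3^{L-2}}$ that eq.~\eqref{eq:submatrix} demands and that the subsequent rank argument relies on. Your remark that self-couplings equal $1$ ``keeps the overall prefactor a constant'' conflates multiplicative and additive roles: these $1$'s add to the signal, they do not factor out.

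The paper's proof resolves exactly this obstruction by a different mechanism, which is why it looks nothing like per-site routing. The first layer is set so that \emph{every} position outputs the same vector $\vv$, built from $\x^{(i_1)}+\x^{(j_1)}+(N-2)\x^{(M+1)}$, with the value matrix $W^{\textrm{V},1,h}$ carrying complex entries $\mathbf{i}$ so that the data part of $\vv$ encodes $\vv^{(i_1)}+\mathbf{i}\,\vv^{(j_1)}$ (claim~\ref{claim:rank_everywhere} is stated over $\mathbb{F}=\C$ precisely to license this). The row indices then never need to meet through cross-position attention couplings at all: the middle layers raise the single scalar $\norm{\tilde{\vv}}^2$ to increasing powers multiplicatively, via the recursion $\beta_{l+1}=3\beta_l+2$ with $\beta_l=3^{l-2}-1$ (this is the rigorous version of your tripling count), and the readout layer yields $y^{(L)}\propto\norm{\tilde{\vv}}^{2\cdot 3^{L-2}}$. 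The $l^2$ normalization of the rows of $V$ is then used not to make self-couplings harmlessly equal to $1$, but to make the contaminating diagonal terms \emph{cancel exactly}: in the bilinear (non-Hermitian) square, $\norm{\tilde{\vv}}^2=\sum_k(V_{i_1,k}+\mathbf{i}V_{j_1,k})^2=1-1+2\mathbf{i}(VV^T)_{i_1,j_1}$. Without this cancellation (complex entries or an equivalent signed-block device supplying the minus sign), a real-valued version of your construction gives $\norm{\vv^{(\alpha)}+\vv^{(\beta)}}^2=2+2(VV^T)_{\alpha\beta}$, i.e.\ the same additive contamination, so the ``delicate step'' you flag is not mere bookkeeping — it is the missing idea, and your proposal as stated does not prove the lemma.
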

	\begin{proof}
		We present below a choice of weights and template vectors that yields the stated form for a sub-matrix of $\mat{\A(y^{i, L, d_x, H, \Theta}_p)}_{I,J}$. Subsequently we will plug these values into the self-attention operation stated in eq.~\eqref{eq:our_layer}, and prove that this form follows.
		
		Though the proof has many technical details, it has 3 essential parts. We first choose the weights of the first layer so that the outputs in all locations are the same and equal to a summation of the input vectors.
		Because the weight matrices are not $d_x \times d_x$ but are decomposed through the attention dimension $d_a \times d_x$ or $d_x \times d_a$, then we divide the coordinates of the $d_x$-dimensional vectors into contiguous segments of length $d_a$, and set the weights to either project these segments to the $d_a$-dimensional space or invert this mapping with added zero-padding.
		For the second part, we set the key and query matrices to use the same ``projections'' we used in the first layer to compute inner-products between each segment, while setting the value and output matrices to preserve each head's segment (with zero-padded coordinates).
		For the remainder of the network's layers, we use the previous step to compute increasingly larger powers of the norm of the vector computed in the first layer, by reconstructing the squared-norm from the inner products of each segment.
		The template vectors (and parameters) are chosen such that the square of this norm will be equal to $VV^T$.
		
		The assignment to the network weights:
		\begin{align*}
		W_{i,j}^{V,1,h} & =\frac{1}{N}\cdot \begin{cases}
		1_{i=j-d_{a}\cdot (h{-}1)} & \begin{matrix}d_{a}(h{-}1) < j\leq d_{a}(h{-}1)+\frac{d_{a}-1}{2} \\ 0 < i\leq\frac{d_{a}-1}{2}\end{matrix}\\
		\mathbf{i}\cdot1_{i=j-d_{a}\cdot (h{-}1)-\frac{d_{a}-1}{2}} & \begin{matrix}d_{a}(h{-}1)+\frac{d_{a}-1}{2} < j\leq d_{a}h-1\\0 < i\leq\frac{d_{a}-1}{2}\end{matrix}\\
		-1_{i=j-d_{a}\cdot (h{-}1)} & \begin{matrix}d_{a}(h{-}1)< j\leq d_{a}(h{-}1)+\frac{d_{a}-1}{2}\\\frac{d_a-1}{2} < i \leq d_a - 1\end{matrix}\\
		-\mathbf{i}\cdot1_{i=j-d_{a}\cdot (h{-}1)-\frac{d_{a}-1}{2}} & \begin{matrix}d_{a}(h{-}1)+\frac{d_{a}-1}{2} < j\leq d_{a}h-1\\\frac{d_{a}-1}{2}  < i \leq d_{a}-1\end{matrix}\\
		1 & j=d_{a}h,\frac{d_{a}-1}{2} < i \leq d_{a}\\
		0 & \textrm{Otherwise}
		\end{cases}\\
		W_{i,j}^{O,l,h} & =\begin{cases}
		1_{j=i-d_{a}(h{-}1)} & d_{a}(h{-}1) < i \leq d_{a}h\\
		0 & \textrm{Otherwise}
		\end{cases}\\
		\forall 1{<}l{<}L,W_{i,j}^{V,l,h} & =\begin{cases}
		1_{i=j-d_{a}\cdot (h{-}1)} & d_{a}(h{-}1) < j \leq d_{a}h\\
		0 & \textrm{Otherwise}
		\end{cases}\\
		W_{i,j}^{V,L,h} & =\mathbf{i} \cdot 1_{j=d_{a}}\\
		W_{i,j}^{K,1,h}&=W_{i,j}^{Q,1,h}=1_{i=1 \wedge j=d_{a}}\\
		W_{i,j}^{K,2,h}&=W_{i,j}^{Q,2,h}=\begin{cases}
		1_{i=j-d_{a}\cdot (h{-}1)} & \begin{matrix}d_{a}(h{-}1) < j\leq d_{a}(h{-}1)+\frac{d_{a}-1}{2}\\0<i\leq\frac{d_{a}-1}{2}\end{matrix}\\
		0 & \textrm{Otherwise}
		\end{cases}\\
		\forall l{>}2,W_{i,j}^{K,l,h}&=W_{i,j}^{Q,l,h} =\begin{cases}
		1 & i=1 \wedge j \bmod d_a \neq 0\\
		0 & \textrm{Otherwise}
		\end{cases}
		\end{align*}
		In the above, we denoted the complex root of $-1$ as $\mathbf{i}$, to differentiate it from the index $i$. The choice of template vectors:
		\begin{align*}
		x^{(i)}_j &= \begin{cases}
		V_{i,\phi(j)} & i \leq \nicefrac{M}{2} \wedge (j - 1) \bmod d_a < \frac{d_a - 1}{2} \\
		V_{i - \nicefrac{M}{2} + 1, \phi\left(j - \frac{d_a - 1}{2}\right)} & \frac{M}{2} < i \leq M \wedge \frac{d_a - 1}{2} \leq (j - 1) \bmod d_a < d_a - 1 \\
		1 & (j - 1) \bmod d_a = d_a - 1 \\
		0 & \text{Otherwise}   
		\end{cases}
		\end{align*}
		where $\phi(j) \equiv \left\lfloor \nicefrac{j - 1}{d_a} \right\rfloor \cdot (d_a - 1) + (j - 1 \bmod d_a) + 1$.
		
		W.l.o.g. we can assume that $I=\{1,\ldots,\nicefrac{N}{2}\},J=\{\nicefrac{N}{2} + 1, \ldots, N\}$. We examine the sub-matrix defined by the following indices:
		\begin{align}
		\tilde{I} &= \{(i_1,\ldots,i_{\nicefrac{N}{2}}) : 1\leq i_1 \leq \nicefrac{M}{2} \wedge \forall k > 1, i_k = M + 1\}\\
		\tilde{J} &= \{(j_1,\ldots,j_{\nicefrac{N}{2}}) : \nicefrac{M}{2} < j_1 \leq M \wedge \forall k > 1, j_k = M + 1\}
		\end{align}
		
		With all of the above in place, we are ready to prove that the resulting sub-matrix has the form of eq.~\eqref{eq:submatrix}. We begin with the output of the first self-attention layer:
		\begin{align}
		\y^{(1,i)}(\x^{(d_1)},\ldots,\x^{(d_N)})_k &= \sum_{j=1}^N \sum_{h=1}^H \left\langle W^{Q,1,h} \x^{(d_i)}, W^{K,1,h} \x^{(d_j)} \right\rangle (W^{O,1,h} W^{V,1,h} \x^{(d_j)})_k \\
		&\overset{1}{=} \sum_{j=1}^N \sum_{h=1}^H  \overbrace{x^{(d_i)}_{d_a}}^{=1} \cdot \overbrace{x^{(d_j)}_{d_a}}^{=1} (W^{O,1,h} W^{V,1,h} \x^{(d_j)})_k \\
		&\overset{2}{=} \left(\left(\sum_{h=1}^H W^{O,1,h} W^{V,1,h} \right) \left(\x^{(i_1)} + \x^{(j_1)} + (N-2)\x^{(M+1)}\right)\right)_k \\
		&\overset{3}{=} \begin{cases}
		1 & (k{-}1) \bmod d_a = d_a {-} 1 \\
		V_{i_1, \phi(k)} + \mathbf{i} V_{j_1,\phi(k)} & (k{-}1) \bmod d_a < \frac{d_a {-} 1}{2} \\
		1 {-} V_{i_1, \phi(k {-} \frac{d_a {-} 1}{2})} {-} \mathbf{i} V_{j_1,\phi(k {-} \frac{d_a {-} 1}{2})} & \textrm{Otherwise}
		\end{cases}
		\end{align}
		where $(1)$ is because $W^{Q,1,h} = W^{K,1,h}$ are matrices that are zero everywhere except for entry $(1,d_a)$, $(2)$ because when summing over the locations, only $i_1$ and $j_1$ are different from $M+1$, and $(3)$ because applying the value and output matrices on any template vector $\uu$ results in:
		\begin{align}
		\left(W^{O,1,h} W^{V,1,h} \uu \right)_k &= \sum_{\alpha=1}^{d_a} W^{O,1,h}_{k,\alpha}  \sum_{\beta=1}^{d_x} W^{V,1,h}_{\alpha,\beta} u_\beta \\
		&= \sum_{\alpha=1}^{d_a} W^{O,1,h}_{k,\alpha}  \overbrace{\begin{cases}
			u_{d_a h {+} \alpha {-} 1} {+} \mathbf{i} \cdot  u_{d_a h {+} \alpha {-} 1 {+} \frac{d_a {-} 1}{2}} & \alpha {\leq} \frac{d_a {-} 1}{2} \\
			\frac{1}{N} {-} u_{d_a h {+} \alpha {-} 1} {-} \mathbf{i} \cdot  u_{d_a h {+} \alpha {-} 1 {+} \frac{d_a {-} 1}{2}} & \frac{d_a {-} 1}{2} {<} \alpha {\leq} d_a {-} 1 \\
			\frac{1}{N} & \textrm{Otherwise}
			\end{cases}}^{\equiv \hat{u}_\alpha}\\
		&= \begin{cases}
		\hat{u}_{((k-1) \bmod d_a) + 1} & d_a (h{-}1) \leq k < d_a h \\
		0
		\end{cases}
		\end{align}
		
		At this point, notice that for any $i\in [N]$, $\y^{(1,i)}$ is the same, and we denote it with $\vv$. Note that it is a vector composed of $H$ $d_a$-dimensional sub-vectors, each composed of a $\frac{d_a - 1}{2}$-dimensional sub-vector and its complement in the next $\frac{d_a - 1}{2}$ indices, followed by a fixed value of $1$.
		
		Next, we will compute the result of the second layer, where we use the fact that every position is equal to $\vv$ to drop the reference to a specific location $i$, i.e., $\y^{(l,i)} = \y^{(l)}$:
		\begin{align}
		\y^{(2)}_k &= N \sum_{h=1}^H \left\langle W^{Q,2,h} \vv, W^{K,2,h} \vv \right\rangle (W^{O,2,h} W^{V,2,h} \vv)_k \\
		&= N \sum_{h=1}^H \left\langle \tilde{\vv}^{(h)}, \tilde{\vv}^{(h)} \right\rangle \vv^{(h)},
		\end{align}
		where we used the notation $v^{(h)}_k = v_k \cdot 1_{d_a (h{-}1) \leq k < d_a h}$, i.e., a vector that is equal to $v_k$ on the $h$'th $d_a$-dimensional segment and otherwise filled with zeros, as well as the notation $\tilde{v}^{(h)}_k = v_k \cdot 1_{d_a (h{-}1) \leq k \leq d_a (h-1) + \frac{d_a - 1}{2}}$. The last equality is because all matrices in this layer essentially just project the $d_a$-dimensional sub-vector of $\vv$ for its respective head $h$.
		
		For the third layer we get:
		\begin{align}
		\y^{(3)} &= N \sum_{h=1}^H \left\langle W^{Q,2,h} \y^{(2)}, W^{K,2,h} \y^{(2)} \right\rangle (W^{O,2,h} W^{V,2,h} \y^{(2)}) \\
		&\overset{1}{=} N\sum_{h=1}^H  \left(\sum_{r \bmod d_a \neq 0} y^{(2)}_r \right)^2  \y^{(2),h} \\
		&\overset{2}{=} N\sum_{h=1}^H  \left(N \sum_{h'=1}^H \left\langle \tilde{\vv}^{(h')}, \tilde{\vv}^{(h')}\right\rangle \right)^2  N \left\langle \tilde{\vv}^{(h)}, \tilde{\vv}^{(h)} \right\rangle v^{(h)} \\
		&\overset{3}{=} N^4 \norm{\tilde{\vv}}^4 \sum_{h=1}^H \left\langle \tilde{\vv}^{(h)}, \tilde{\vv}^{(h)} \right\rangle v^{(h)},
		\end{align}
		where we define $\tilde{\vv} = \sum_{h=1}^H \tilde{\vv}^{(h)}$. Equality $(1)$ is because in both $W^{K,3,h}$ and $W^{Q,3,h}$ on the first row is nonzero, and it has ones everywhere except in coordinates that are multiples of $d_a$, resulting in summing over all of these non-zero elements of the vector $\y^{(2)}$. Equality $(2)$ is because in the vector $\vv^{(h)}$ every entry has a corresponding entry equal to its complement, which upon summation is equal to one, leaving only the $\left\langle \tilde{\vv}^{(h')}, \tilde{\vv}^{(h')}\right\rangle$ coefficients of the vector $\y^{(2)}$. Equality $(3)$ is because \begin{align}\norm{\tilde{\vv}}^2 &= \left\langle\tilde{\vv}, \tilde{\vv}\right\rangle = \sum_{h_1,h_2} \left\langle\tilde{\vv}^{(h_1)}, \tilde{\vv}^{(h_2)}\right\rangle = \sum_{h=1}^H \left\langle \tilde{\vv}^{(h)}, \tilde{\vv}^{(h)} \right\rangle,\end{align} where the last equality stems from the fact that every $\tilde{\vv}^{(h)}$ is non-zero on a different segment of its $d_x$ coordinates.
		
		For any subsequent layer $l < L$ we use the same set of parameters, and since the input of each preceding layer has the same form of $\y^{(l)} = N^{\alpha_l} \cdot \norm{\tilde{\vv}}^{2\beta_l} \sum_{h=1}^H \left\langle \tilde{\vv}^{(h)}, \tilde{\vv}^{(h)} \right\rangle \vv^{(h)}$, then we can just compute its recurrence relation:
		\begin{align}
		\y^{(l+1)} &= N\sum_{h=1}^H  \left(N^{\alpha_l} \norm{\tilde{\vv}}^{2\beta_l} \sum_{h'=1}^H \left\langle \tilde{\vv}^{(h')}, \tilde{\vv}^{(h')}\right\rangle \right)^2 N^{\alpha_l} \norm{\tilde{\vv}}^{2\beta_l}  \left\langle \tilde{\vv}^{(h)}, \tilde{\vv}^{(h)} \right\rangle v^{(h)} \\
		&= N^{1+3\alpha_l} \norm{\tilde{\vv}}^{6\beta_l} \sum_{h=1}^H  \left(\sum_{h'=1}^H \left\langle \tilde{\vv}^{(h')}, \tilde{\vv}^{(h')}\right\rangle \right)^2 \left\langle \tilde{\vv}^{(h)}, \tilde{\vv}^{(h)} \right\rangle v^{(h)} \\
		&= N^{3\alpha_l+1} \norm{\tilde{\vv}}^{2\cdot(3\beta_l+2)} \sum_{h=1}^H \left\langle \tilde{\vv}^{(h)}, \tilde{\vv}^{(h)} \right\rangle v^{(h)}\\
		\Rightarrow \alpha_{l+1} &= 3\alpha_l +1,\beta_{l+1} = 3\beta_l + 2
		\end{align}
		Using the initial conditions of $\alpha_3 = 4$ and $\beta_3 = 2$, we get that $\alpha_l=\frac{3^{l-1}-1}{2},\beta_l = 3^{l-2}-1$. For the $L$'th layer, the only difference is that $W^{V,L,h}$ is defined such that it returns a 1-hot vector that picks the $d_a$'th element of the previous step. Putting it all together we get:
		\begin{align}
		y^{(L)}_k &=  N^{\frac{3^{L-1}-1}{2}} \cdot \norm{\tilde{\vv}}^{2\cdot (3^{l-2}-1)} \sum_{h=1}^H \left\langle \tilde{\vv}^{(h)}, \tilde{\vv}^{(h)} \right\rangle \mathbf{i} \cdot v^{(h)}_{d_a} \\
		y^{(L)}_k &=  N^{\frac{3^{L-1}-1}{2}} \cdot \mathbf{i} \cdot\norm{\tilde{\vv}}^{2\cdot3^{l-2}} 
		\end{align}
		Finally, we can evaluate $\norm{\tilde{\vv}}^2$:
		\begin{align}
		\norm{\tilde{\vv}}^2 &= \sum_{k=1}^{d_x} \tilde{v}^2_k = \sum_{h=1}^H \sum_{k=1}^{\nicefrac{d_a - 1}{2}} (V_{i_1,(d_a - 1) \cdot (h - 1) + k} + \mathbf{i} \cdot V_{j_1,(d_a - 1) \cdot (h - 1) + k})^2 \\
		&=  \overbrace{\sum_{h=1}^H \sum_{k=1}^{\nicefrac{d_a - 1}{2}} V_{i_1,(d_a - 1) \cdot (h - 1) + k}^2}^{\textrm{normalized} \Rightarrow = 1} -  \overbrace{\sum_{h=1}^H \sum_{k=1}^{\nicefrac{d_a - 1}{2}} V_{j_1,(d_a - 1) \cdot (h - 1) + k}^2}^{\textrm{normalized} \Rightarrow = 1} \\
		&\phantom{==}  2\mathbf{i}\cdot \sum_{h=1}^H \sum_{k=1}^{\nicefrac{d_a - 1}{2}} V_{i_1,(d_a - 1) \cdot (h - 1) + k} V_{j_1,(d_a - 1) \cdot (h - 1) + k} \\
		&=2\mathbf{i} (VV^T)_{i_1,j_1},
		\end{align}
		which concludes the proof.
	\end{proof}
	
	Next, we show two lemmas that aid in the proof of the lower bound. We first quote an identity by which for a matrix with entries that are polynomials in $x$, if a single
	contributor to the determinant has the highest degree of $x$, then the matrix is
	fully ranked for all values of $x$ but a finite set. 
	
	\begin{lemma}\label{lemma:poly_full_rank}
		(from \cite{levine2018benefits}).
		Let $A\in\R^{N\times N}$ be a matrix whose entries are polynomials in
		$x\in\R$. In this case, its determinant may be written as
		$\det(A)=\sum_{\sigma\in S_N}sgn(\sigma)p_\sigma(x)$, where $S_N$ is the
		symmetric group on $N$ elements and $p_\sigma(x)$ are polynomials defined by
		$p_\sigma(x)\equiv\prod_{i=1}^{N} A_{i\sigma(i)}(x),~\forall{\sigma\in S_n}$.
		Additionally, let there exist $\bar{\sigma}$ such that
		$\deg(p_{\bar{\sigma}}(x)) > \deg(p_{\sigma}(x)) ~ \forall \sigma
		\neq \bar{\sigma}$. Then, for all values of $x$ but a finite set, $A$ is
		fully ranked.
	\end{lemma}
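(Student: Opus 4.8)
The plan is to exploit the fact that the determinant, written in the permutation expansion $\det(A)=\sum_{\sigma\in S_N}sgn(\sigma)p_\sigma(x)$ supplied in the statement, is itself a univariate polynomial in $x$, and to show that this polynomial is not identically zero. Once non-vanishing of the polynomial is established, the conclusion is immediate: a nonzero polynomial in one variable has only finitely many roots, and at every $x$ outside this finite root set we have $\det(A(x))\neq 0$, which is exactly the statement that $A$ is fully ranked there.

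First I would record that each $p_\sigma$ is a polynomial (a product of the polynomial entries $A_{i\sigma(i)}$), so $\det(A)$ is a finite sum of polynomials and hence a polynomial of degree at most $\max_\sigma \deg(p_\sigma)$. The crux is to pin down its leading term. Let $d:=\deg(p_{\bar\sigma})$ and let $c\neq 0$ be the leading coefficient of $p_{\bar\sigma}$. By hypothesis $\deg(p_\sigma)<d$ for every $\sigma\neq\bar\sigma$, so none of the other summands contributes any term of degree $d$. Therefore the coefficient of $x^d$ in $\det(A)$ equals $sgn(\bar\sigma)\,c$, which is nonzero. This forces $\deg(\det(A))=d$, and in particular $\det(A)$ is not the zero polynomial.

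Finally I would invoke the elementary fact that a nonzero polynomial over $\R$ has finitely many real roots; denote this root set by $S$. For all $x\in\R\setminus S$ we have $\det(A(x))\neq 0$, so $A(x)$ is invertible and thus of full rank $N$. This establishes full rank everywhere but on the finite set $S$, as claimed.

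The only point requiring care — and the nearest thing to an obstacle — is guaranteeing that the top-degree contribution from $\bar\sigma$ survives, \ie~is not cancelled by the other summands. This is precisely where the \emph{strictness} of the degree inequality is used: if $p_{\bar\sigma}$ merely shared its maximal degree with another permutation, the degree-$d$ coefficients could cancel and the argument would collapse. Under the strict inequality no cancellation is possible and the leading coefficient is pinned down unambiguously, so the remainder of the argument is entirely routine.
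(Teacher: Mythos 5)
Your proposal is correct and follows essentially the same route as the paper's proof: both argue that the strict degree dominance of $p_{\bar{\sigma}}$ ensures the coefficient of the top-degree monomial in $\det(A)$ equals $sgn(\bar{\sigma})\cdot c\neq 0$, so $\det(A)$ is a nonzero univariate polynomial with finitely many roots, and $A(x)$ is fully ranked off that finite set. Your explicit remark on why strictness of the inequality precludes cancellation is a nice articulation of the same point the paper makes implicitly.
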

	\begin{proof}
		We show that in this case $\det(A)$, which is a polynomial in $x$ by its
		definition, is not the zero polynomial. Accordingly, $\det(A)\neq 0$ for all
		values of $x$ but a finite set. Denoting $t\equiv\deg(p_{\bar{\sigma}}(x))$,
		since $t>\deg(p_{\sigma}(x))~\forall\sigma\neq\bar{\sigma}$, a monomial of
		the form $c\cdot x^t,c\in\R \setminus \{0\}$ exists in $p_{\bar{\sigma}}(x)$ and
		doesn't exist in any $p_\sigma(x),~\sigma\neq\bar{\sigma}$. This implies
		that $\det(A)$ is not the zero polynomial, since its leading term has a
		non-vanishing coefficient $sgn(\bar{\sigma})\cdot c\neq 0$, and the lemma
		follows from the basic identity: $\det(A)\neq 0 \iff$ $A$ is fully ranked.
	\end{proof}
	The following quoted lemma, establishes a relation referred to as the \emph{vector rearrangement inequality},  which helped us ensure that our matrix of interest upholds the conditions of lemma~\ref{lemma:poly_full_rank} and is thus fully ranked. 
	\begin{lemma}\label{lemma:rearrange} (from \cite{levine2018benefits}).
		Let $\{\vv^{(i)}\}_{i=1}^{N}$ be a set of $N$ different vectors in $\R^{\bar{R}}$
		such that $\forall i\in[N],~j\in[\bar{R}]:~v^{(i)}_j\geq 0$. Then, for all
		$\sigma\in S_N$ such that $\sigma\neq\mathbb{I}_N$, where $S_N$ is the
		symmetric group on $N$, it holds that:
		\begin{equation*}
		\sum_{i=1}^N \inprod{\vv^{(i)}}{\vv^{(\sigma(i))}} < \sum_{i=1}^{N} \norm{\vv^{(i)}}^2.
		\end{equation*}
	\end{lemma}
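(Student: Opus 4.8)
The plan is to bypass the classical sorting argument for the scalar rearrangement inequality and instead exploit the vectorial structure directly, which renders both the inequality and its strictness transparent. The starting point is the elementary polarization identity $\inprod{\aaa}{\bb}=\frac{1}{2}\left(\norm{\aaa}^2+\norm{\bb}^2-\norm{\aaa-\bb}^2\right)$, valid for any two vectors $\aaa,\bb$. Applying it to each pair $\left(\vv^{(i)},\vv^{(\sigma(i))}\right)$ and summing over $i\in[N]$ rewrites the quantity $\sum_{i=1}^N\inprod{\vv^{(i)}}{\vv^{(\sigma(i))}}$ purely in terms of norms, reducing the entire claim to controlling a sum of squared distances.

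First I would use the fact that $\sigma$ is a bijection of $[N]$ to note that $\sum_{i=1}^N\norm{\vv^{(\sigma(i))}}^2=\sum_{i=1}^N\norm{\vv^{(i)}}^2$, since the index $\sigma(i)$ sweeps over all of $[N]$ exactly once. Substituting this into the summed identity collapses the two norm-sums into a single copy of $\sum_i\norm{\vv^{(i)}}^2$, yielding the exact relation
\begin{equation*}
\sum_{i=1}^N\norm{\vv^{(i)}}^2-\sum_{i=1}^N\inprod{\vv^{(i)}}{\vv^{(\sigma(i))}}=\frac{1}{2}\sum_{i=1}^N\norm{\vv^{(i)}-\vv^{(\sigma(i))}}^2.
\end{equation*}
The right-hand side is manifestly nonnegative, which already delivers the non-strict inequality for every permutation $\sigma$.

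The delicate part, and the only place where the remaining hypothesis enters, is upgrading this to a strict inequality whenever $\sigma\neq\mathbb{I}_N$. I would argue that a non-identity permutation must move some index, i.e. there exists $i_0$ with $\sigma(i_0)\neq i_0$; since the vectors $\{\vv^{(i)}\}$ are assumed distinct, the corresponding summand $\norm{\vv^{(i_0)}-\vv^{(\sigma(i_0))}}^2$ is strictly positive, forcing the full right-hand side to be positive and hence the inequality to be strict. I expect the main subtlety to lie precisely here: the coordinatewise sorting route to the rearrangement inequality yields equality for many nontrivial permutations (for instance those that merely permute tied entries within a single coordinate), so strictness cannot be extracted one coordinate at a time. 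It is the \emph{global} distinctness of the vectors, surfaced through the single nonzero squared-distance term above, that rules out equality. I would also remark that, as organized here, the argument does not invoke the nonnegativity of the entries $v^{(i)}_j$ stated in the lemma; that hypothesis is inessential for this route.
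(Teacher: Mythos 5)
Your proof is correct, and it takes a genuinely different route from the paper's. The paper follows \cite{levine2018benefits} in applying the \emph{scalar} rearrangement inequality (Theorem 368 of Hardy--Littlewood--P\'olya) separately to each coordinate $j\in[\bar{R}]$, which yields the non-strict inequality coordinatewise; strictness then requires a global contradiction argument invoking the equality conditions of that theorem, namely that coordinatewise equality for all $j$ would force $\vv^{(\sigma(i))}=\vv^{(i)}$ for every $i$, contradicting distinctness. You instead bypass the external theorem entirely via the polarization identity $\inprod{\aaa}{\bb}=\frac{1}{2}\left(\norm{\aaa}^2+\norm{\bb}^2-\norm{\aaa-\bb}^2\right)$ together with the bijectivity of $\sigma$, arriving at the exact deficit formula
\begin{equation*}
\sum_{i=1}^N\norm{\vv^{(i)}}^2-\sum_{i=1}^N\inprod{\vv^{(i)}}{\vv^{(\sigma(i))}}=\frac{1}{2}\sum_{i=1}^N\norm{\vv^{(i)}-\vv^{(\sigma(i))}}^2,
\end{equation*}
from which non-negativity is manifest and strictness is immediate: any $\sigma\neq\mathbb{I}_N$ moves some index $i_0$, and distinctness of the vectors makes the term $\norm{\vv^{(i_0)}-\vv^{(\sigma(i_0))}}^2$ strictly positive. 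Your approach buys three things: it is self-contained (no appeal to a quoted classical theorem and its equality conditions), it upgrades the inequality to an identity that pinpoints the gap exactly, and it reveals that the nonnegativity hypothesis $v^{(i)}_j\geq 0$ is superfluous---a genuine (mild) strengthening of the lemma as stated, since the paper's coordinatewise route does lean on nonnegativity through its citation of the scalar rearrangement inequality. Your diagnosis of where the subtlety lies is also accurate: per-coordinate equality can hold for nontrivial permutations, which is exactly why the paper must argue strictness globally by contradiction rather than coordinate by coordinate, whereas in your formulation strictness falls out of a single positive summand.
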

	\begin{proof}
		We rely on theorem 368 in~\citep{hardy1952inequalities}, which implies that
		for a set of non-negative numbers $\{a^{(1)},\ldots,a^{(N)}\}$ the following
		holds for all $\sigma\in S_N$:
		\begin{equation}\label{eq:rearrange}
		\sum_{i=1}^{N}a^{(i)}a^{({\sigma(i)})}\leq\sum_{i=1}^{N}(a^{(i)})^2,
		\end{equation}
		with equality obtained only for $\sigma$ which upholds
		$\sigma(i)=j\iff a^{(i)}=a^{(j)}$. The above relation, referred to as the
		\emph{rearrangement inequality}, holds separately for each component
		$j\in[\bar{R}]$ of the given vectors:
		\begin{equation*}
		\sum_{i=1}^{N}v_j^{(i)}v_j^{(\sigma(i))}\leq\sum_{i=1}^{N}(v_j^{(i)})^2.
		\end{equation*}
		We now prove that for all $\sigma\in S_N$ such that
		$\sigma\neq\mathbb{I}_N$, $\exists \hat{j}\in[\bar{R}]$ for which the above inequality
		is hard, \ie:
		\begin{equation}\label{hard_ineq}
		\sum_{i=1}^{N}v_{\hat{j}}^{(i)}v_{\hat{j}}^{(\sigma(i))}<\sum_{i=1}^{N}(v_{\hat{j}}^{(i)})^2.
		\end{equation}
		By contradiction, assume that $\exists\hat{\sigma}\neq\mathbb{I}_N$ for
		which $\forall j \in [\bar{R}]$:
		\begin{equation*}
		\sum_{i=1}^{N}v_j^{(i)}v_j^{(\hat{\sigma}(i))}=\sum_{i=1}^{N}(v_j^{(i)})^2.
		\end{equation*}
		From the conditions of achieving equality in the rearrangement inequality
		defined in Equation~\eqref{eq:rearrange}, it holds that
		$\forall j \in [\bar{R}]:~v_j^{(\hat{\sigma}(i))}= v_j^{(i)}$, trivially
		entailing: $\vv^{(\hat{\sigma}(i))}=\vv^{(i)}$. Thus,
		$\hat{\sigma}\neq\mathbb{I}_N$ would yield a contradiction to
		$\{\vv^{(i)}\}_{i=1}^{N}$ being a set of $N$ different vectors in $\R^{\bar{R}}$.
		Finally, the hard inequality of the lemma for $\sigma\neq\mathbb{I}_N$ is
		implied from Equation~\eqref{hard_ineq}:
		\begin{equation*}
		\sum_{i=1}^N \inprod{\vv^{(i)}}{\vv^{(\sigma(i))}}
		\equiv \sum_{i=1}^N \left(\sum_{j=1}^{\bar{R}} v_j^{(i)} v_j^{(\sigma(i))}\right)
		= \sum_{j=1}^{\bar{R}} \left(\sum_{i=1}^N v_j^{(i)} v_j^{(\sigma(i))}\right)
		< \sum_{j=1}^{\bar{R}} \left(\sum_{i=1}^N (v_j^{(i)})^2 \right)
		= \sum_{i=1}^N \norm{\vv^{(i)}}^2.
		\end{equation*}
	\end{proof}
	
	\section{Proof of Proposition 1 on the separation rank symmetry}\label{sec:sep}
	
	\begin{claim}
		For any depth $L\ge1$ input size $N>1$ and output locations $i\in\left[N\right],\:p\in\left[d_{x}\right]$ The
		separation rank \wrt~balanced partitions, which obey $A\cupdot B=[N], \abs{A},\abs{B}=\nicefrac{N}{2}$, is invariant to the identity of the partition, \ie, $\forall A\cupdot B=[N], \tilde{A}\cupdot \tilde{B}=[N],~~s.t.~ \abs{A},\abs{B},|{\tilde{A}}|,|{\tilde{B}}|=\nicefrac{N}{2}$:
		\begin{equation}
		sep(y^{i, L, d_x,H,\Theta}_p;A,B)=sep(y^{i, L, d_x, H, \Theta}_p;\tilde{A},\tilde{B})	
		\end{equation}
	\end{claim}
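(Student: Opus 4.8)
The plan is to reduce the claim to a symmetry argument, built on two observations: separation rank depends on a balanced partition only through its two blocks, and the function realized by the network is invariant, up to relabeling its output location, under permutations of the input positions. First I would record a general transformation rule for separation rank under a permutation $\pi\in S_N$ of the inputs. Writing $f_\pi(\x^1,\dots,\x^N):=f(\x^{\pi(1)},\dots,\x^{\pi(N)})$, the substitution $\x^k=\z^{\pi(k)}$ carries any separable decomposition of $f$ with respect to a partition $(P,Q)$ into a decomposition of $f_\pi$ with respect to $(\pi(P),\pi(Q))$ of the same length (and vice versa), since $\{\x^{\pi(k)}:k\in P\}=\{\x^m:m\in\pi(P)\}$. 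Hence $sep(f_\pi;A,B)=sep(f;\pi^{-1}(A),\pi^{-1}(B))$. I would also note the trivial symmetry $sep(f;A,B)=sep(f;B,A)$, which holds because the products in the defining decomposition commute.

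The second, structural, observation is the key symmetry of the analyzed network: for every $\pi\in S_N$,
\begin{align*}
\big(y^{i,L,d_x,H,\Theta}_p\big)_\pi=y^{\pi(i),L,d_x,H,\Theta}_p .
\end{align*}
This follows directly from the form in eq.~\eqref{eq:our_network}: evaluating $y^{i,L,d_x,H,\Theta}_p$ at the permuted inputs replaces $\x^i$ by $\x^{\pi(i)}$ and each summed $\x^{j_c}$ by $\x^{\pi(j_c)}$, and because the indices $j_1,\dots,j_C$ each range independently over all of $[N]$, the reindexing $j_c\mapsto\pi(j_c)$ merely permutes the summands and leaves the sum unchanged. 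In particular, when $\pi$ fixes the output location, $\pi(i)=i$, the function is invariant: $\big(y^{i,L,d_x,H,\Theta}_p\big)_\pi=y^{i,L,d_x,H,\Theta}_p$.

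With these in hand I would conclude as follows. Given two balanced partitions $(A,B)$ and $(\tilde A,\tilde B)$, the location $i$ lies in exactly one block of each; using the swap symmetry $sep(\cdot;A,B)=sep(\cdot;B,A)$ I may assume without loss of generality that $i\in A\cap\tilde A$. Since $\abs{A}=|\tilde A|$ and $\abs{B}=|\tilde B|$ and the respective blocks share the element $i$, I can build $\pi\in S_N$ with $\pi(i)=i$ mapping $\tilde A$ bijectively onto $A$ and $\tilde B$ onto $B$, i.e.\ $\pi^{-1}(A)=\tilde A$ and $\pi^{-1}(B)=\tilde B$. Then the transformation rule together with the invariance $\big(y^{i}_p\big)_\pi=y^{\pi(i)}_p=y^{i}_p$ gives
\begin{align*}
sep(y^{i,L,d_x,H,\Theta}_p;A,B)
&= sep\big(\big(y^{i,L,d_x,H,\Theta}_p\big)_\pi;A,B\big)\\
&= sep\big(y^{i,L,d_x,H,\Theta}_p;\pi^{-1}(A),\pi^{-1}(B)\big)\\
&= sep(y^{i,L,d_x,H,\Theta}_p;\tilde A,\tilde B),
\end{align*}
which is the claim.

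The main obstacle is the distinguished role of the output location $i$: it enters eq.~\eqref{eq:our_network} as a fixed ``query'' slot rather than as one of the freely summed positions, so a naive position relabeling that fixes $i$ cannot always align the two partitions when $i$ sits in opposite blocks of $(A,B)$ and $(\tilde A,\tilde B)$. This is precisely what the preliminary $A\leftrightarrow B$ swap symmetry of separation rank resolves; once $i$ is placed in the same block of both partitions, a position permutation fixing $i$ exists by a simple counting argument, and everything else is bookkeeping.
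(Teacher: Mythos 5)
Your proposal is correct and follows essentially the same route as the paper's proof: both exploit the invariance $y^{i,L,d_x,H,\Theta}_p(\x^{\pi(1)},\dots,\x^{\pi(N)})=y^{i,L,d_x,H,\Theta}_p(\x^{1},\dots,\x^{N})$ for permutations fixing $i$ (a consequence of the free summation over $j_1,\dots,j_C$ in eq.~(4)), transport a minimal separable decomposition along a permutation matching the two partitions, and close with the reverse direction. Your only departure is cosmetic but welcome: you make explicit, via the $A\leftrightarrow B$ swap symmetry of separation rank, the step the paper compresses into ``w.l.o.g.\ $a_1=\tilde a_1=i$,'' which is exactly what is needed when $i$ falls in opposite blocks of the two partitions.
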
 
	
	\begin{proof}
		We will denote $A=\left(a_{1},\dots,a_{\frac{N}{2}}\right)$,$B=\left(b_{1},\dots,b_{\frac{N}{2}}\right)$,$\tilde{A}=\left(\tilde{a}_{1},\dots,\tilde{a}_{\frac{N}{2}}\right)$,$\tilde{B}=\left(\tilde{b}_{1},\dots,\tilde{b}_{\frac{N}{2}}\right)$
		and by $\pi\in S_{N}$ the unique permutation that satisfy
		\[
		\forall m\in\left[\frac{N}{2}\right]\quad\pi\left(a_{m}\right)=\tilde{a}_{m}\wedge\pi\left(b_{m}\right)=\tilde{b}_{m}
		\]
		
		w.l.o.g we will assume that $a_{1}=\tilde{a_{1}}=i$.
		
		Assuming that $sep(y;A,B)=R$, then there exist $g_{1},\dots,g_{R},g'_{1},\dots,g'_{R}$
		s.t.
		\[
		\forall \x^{\left(1\right)},\dots,\x^{\left(N\right)}\in\mathbb{\mathbb{R}}^{d_{x}}\quad y_{p}^{i,L,d_{x},H,\Theta}\left(\x^{\left(1\right)},\dots,\x^{\left(N\right)}\right)=\sum_{v=1}^{R}g_{v}\left(\x^{\left(a_{1}\right)},\dots,\x^{\left(a_{\frac{N}{2}}\right)}\right)g'_{v}\left(\x^{\left(b_{1}\right)},\dots,\x^{\left(b_{\frac{N}{2}}\right)}\right)
		\]
		$i=\pi\left(a_{1}\right)=a_{1}$ therefore the summations over $j_{1},\dots,j_{N}$
		in eq.~\eqref{eq:gl_explicit_form} implies that for any $x^{\left(1\right)},\dots,x^{\left(N\right)}\in\mathbb{\mathbb{R}}^{d_{x}}$
		we have 
		\[
		y_{p}^{i,L,d_{x},H,\Theta}\left(\x^{\left(1\right)},\dots,\x^{\left(N\right)}\right)=y_{p}^{i,L,d_{x},H,\Theta}\left(\x^{\left(\pi\left(1\right)\right)},\dots,\x^{\left(\pi\left(N\right)\right)}\right)
		\]
		And therefore 
		\begin{align*}
		& =\sum_{v=1}^{R}g_{v}\left(\x^{\left(\pi\left(a_{1}\right)\right)},\dots,\x^{\left(\pi\left(a_{\frac{N}{2}}\right)\right)}\right)g'_{v}\left(\x^{\left(\pi\left(b_{1}\right)\right)},\dots,\x^{\left(\pi\left(b_{\frac{N}{2}}\right)\right)}\right)\\
		& =\sum_{v=1}^{R}g_{v}\left(\x^{\left(\tilde{a_{1}}\right)},\dots,\x^{\left(\tilde{a}_{\frac{N}{2}}\right)}\right)g'_{v}\left(\x^{\left(\tilde{b}_{1}\right)},\dots,\x^{\left(\tilde{b}_{\frac{N}{2}}\right)}\right)
		\end{align*}
		So we proved that
		\[
		sep(y_{p}^{i,L,d_{x},H,\Theta};\tilde{A},\tilde{B})\leq sep(y_{p}^{i,L,d_{x},H,\Theta};A,B)
		\]
		Finally by switching the roles of $\tilde{A},\tilde{B}$ and $A,B$
		we can get the inverse inequality so we conclude that 
		\[
		sep(y_{p}^{i,L,d_{x},H,\Theta};\tilde{A},\tilde{B})=sep(y_{p}^{i,L,d_{x},H,\Theta};A,B)
		\]
	\end{proof}
	\section{Experimental details}
	We conducted the network training described in section~\ref{sec:exp} of the main text with Adam optimizer for $1M$ steps and a batch size of $512$ sequences of $128$ tokens. All experiments used a learning rate schedule with a $12000$ step linear warm-up followed by a cosine decay to zero. 
	In order to increase width without changing other architectural parameters, we kept the number of heads per layer constant at $2$ (experimental evidence indicates that many heads per layer are not crucial~\citep{michel2019sixteen,kaplan2020scaling}, as does our theoretical analysis which shows that the number of heads per layer affects the separation rank logarithmically).
	
	Table~\ref{tab:widths} shows the per-depth widths of the trained architecture. More experiments were conducted per adjacent depth pairs in order to identify the transition point accurately, and reduce the error bars in figure~\ref{fig:exponential}. Table~\ref{tab:1} details the different standard deviation of repeating the training and evaluation experiment $3$ times per the given architectures.

\begin{table}[h]
	\begin{center}
		\vspace*{-3.5cm}\begin{tabular}{@{}ccccccc@{}}
			\toprule
			\multicolumn{1}{l}{L=6} & \multicolumn{1}{l}{L=12} &
			\multicolumn{1}{l}{L=18} & \multicolumn{1}{l}{L=24} &
			\multicolumn{1}{l}{L=30} & \multicolumn{1}{l}{L=36} &
			\multicolumn{1}{l}{L=48} \\ \midrule
			128                     & 88                       & 
			-                       & 64					   &
			-						& -						   & 
			44						\\
			168                     & 120                      & 
			-                       & 88					   &
			-						& -				   	       & 
			60						\\
			184                     & 130                      & 
			-                       & -					   &
			-						& -				   	       & 
			-						\\
			192                     & 136                      & 
			-                       & -					   &
			-						& -				   	       & 
			-						\\
			200                     & 142                      & 
			-                       & -					   &
			-						& -				   	       & 
			-						\\
			208                     & 148                      & 
			116                       & -					   &
			-						& -				   	       & 
			-						\\
			216                     & 152                      & 
			124                      & 104					   &
			-						& -					       & 
			72						\\
			220                     & 156                      & 
			-                       & -						   &
			-						& -					       & 
			-						\\
			224                     & 158                      & 
			130                      & 112					   &
			-						& -					       & 
			80						\\
			236                     & 168                      & 
			-                       & 88					   &
			-						& -				   	       & 
			60						\\
			248                     & 176                      & 
			144                     & 128					   &
			-						& -						   & 
			88						\\
			272                     & 192                      & 
			-                       & 136					   &
			-						& -					       & 
			96						\\
			296                     & 208                      & 
			-                       & 144					   &
			-						& -					       & 
			104						\\
			320                     & 224                      & 
			184                     & 160					   &
			144						& 128					   & 
			112						\\
			376                     & 264                      & 
			216                     & 184					   &
			168						& 152					   & 
			128						\\
			-	                    & 272                      & 
			244                     & -						   &
			-						& -						   & 
			-						\\
			-	                    & 280                      & 
			228                     & -						   &
			-						& -						   & 
			-						\\
			408                     & 288                      & 
			232                     & 200					   &
			176						& 160					   & 
			144						\\
			-	                    & 296                      & 
			240                     & -						   &
			-						& -						   & 
			-						\\
			-	                    & 304                      & 
			248                     & -						   &
			-						& -						   & 
			-						\\
			-	                    & 308                      & 
			252                     & -						   &
			-						& -						   & 
			-						\\
			-	                    & 314                      & 
			256                     & -						   &
			-						& -						   & 
			-						\\
			456                     & 320                      & 
			264                     & 224					   &
			200						& 184					   & 
			160						\\
			-	                    & 328                      & 
			268                     & -						   &
			-						& -						   & 
			-						\\
			-	                    & 240                      & 
			278                     & -						   &
			-						& -						   & 
			-						\\
			496                     & 352                      & 
			288                     & 248					   &
			224						& 200					   & 
			176						\\
			568                     & 400                      & 
			320                     & 280					   &
			248						& 232					   & 
			200						\\
			- 		                & -                 	   & 
			360                     & 312					   &
			-						& -						   & 
			-						\\
			680                     & 480                      & 
			384                     & 336					   &
			304						& 272					   & 
			240						\\
			- 		                & -                 	   & 
			406                     & 352					   &
			-						& -						   & 
			-						\\
			- 		                & -                 	   & 
			416                     & 360					   &
			-						& -						   & 
			-						\\
			- 		                & -                 	   & 
			424                     & 368					   &
			-						& -						   & 
			-						\\
			- 		                & -                 	   & 
			434                     & 376					   &
			-						& -						   & 
			-						\\
			- 		                & -                 	   & 
			440                     & 376					   &
			-						& -						   & 
			-						\\
			- 		                & -                 	   & 
			448                     & 388					   &
			-						& -						   & 
			-						\\
			- 		                & -                 	   & 
			456                     & 396					   &
			-						& -						   & 
			-						\\
			- 		                & -                 	   & 
			464                     & 402					   &
			-						& -						   & 
			-						\\
			816                     & 576                      & 
			472                     & 408					   &
			368						& 336					   & 
			288						\\
			960                     & 680                      &
			560                     & 480					   &
			432						& 392					   & 
			336						\\
			1088                    & 768                      & 
			624                     & 544					   &
			484						& 440					   & 
			384						\\
			-	                    & -	                       & 
			-	                    & 552					   &
			494						& -						   & 
			-						\\
			-	                    & -	                       & 
			-	                    & 560					   &
			504						& -						   & 
			-						\\
			-	                    & -	                       & 
			-	                    & 568					   &
			508						& -						   & 
			-						\\
			-	                    & -	                       & 
			-	                    & 576					   &
			512						& -						   & 
			-						\\
			-	                    & -	                       & 
			-	                    & 584					   &
			522						& -						   & 
			-						\\
			-	                    & -	                       & 
			-	                    & 592					   &
			530						& -						   & 
			-						\\
			-	                    & -	                       & 
			-	                    & 600					   &
			536						& -						   & 
			-						\\
			1416                    & 1000                     & 
			816                       & 704					   &
			632						& 576					   & 
			496						\\
			-	                    & -	                       & 
			-	                    & -						   &
			712						& 648					   & 
			-						\\
			-	                    & -	                       & 
			-	                    & -						   &
			760						& 696					   & 
			-						\\
			-	                    & -	                       & 
			-	                    & -						   &
			808						& 736					   & 
			-						\\
			-	                    & -	                       & 
			-	                    & -						   &
			840						& 768					   & 
			-						\\
			-	                    & -	                       & 
			-	                    & -						   &
			896						& 816					   & 
			-						\\
			2128                    & 1504                     & 
			1232                    & 1064					   &
			952						& 872					   & 
			752						\\
			-	                    & -	                       & 
			-	                    & -						   &
			992						& 904					   & 
			-						\\
			2832                    & 2000                     & 
			-                       & 1416					   &
			1264					& 1160					   & 
			1000					\\ \bottomrule
		\end{tabular}\vspace{1em}\caption{The widths $d_x$ of the different trained networks. In order to improve the estimation of the data points and their empirical error for the fit in section 5.2.2, we performed dense measurements around potential transition points. }\label{tab:widths}\vspace*{-3.5cm}
	\end{center}
\end{table}

	\begin{table}[h]
		\begin{center}
					\begin{subtable}{\linewidth}\centering
				{\begin{tabular}{ccc}
						\toprule
						\multicolumn{1}{l}{$d_x=320$} & \multicolumn{1}{l}{$d_x=680$} &
						\multicolumn{1}{l}{$d_x=800$} \\ \midrule
						1.92E-03 & 2.06E-03 & 6.51E-04 \\
				\end{tabular}}
				\caption{L = 6}\label{tab:1a}
			\end{subtable}
		\begin{subtable}{\linewidth}\centering
	{\begin{tabular}{cccc}
			\toprule
			\multicolumn{1}{l}{$d_x=224$} & \multicolumn{1}{l}{$d_x=400$} &
			\multicolumn{1}{l}{$d_x=680$} &
			\multicolumn{1}{l}{$d_x=1000$} \\ \midrule
			2.08E-03 & 1.65E-03 & 1.33E-03 & 1.20E-03 \\
	\end{tabular}}
	\caption{L = 12}\label{tab:1b}
\end{subtable}
\begin{subtable}{\linewidth}\centering
	{\begin{tabular}{cccc}
			\toprule
			\multicolumn{1}{l}{$d_x=160$} & \multicolumn{1}{l}{$d_x=280$} &
			\multicolumn{1}{l}{$d_x=480$} &
			\multicolumn{1}{l}{$d_x=704$} \\ \midrule
			7.36E-04 & 1.02E-03 & 1.48E-03 & 7.76E-04 \\
	\end{tabular}}
	\caption{L = 24}\label{tab:1c}
\end{subtable}
			\caption{The standard deviation of the test loss for networks of varying widths and depths, when repeating the training and evaluation experiment $3$ times per point.}\label{tab:1}
		\end{center}
	\end{table}

\subsection{Fit details}\label{app:fit_tests}
The estimated experimental transition points between the two depth-efficiency regimes that were collected according to the procedure described in section~5.2.2 are given in table~\ref{tab:data}.
For the linear fit we set $x_i$ to be the depth and $y_i$ the log of the estimated width at the measured transition point (with an empirical error $\sigma_i$ calculated as in table~\ref{tab:data}).  

The 
$\chi^{2}_{red}$ measure is calculated by:
\begin{align}\label{eq:chi}
\chi^{2}&=\sum_{i=1}^{n}{\frac{(y_i-\hat{y_i})^{2}}{\sigma_i^{2}}}\\\nonumber
\chi^{2}_{red}&=\frac{\chi^{2}}{df}
\end{align}	
where $\hat{y_i}$ is the predicted value of the $i$-th sample according to the fitting function given by the fit parameters $a$ and $b$ in eq.~\ref{eq:fit_params}, and  $df=n-m=3$ is the number of observations $n=5$ minus the number of fitted parameters $m=2$.
The attained value of $\chi_{\textrm{red}}^2=0.854$ 
indicates a good fit for such a low $n$, though hinting at a slight overestimation of the empirical errors. This may arise due to the limitations in attaining very dense measurements around the transition points (though as can be seen in table~\ref{tab:widths}, we made an effort to sample the loss densely around the transitions).

The 
$R^{2}$ measure is calculated by:
\begin{align}\label{eq:r_squared}
R^{2}=1-\left(\frac{\sum_{i=1}^{n}{\left(\frac{1}{\sigma_i^{2}}(y_i-\hat{y_i})^{2}\right)}}{\sum_{i=1}^{n}{\left(\frac{1}{\sigma_i^{2}}(y_i-\bar{y})^{2}\right)}}\right)
\end{align}	
where  $\bar{y}=\frac{1}{n}\sum_{i=1}^{n}{\left(\frac{1}{\sigma_i^{2}}y_i\right)}$. The attained value of $R^{2}=0.998$ indicates a good linear fit of the data.

\begin{table}[t]
	\begin{subtable}{.4\linewidth}\centering
		\begin{tabular}{||c c c||} 
			\hline 
			$L$ & $d_x$ & $\Delta d_x$ \\ [0.5ex] 
			\hline\hline
			6 & 214 & 6 \\ 
			\hline
			12 & 308 & 12 \\
			\hline
			18 & 436 & 20 \\
			\hline
			24 & 572 & 12 \\
			\hline
			30 & 824 & 16 \\
			\hline
		\end{tabular}
		\caption{The identified transition points.}\label{tab:data_a}
	\end{subtable}%
	$\Delta\left(\log d_x\right)=\frac{1}{d_x}\Delta d_x$
	\begin{subtable}{.4\linewidth}\centering
		\begin{tabular}{||c c c||} 
			\hline
			$L$ & $\log d_x$ & $\Delta\left(\log d_x\right)$ \\ [0.5ex] 
			\hline\hline
			6 & 5.37 & 2.80e-2 \\ 
			\hline
			12 & 5.73 & 3.90e-2 \\
			\hline
			18 & 6.08 & 4.59e-2 \\
			\hline
			24 & 6.35 & 2.10e-2 \\
			\hline
			30 & 6.71 & 1.94e-2 \\
			\hline
		\end{tabular}
		\caption{Log space conversion for the linear fit.}\label{tab:data_b}
	\end{subtable}
	\caption{The collected depth-efficiency regime transition widths per depth.}
	\label{tab:data}
\end{table}

\end{document}